\newcommand{\Dis}{{\text{Dis}}}
\newcommand{\matteo}[1]{\todo[inline,color=Green!10]{\textbf{MP: }#1}}
\newcommand{\matteoout}[1]{\todo[color=Green!10]{\textbf{MP: }#1}}
\newtheorem{lemma}{Lemma}[section]
\newtheorem{theorem}{Theorem}[section]
\newtheorem{proposition}{Proposition}[section]
\newtheorem{corollary}{Corollary}[section]
\newtheorem{definition}{Definition}[section]
\newtheorem{assumption}{Assumption}[section]
\newtheorem{remark}{Remark}[section]
\def\Ebb{\mathbb{E}}
\def\*{\star}
\def\reg{\operatorname{reg}}
\def\tilreg{\widetilde{\reg}}
\def\regvar{\operatorname{regVar}}
\def\errvar{\operatorname{errVar}}
\def\wh{\widehat}
\newcommand{\E}{\Ebb}
\newcommand{\field}[1]{\mathbb{#1}}
\newcommand{\fP}{\field{P}}
\newcommand{\calS}{{\mathcal{S}}}
\newcommand{\calF}{{\mathcal{F}}}
\newcommand{\calD}{{\mathcal{D}}}
\newcommand{\calE}{{\mathcal{E}}}
\newcommand{\calH}{{\mathcal{H}}}
\newcommand{\calT}{{\mathcal{T}}}
\newcommand{\calX}{{\mathcal{X}}}
\newcommand{\calY}{{\mathcal{Y}}}
\newcommand{\calO}{{\mathcal{O}}}
\newcommand{\calW}{{\mathcal{W}}}
\newcommand{\order}{\ensuremath{\mathcal{O}}}
\newcommand{\otil}{\ensuremath{\widetilde{\mathcal{O}}}}
\newcommand{\one}{\mathbbm{1}}
\newcommand{\err}{\operatorname{err}}
\newcommand{\berr}{\overline{\err}}
\newcommand{\hinerr}{\text{WLerr}}
\newcommand{\dis}{\operatorname{DIS}}
\newcommand{\WL}{\textsc{WL}}
\newcommand{\NWL}{\textsc{NOWL}}
\newcommand{\argmin}{\operatornamewithlimits{argmin}}
\begin{document}

%

%
\runningauthor{ Chen, Sankararaman, Lazaric, Pirotta, Karamshuk, Wang, Mandyam, Wang, Fang}

\twocolumn[

\aistatstitle{Improved Adaptive Algorithm for Scalable Active Learning with Weak Labeler}

\aistatsauthor{ Yifang Chen$^*$ } 
\aistatsaddress{ Paul G. Allen School of Computer
Science \& Engineering\\
University of Washington}

\aistatsauthor{Karthik Sankararaman, Alessandro Lazaric, Matteo Pirotta, Dmytro Karamshuk, \\
\textbf{ Qifan Wang, Karishma Mandyam, Sinong Wang, Han Fang}}

\aistatsaddress{ Meta  } ]

\begin{abstract}
Active learning with strong and weak labelers considers a practical setting where we have access to both costly but accurate strong labelers and inaccurate but cheap predictions provided by weak labelers. We study this problem in the streaming setting, where decisions must be taken \emph{online}. We design a novel algorithmic template, Weak Labeler Active Cover (WL-AC), that is able to robustly leverage the lower quality weak labelers to reduce the query complexity while retaining the desired level of accuracy. 
Prior active learning algorithms with access to weak labelers learn a difference classifier which predicts where the weak labels differ from strong labelers; this requires the strong assumption of realizability of the difference classifier \citep{zhang2015active}. WL-AC bypasses this \emph{realizability} assumption and thus is applicable to many real-world scenarios such as random corrupted weak labels and high dimensional family of difference classifiers (\emph{e.g.,} deep neural nets).
Moreover, WL-AC cleverly trades off evaluating the quality with full exploitation of weak labelers, which allows to convert any active learning strategy to one that can leverage weak labelers. We provide an instantiation of this template that achieves the optimal query complexity for any given weak labeler, without knowing its accuracy a-priori. Empirically, we propose an instantiation of the WL-AC template that can be efficiently implemented for large-scale models (e.g., deep neural nets) and show its effectiveness on the corrupted-MNIST dataset by significantly reducing the number of labels while keeping the same accuracy as in passive learning.
\end{abstract}

\section{Introduction}

An active learning algorithm for classification aims to obtain an $\epsilon$-optimal hypothesis (classifier) from some given hypothesis set while requesting as few labels as possible. Under some favorable conditions, active learning algorithms can require exponentially fewer labels than passive, random sampling \citep{hanneke2014theory}. Here we consider the streaming setting where the unlabeled i.i.d.\ data comes in sequence and an active learning algorithm must make the real-time decision on whether to request the corresponding label or not. 

While traditional active learning assumes access to costly but accurate labeler,
recent works~\citep[\emph{e.g.,}][]{malago2014online,urner2012learning,mozannar2020consistent,huang2017cost} consider multiple labelers whose cost and label quality varies from each other. 
This scenario is commonly encountered in practical human-in-the-loop systems. In crowdsourcing, each sample may receive labels from more than one labeler but the reliability of each labeler is unknown --there might even exist malicious labelers~\citep{CGV-071}-- and the cost of each annotation is related to their quality. In many modern content-review applications~\citep[\emph{e.g.,}][and references therein]{Garcelon2022topk} it is  possible to automatically assign a label (possibly incorrect) by leveraging pre-trained models (e.g., trained out-of-distribution or on a different task) instead of asking to a human reviewer to provide a costly but accurate assessment of the piece \citep{lee-etal-2021-unifying}. To recover an $\epsilon$-optimal hypothesis with the minimum query cost, it is thus important to carefully trade-off the usage of a cheap but potentially imprecise labeler (\emph{weak} labeler $\calW$) and a costly but accurate labeler (\emph{strong} labeler $\calO$). We will discuss more related settings in Appendix~\ref{sec: related works (supp)}.


Authors in~\citep{zhang2015active} first gave a provable algorithm on this problem .
Their approach, however, heavily relies on the existence of a difference-classifier set  $h_\text{diff} \in \calH_\text{diff}$ to predict where the weak labeler differs from the strong labeler.  Therefore, it saves query cost by only querying the strong labeler when it disagrees with the weak one.  They assume (a) $\calH_\text{diff}$ is known to the learner in advance and (b) a nontrivial realizable low false negative rate difference-classifier in the set $\calH_\text{diff}$ exists (\emph{e.g.,} a classifier that always outputs positive labels is trivial). This difference-classifier identification procedure yields an extra $\text{VCdim}(\calH_\text{diff})$-dependent term in the query complexity (\emph{i.e.,} calls to the strong labeler) that becomes dominating when $\text{VC-dim}(\calH_\text{diff}) \gtrapprox \frac{1}{\varepsilon}$, where $\varepsilon$ is the target accuracy of output model. 

In adapting this method to real-world systems, we encounter two main limitations. First, it is not always guaranteed that a low complexity realizable $\calH_\text{diff}$ exists.  Second, their deterministic algorithm is vulnerable to random corrupted labelers~\citep[\emph{e.g.,}][]{patrini2017making,miller2014adversarial,awasthi2017power}. 
In this work, we propose an algorithm called WeakLabeler-ActiveCover (WL-AC) which removes both these assumptions
by incorporating \textit{doubly robust loss estimator} inspired by the regret minimization algorithm with hints in~\citep{https://doi.org/10.48550/arxiv.2003.01922}, and develop a corresponding semi-randomized algorithm. As a trade-off, the theoretical analysis for our algorithm requires that the conditional accuracy of weak labels has relative low variance compared to bias within each region, which is milder than the knowledge of exact $\calH_\text{diff}$ and can be usually relaxed in practice. To the best of our knowledge, this is the first time this technique has been used in active learning.

Besides the realizable difference-classifier assumption, the framework proposed by \cite{zhang2015active} also lacks flexibility to even more parsimonious active query strategy. To be specific, the traditional disagreement based active learning, including the one \citep{zhang2015active} is based on, will query \textit{every} point inside the disagreement region. This deterministic approach, although being worst-case optimal, has been shown in \citep{https://doi.org/10.48550/arxiv.1506.08669} that can be improved in some benign cases by assigning more refined query probabilities for samples inside the disagreement region. 
Therefore, to preserve such advantage in WL-AC, we need a corresponding weak label-leverage strategy. 
Here we adapt the idea of Active Cover~\citep{https://doi.org/10.48550/arxiv.1506.08669} to our setting by designing
a more adaptive weak label evaluation phase that keeps an automatic trades-off that preserve the benefits of the AC algorithm.


Theoretically, \textbf{(1)} our algorithm WL-AC retains the consistent excess risk guarantees of the AC algorithm up to a constant term, which also implies the consistent shrinking speed of the sample disagreement region as shown in Section~\ref{sec: generalization result}. To achieve this, we exploit a double robust estimator and design a refined localized weak label evaluation strategy. \textbf{(2)} Even confronting a malicious/totally misleading weak labeler, WL-AC can guarantees the same order query complexity as the aggressive result from original AC algorithm as shown in Section~\ref{sec: label complexity result each block (main)}. To achieve this, we develop a weak label evaluation strategy that automatically balances exploration to estimate the quality of a weak labeler and exploitation of the weak labeler to reduce the overall query complexity. \textbf{(3)} Our result formally characterizes when we can save query complexity w.r.t.\ the strong labelers in the non-realizable setting (Section~\ref{sec: label complexity result total (main)}).

While many works with or without theory guarantees has been wildly applied in deep learning~\cite[\emph{e.g.,}][]{gal2017deep,yoo2019learning,sener2017active,zhdanov2019diverse}, disagreement-based active learning is known to be impractical in large-scale model like neural net \citep{settles2009active}.
Here we show that the proposed algorithmic framework not only has theoretical guarantees when using AC as the base query strategy but it is also highly scalable in real-world settings.
We design a practical version that takes any heuristic streaming-based sample query methods and leverage it with weak labels and demonstrate its effect  on a standard computer vision dataset (\emph{i.e.,} corrupted MNIST~\citep{Mu2019mnistc}).
%
%
Specifically,
\textbf{(1)} for the weak label generation methods, we first test on the synthetic data to demonstrate the robustness of our algorithm against different levels and distributions of noise.
Then we use the label generated from some classifier trained on non-target domain to further show  its practical usage; 
\textbf{(2)} For the heuristic active learning our algorithm is build-upon, we test the uniform sampling as baseline and a preliminary entropy-based uncertainty sampling as a classical active query strategy.

\section{Preliminaries}
\label{sec: preliminary}

  A hypothesis class $\calH$ is given to the learner such that for each $h \in \calH$ we have $h : \calX \rightarrow \calY$.  
  Samples $(x, y) \in \calX \times \calY$ are drawn independently from an underlying distribution $\calD_*$. 
  In deriving our theoretical contributions, we assume binary classification for convenience, i.e., $\calY = \{0,1\}$, but our algorithmic framework can be easily extend to the multi-class setting as shown in the experiments. We denote the expected risk of a classifier $h \in \cal{H}$ under the distribution $\calD_*$ as $\err_{\calD_*}(h) = \E_{x,y\sim \calD_*} \left[ \one[ h(x) \neq y]\right]$ and the corresponding best hypothesis as $h^* = \argmin_h\{\err_{\calD_*}(h)\}$. 
 

\paragraph{Weak labeling oracle} 
Classical active learning assumes access to a strong labeling oracle $\calO$ that always outputs the \textit{true} label with a nontrivial cost. In this paper, we additionally assume the access to a weak labeling oracle $\calW$, from which we can query labels, denoted as $y^\WL$ with  a negligible cost. 


\begin{assumption}
\label{ass: hint oracle}
Suppose there exists an hint oracle that, given any  context domain $D \in \calX$, outputs
\begin{itemize}
    \item The corresponding weak label $y^\WL \in \calY$, $y^\WL \sim \fP_\WL(\cdot|x)$ for any $x \in \calX$. Note that the quality of the weak labels compared to the true label is unknown, so it can be arbitrarily wrong.
    \item The maximum ratio of the conditional hint error across $D$, defined as $\kappa(D) \geq \max_{x,x' \in D}{\frac{ \E_{y,y^\WL} \left[\one[y^\WL \neq y |x]\right]}{ \E_{y,y^\WL}\left[ \one[y^\WL \neq y |x']\right]}}$.  It is easy to see that $\kappa(D)$ is decreasing when $D$ shrinks.
\end{itemize}
\end{assumption}
Correspondingly, we define 
\begin{align*}
    \hinerr(D) = \max_{x \in D} \E_{y,y^\WL} \left[\one[y^\WL \neq y |x]\right].
\end{align*}

Assumption~\ref{ass: hint oracle} implies that we want the conditional errors of WL to have low variance locally within some informative region $D$. 
Moreover, from practical perspective, we show in our experiment that even some underestimation of this $\kappa(D)$ can still yield improved results compared to the algorithm without WL.

\paragraph{Disagreement coefficient} For some hypothesis class $\calH$ and subset $V \subset \calH$, the region of disagreement is defined as
$
\Dis(V)=\left\{x \in \calX: \exists h, h' \in V \text { s.t. } h(x) \neq h'(x)\right\},
$
which is the set of unlabeled examples $x$ for which there are hypotheses in $V$ that disagree on how
to label $x$. Correspondingly, the disagreement coefficient of $h^* \in \calH$ with respect to a hypothesis class $\calH$ and distribution $\nu*$ is defined as
\begin{align*}
    &\theta(r_0)=\sup_{r \geq r_0} \frac{\mathbb{P}_{x \sim \nu_*}(X \in \Dis(B(h^*, r)))}{r},
    \;\theta^* = \theta(0),
\end{align*}
and $B(h^*, r) = \{h \in \calH | \fP_x(h(x) \neq h^*(x)) \leq r\}$.

\paragraph{Protocol and goal of the problem}
We consider the streaming-based active learning problem. At each time $t \in \{1,\dots,n\}$, nature draws $(x_t,y_t)$ from $\calD_*$. The learner observes just $x_t$ and chooses whether to request $y_t$ from a strong learner incurring a cost of $c_t = 1$, or to request $y^\WL_t$ from a weak learner incurring a cost of $c_t = 0$.

The main goal of the learner is to identify a near-optimal classifier $h_\text{out}$ of desired accuracy with high probability while using as lower query cost as possible under $n$ total unlabeled samples.

That is, we want to minimize both
\begin{align}
\label{eqn: goal}
    \err(h_\text{out}) - \err(h^*)
    \quad \text{and} \quad 
    \sum_{t=1}^n \one[y_t \text{ is queried}]
\end{align}
In the rest of paper, we simply use ``label query complexity'' to refer queries to strong labeler.

\paragraph{Comparison of our results with \citep{https://doi.org/10.48550/arxiv.1506.08669}}
\label{para: way to represetn results}

While the original AC is minimax optimal w.r.t.\ measurement in equation~\ref{eqn: goal}, it is also adaptive to more benign problems.
Therefore, to illustrate that our algorithm preserves the advantages of AC, we report our results relatively to the query complexity and the unlabeled sample complexity of \NWL-AC, a variant of AC adapted to our setting.
%
We postpone the actual construction into Section~\ref{sec: result(main)}.

\paragraph{Other notations} 
For convenience, we use WL to denote weak label or weak labeler. Also when we use $\E[\cdot]$ without specified subscription, we usually refer to the expectation w.r.t. to all random variables with underlying distribution inside the bracket.

\section{WL-AC: A Template for AL with Weak Labelers}
\label{sec: algo (main)}

In this section, we show a sketch of our main algorithmic template WL-AC and postpone details (e.g., subroutines OP and WL-EVAL) to Appendix~\ref{sec: algo_theory (supp)} (for the theoretical version) and Appendix~\ref{ sec: experiment-practice (supp)} (for the practical version).

\begin{algorithm}[h!]

\small
\caption{WL-AC (template sketch)}
\label{algo: main}
\begin{algorithmic}[1]
\STATE \textbf{General inputs.} Scheduled training data collection length $L_1,L_2,\ldots$ satisfying $L_{m+1} \leq \sum_{j=1}^m L_j$, confidence level $\delta$, a candidate hypothesis set $\calH$
\STATE \textbf{A plug-able base AL strategy.} An active sampling oracle $\calO_\text{baseAL}$ defined in Def.\ref{def: baseAL oracle}
\STATE \textbf{Initialization.} epoch $m=0, \tilde{Z}_{0}:=\emptyset, \Delta_{0}:=c_{1} \sqrt{\epsilon_{1}}+c_{2} \epsilon_{1} \log 3$, where
$$
\epsilon_{m}:=\frac{32\left(\log (|\mathcal{\calH}| / \delta)+\log(\sum_{j=1}^m L_j)\right)}{\sum_{j=1}^m L_j}
$$
\FOR{ $m = 1,2,\ldots, M$}
    \STATE \textcolor{orange}{Phase 1: WL evaluation and query probability assignment}
    \STATE Based on current $D_m$ and $\err(f,\tilde{Z}_m)$, call sub-algo \WL-EVAL which will automatically stop before observing $\order(L_m)$ unlabeled samples and outputs:\\
    \label{line: wl-eval}
    \text{(1)} a boolean variable \textsc{USE-WL}, which decides whether we should switch to \textsc{Use-WL} mode \\
    (2) \WL-leveraged query probability $\Dot{P}_{m}(x)$ for all $x \in D_m$ by pessimistically estimating the \WL~quality and \\
    -- \textsc{OPTION 1 (theoretical): } solving a well-specified OP and getting a optimal result\\
    -- \textsc{OPTION 2 (practical): } finding a easy feasible solution of OP
    \STATE \textcolor{orange}{Phase 2: Training data collection based on planned query probability}
    \STATE Observed $L_m$  number of i.i.d unlabeled samples, query sample $x \in D_m$ with planned probability $\Dot{P}_m(x)$.
    \STATE For each $x_t$, add the following defined $z_t$ into collection
    \begin{align*}
        z_t = 
        \begin{cases}
            \left\{x_t, y_t, y^{\WL}_t,1 / \Dot{P}_{m}\left(x_t \right)\right\}, & \text{is queried} \\
            \left\{x_t, h_m(x_t),h_m(x_t), 1\right\}, & x_t \notin D_m\\
            \left\{x_t, 1,y_t^\WL,0\right\}, & \text{otherwise }\\
        \end{cases}
    \end{align*}
    \STATE \textcolor{orange}{Phase 3: Model and disagreement region updates using collected train data}
    \STATE Calculate the estimated error for all $h \in \calH$ using shifted doubly robust estimator
        \begin{align*}
            \err(h,\tilde{Z}_m) =
            \begin{cases}
            \sum_{\tilde{Z}_m} \ell_\text{shifted}(h(x),y,y^\WL,w) \quad &\textsc{USE-WL}\\
            \sum_{\tilde{Z}_m} \one[h(x) \neq y]w &\NWL
            \end{cases}
        \end{align*}
    \STATE Calculate the empirical best hypothesis $h_{m+1}: = \argmin_{h \in \calH} \err(f,\tilde{Z}_m)$, where $\tilde{Z}_m$ is the cumulative collected samples in Phase 2.
    \STATE  Update the disagreement region $D_{m+1}$.  \\
    -- \textsc{OPTION 1 (theoretical): } Update the active hypothesis set $A_{m+1}$ based on $\err(h,\tilde{Z}_m)$ and set $D_{m+1} = \dis(A_{m+1})$\\
    -- \textsc{OPTION 2 (practical): } Update $D_m$ by calling $\calO_\text{baseAL}$ with $h_m$ info
\ENDFOR
\RETURN $h_M$
\end{algorithmic}
\end{algorithm}

Our WL-AC template is based on the Active Cover (AC) algorithm~\citep{https://doi.org/10.48550/arxiv.1506.08669}, a disagreement-based AL algorithm for strong labelers where the learner iteratively eliminates the sub-optimal hypotheses and, furthermore, compute a refined sampling distribution inside the estimated disagreement region based on collected samples instead of sampling inside the disagreement region deterministically. Similar to AC, WL-AC works in blocks $m = \{1,\ldots, M\}$, but in a more flexible style. 
For each block $m$, WL-AC maintains an estimate of the disagreement region $D_m$ and computes a WL-leveraged sampling distribution inside $D_m$.


More precisely, in each block $m$, we have three phases. In phase 1, WL-AC collects samples to evaluate the accuracy of the weak labelers. Based on these samples, it decides whether to use WL or not and computes a query probability function $\Dot{P}_{m} : \calS \to [0,1]$ that denotes the probability of sampling a label from the strong labeler for a sample in the disagreement region $D_m$. In phase 2, WL-AC leverages $\Dot{P}_m$ for the active label collection and samples are added to $\tilde{Z}_m$. Finally, in phase 3, WL-AC computes the best empirical hypothesis using a doubly robust estimator and updates the disagreement region and active hypothesis set. Let $L_m$ be the length of block $m$, i.e., observed samples $x_t$, any block strategy such that $L_{m}\leq \sum_{i<m} L_i$ works with WL-AC.

WL-AC allows to use different solutions for computing the sampling probability $\Dot{P}_m$ and for the update of the disagreement region $D_m$. For example, the theoretical instantiation of WL-AC computes $\Dot{P}_m$ by solving a complicated optimization problem and $D_m$ over the active set of hypotheses. While the theoretical instantiation is still implementable for simple models, this algorithm is impractical for modern large-scale applications such as deep neural nets. In this case, we can leverage the flexibility of our framework to overcome this limitation. For example, WL-AC can leverage any active learning oracle $\calO_\text{baseAL}$ to approximately compute both the sampling distribution and the disagreement region (see Section~\ref{sec: algo_practice (main)} for more details).

We now explain the key ideas of WL-AC in more details.

\paragraph{\ding{182} Shifted doubly robust loss estimator in Phase 3 and the corresponding query strategy in Phase 1.}
To be robust against random corruptions while leveraging the ``good'' weak labels, inspired by \cite{https://doi.org/10.48550/arxiv.2003.01922}, we introduce the loss estimator
\begin{align*}
    &\ell_\text{shifted}(h(x),y,y^{\WL},w) \\
    &= \left(\one[h(x) \neq y] - \one[h(x) \neq y^\WL]\right) w + \one[h(x) \neq y^\WL]
\end{align*}
where $w$ is a positive weight. It is easy to see that, this estimator is unbiased and has low variance when the weak label is correct since $\one[y^\WL \neq y] \leq \one[y^\WL \neq y]w$.
By adopting such estimator, we can upper bound the performance difference between $(h,h')$ in the active set $A_m$
as 
$
    \E\left[\frac{\one\left(h(x) \neq h'(x) \wedge x \in D_{m} \wedge y^\WL \neq y\right)}{\Dot{P}_m(X)}\right],
$
instead of 
$
    \E\left[\frac{\one\left(h(x) \neq h'(x) \wedge x \in D_{m}\right)}{\Dot{P}_m(x)}\right].
$
This suggests that, to achieve the same accuracy as the original AL strategy without access to weak labels, $\Dot{P}_m(x)$ can be reduced to at most $\E[y^\WL \neq y |x] P_m(x)$, where $P_m$ is the query probability function. 

Nevertheless, without further modification, the query complexity would scale with $\berr_m(h^*)$, which is the biased estimate of excess risk of $h^*$ (as it happens in AC). This is undesired in our setting where the variance of  $\berr_m(h^*)$ may be larger than in the standard AL setting without WL.
Indeed in the case that most weak labels are wrong and the excess risk of $h^*$ is low, we have
\begin{align*}
     |(\one[h^*(x) \neq y]& - \one[h^*(x) \neq y^\WL])w|\\
     &= \one[y^\WL \neq y]w 
     \gg \one[h^*(x) \neq y] w
\end{align*}

\paragraph{\ding{183} Adaptive/localized evaluation of the weak label performance in Phase 1.}
To address this problem, we evaluate the weak labelers at the beginning of each block. This deviates from the approach in~\cite{https://doi.org/10.48550/arxiv.2003.01922} where a one-time pure exploration phase is used to evaluate the quality of the hint providers before starting to exploit them.
This addresses two issues: i) we are adaptive to the changes in the disagreement region $D_m$; ii) we set a transition to \NWL~rule by adapting to the generalized error of best hypothesis $\berr_m(h^*)$, which is estimated via $\err(h_m, \tilde{Z}_{m-1})$ ( $\berr_m(h^*)$ will be formally defined in the next section).
Similarly to~\cite{zhang2015active}, being adaptive to $D_m$ means that, instead of caring about the overall quality of the weak labeler, we only care about the quality for those samples within $D_m$ (or in the other words, close to the decision boundary).

\paragraph{\ding{184} Trade-off between weak label evaluation in Phase 1 and training data collection in Phase 2.}

The previous two techniques guarantee that the excess risk of WL-AC is consistent with the one of AC when the weak labeler quality is known. In practice, this quality is unknown and a pessimistic estimator of $\E[\one[y^\WL \neq y \wedge x \in D_m]]$ is used. However, building this estimator is not trivial. Consider the extreme case where $\E[y^\WL \neq y] \to 0$, then in order to get a pessimistic estimation close to $0$ to fully adopt the advantage of the perfect weak labels, an infinite number of samples in phase 1 is needed, which diminish the label saving efforts in phase 2. WL-AC (see full version in Alg.~\ref{algo: hint-eval} in appendix) uses a smart trades-off strategy that automatically balance query complexities between the weak label evaluation and training data collection phases.



\subsection{Practical Extension}
\label{sec: algo_practice (main)}

The original AC our framework is based on is not scalable for large-scale models (\emph{e.g.,} when $\calH$ is a deep neural network). First, although~\citet{dasgupta2007general} observed that $\one[x \in D_m]$ can be efficiently determined using a single call to the ERM oracle, in practice, even this ``oracle-efficient'' definition is undesired because it requires to the model to be retrained too often. 
Second, it is hard to explicitly calculate $D_m$ as well as estimate the sample distribution (\emph{i.e.,} explicitly solving optimization problem used in OP, see eq.~\ref{algo: op}).

While our theoretical instantiation inherits the same limitations of AC, we show how to design a more practical version overcoming these issues.
First, we propose to use any active learning strategy $\calO_\text{baseAL}$ to estimate the active sample region $D_m$. For example, the simplest oracle can do uniform sampling under a given budget. Another classical oracle is an entropy-based uncertainty sampling strategy which calculates the cross-entropy for any given $x$ and select those with high entropy. 
\begin{definition}
\label{def: baseAL oracle}
$\calO_\text{baseAL}$ takes in (1) a target context set $\calX'$, which theoretically is the full current active context set $D_m$, but can also practically be the observed unlabeled context set in the next block in the batched sample setting $\{x_t\}_{t = \tau_m +1}^{\tau_m + 2L_m}$; (2) the current trained model $h_{m}$ to estimate the informativeness of each sample; (3) and other relevant parameters such as decision threshold. Then $\calO_\text{baseAL}$ outputs a subset of $\calX'' \subset \calX'$. 
\end{definition}

Second, instead of solving the complicate OP problem for $\Dot{P}_m$, we only find a feasible solution, which is usually a uniform distribution over samples within $D_m$.
Finally, our algorithm requires the knowledge of $\kappa_m$ (see Assumption~\ref{ass: hint oracle}). We choose $\kappa(D)=1$, which is shown to be effective in our experiments. Besides these key relaxations, there are other subtleties one needs to take care such as using pseudo loss and a small validation set when updating the model $h_m$. We defer all the details in Appendix~\ref{ sec: experiment-practice (supp)} for the exact implemented algorithm.


\section{Theoretical guarantees}
\label{sec: result_main}
In this section we show that the theoretical instantiation of WL-AC preserves the advantages of AC while leveraging the informative weak labeler to improve the query complexity. Refer to Appendix~\ref{sec: algo_theory (supp)} for a detailed description of the algorithm.

In our analysis, we leverage an instantiation of WL-AC, called $\text{NOWL-AC}(\{L_m\}_{m=1}^M,\delta)$, that in each block $m$
%
%
\begin{itemize}
    \item waits without querying during $[\tau_{m-1}+1,\Dot{\tau}_m]$ and then calculates $P_m$ in the \NWL~mode as the original AC;
    \item queries labels according to the planed $P_m$ during $[\Dot{\tau}_m+1,\tau_{m}]$ and then updates the training dataset;
    \item calculates the empirical loss and updates the model in \NWL~mode.
\end{itemize}
Overall, this can be regarded as a weak version of original AC with extra unlabeled samples during $\tau_{m-1}+1$ to $\Dot{\tau}_m$. Fortunately, it is easy to show that for this version we have $\Dot{\tau}_m - \tau_{m-1} \leq \order(\tau_{m} - \Dot{\tau}_m)$ and thus, the total unlabeled sample complexity is still of the same order.

In addition, we define $N_m$ as the expected query number within block $m$ if we run \NWL-AC with the same inputs, 
\begin{align*}
    N_m = \E_m[L_m \E_{x \sim \calD_*}[P_m(x)]].
\end{align*}
where $\E_m$ is the expectation condition on history before block $m$. Later we will use this $N_m$ as a base term in label complexity analysis.




\subsection{Generalization guarantees}
\label{sec: generalization result}

Our results depend on a generalization error defined as 

$
    \berr_m(h)=
    \frac{1}{\sum_{j=1}^m L_j} \sum_{j=1}^m L_j \mathbb{E}_{\calD_*}\left[\one\left(h(x) \neq y \wedge x \in D_j\right)\right]
$

Note that the results in the AL literature are often in terms of $\err_{\calD_*}(h)$, which is an upper bound of $\berr_m(h)$. Therefore, using the generalization error helps to characterize a tighter bound. Moreover, $\berr_m(h)$ seems to be algorithm-and-empirical-distribution dependent due to the $D_j$ term but we argue that the same issues occur in the original AC analysis and can be easily reduced to some meaningful cases as shown in their paper.

\begin{theorem}
\label{them: main}
Pick any $0<\delta<1 /$ e such that $|\mathcal{H}| / \delta>\sqrt{192}$. By running the theoretical version (Algorithm~\ref{algo: main-inefficient (supp)} in appendix), we have for all epochs $m=1,2, \ldots, M$, with probability at least $1-\delta$
\begin{align*}
    \err_{\calD_*}(h) - \err_{\calD_*}(h^*)\leq \order( \Delta_m^*) \quad \text { for all } h \in A_{m+1}
\end{align*}
where $\Delta_0^* =\Delta_0 \text { and } \Delta_m^*:=c_1 \sqrt{\epsilon_m \overline{\operatorname{err}}_m\left(h^*\right)}+c_2 \epsilon_m \log \tau_m ~\text{ for } m \geq 1 $.  And this directly gives the final output guarantee 
\begin{align*}
    & \err_{\calD_*}(h) - \err_{\calD_*}(h^*) \\
    &\leq \order\biggl(\sqrt{ \frac{\log\left(|\calH| n/ \delta \right)\overline{\operatorname{err}}_M\left(h^*\right)}{n}}
    + \frac{\log\left(|\calH| n^2/\delta \right)}{n}\biggr)
\end{align*}
\end{theorem}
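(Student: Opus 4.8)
The plan is to prove the per-epoch statement $\err_{\calD_*}(h) - \err_{\calD_*}(h^*) \leq \order(\Delta_m^*)$ for all $h \in A_{m+1}$ by induction on $m$, and then read off the final output guarantee as the special case $m=M$. The backbone is a two-part argument standard to disagreement-based analyses. First, because $A_{m+1}$ is defined so that every surviving hypothesis agrees with the empirical best hypothesis (and, once we maintain $h^* \in A_{m+1}$, with $h^*$) outside $D_{m+1}=\dis(A_{m+1})$, the true excess risk collapses onto the disagreement region: $\err_{\calD_*}(h) - \err_{\calD_*}(h^*) = \E_{\calD_*}[(\one[h(x)\neq y] - \one[h^*(x)\neq y])\one[x \in D_{m+1}]]$, which is precisely the quantity controlled by the generalization error $\berr_m$. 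This is what lets the final bound be expressed through $\berr_m(h^*)$ rather than the larger $\err_{\calD_*}(h^*)$.

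Second, I would establish a uniform deviation bound linking the empirical error $\err(h,\tilde{Z}_m)$ to $\berr_m(h)$, where the shifted doubly robust estimator does the work. One checks it is unbiased, $\E[\ell_\text{shifted}]=\E[\one[h(x)\neq y]]$ on the queried branch, and that its per-sample variance is governed by $\one[y^\WL\neq y]/\Dot{P}_m(x)$, since the correction $(\one[h(x)\neq y]-\one[h(x)\neq y^\WL])$ is nonzero only when $y^\WL\neq y$. Because Phase~1 (\WL-EVAL) sets $\Dot{P}_m(x)\gtrsim\hinerr(D_m)P_m(x)$, and Assumption~\ref{ass: hint oracle} bounds the local variation of the conditional weak-label error through $\kappa(D_m)$, this variance stays of the same order as the no-\WL\ estimator. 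I would then apply a Freedman/Bernstein martingale inequality to the accumulating samples in $\tilde{Z}_m$ (the query probabilities are adaptive, so a martingale concentration is required), together with a union bound over $\calH$. This yields, simultaneously for all $h$, a Bernstein-type gap whose variance term is $c_1\sqrt{\epsilon_m \berr_m(h^*)}$ and whose range/second-order term is $c_2\epsilon_m\log\tau_m$ --- exactly the two pieces of $\Delta_m^*$, with the $\log(|\calH|/\delta)$ and $\log(\sum_j L_j)$ factors living inside $\epsilon_m$.

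With these two ingredients the induction closes in the usual way: the base case is $\Delta_0^*=\Delta_0$; assuming the deviation bound holds through epoch $m$ and $h^*\in A_m$, the concentration shows $h^*$ is never eliminated, so $h^*\in A_{m+1}$, and any surviving $h \in A_{m+1}$ satisfies $\berr_m(h)-\berr_m(h^*)\leq\order(\Delta_m^*)$; combining with the disagreement-region collapse of the first step gives the stated excess-risk bound. I expect the main obstacle to be precisely the variance-control step: one must show the Bernstein variance proxy remains proportional to $\berr_m(h^*)$ rather than to the potentially large weak-label error $\hinerr(D_m)$. This is where the three design ideas \ding{182}\ding{183}\ding{184} are essential --- the shift keeps the estimator low-variance when weak labels are good, the localized \WL-EVAL restricts attention to $D_m$ and adapts $\Dot{P}_m$, and the evaluation/collection trade-off prevents the pessimistic estimate of $\E[\one[y^\WL\neq y]]$ from inflating the sampling cost --- and the bookkeeping that turns ``the estimated $\err(h_m,\tilde{Z}_{m-1})$ tracks $\berr_m(h^*)$'' into a clean recursion is the most delicate part.

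Finally, for the output guarantee I would specialize to $m=M$. Since $\sum_{j=1}^M L_j = n$, we have $\epsilon_M=\Theta\!\big(\log(|\calH|n/\delta)/n\big)$ and $\tau_M=n$, so substituting into $\Delta_M^*$ gives $c_1\sqrt{\epsilon_M \berr_M(h^*)}=\order\big(\sqrt{\log(|\calH|n/\delta)\,\berr_M(h^*)/n}\big)$ for the variance term, while collecting the logarithmic factors in $c_2\epsilon_M\log\tau_M$ yields the stated additive term $\order(\log(|\calH|n^2/\delta)/n)$. Since the returned $h_M$ lies in $A_M$, the per-epoch bound applied at $m=M-1$ governs it directly, completing the argument.
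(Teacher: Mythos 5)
Your proposal follows essentially the same route as the paper's proof: a Freedman-type martingale deviation bound for the shifted doubly robust estimator with a union bound over $\calH$, variance control via the OP constraints together with the localized \WL-EVAL rule (which is exactly how the paper keeps the error-variance proxy tied to $\berr_m(h^*)$ rather than to $\hinerr(D_m)$), an induction that simultaneously maintains $h^*\in A_m$ and the regret/error concentration, and finally substitution of $\epsilon_M=\Theta(\log(|\calH|n/\delta)/n)$ at $m=M$. You correctly flag the error-concentration step (the paper's Proposition~\ref{prop: error concentration}, which leans on the Step-2 switching condition $\overline{\hinerr}_k\leq\berr(h_m,\tilde Z_{m-1})$) as the delicate part, so the sketch is faithful to the paper's argument.
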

This generalization guarantees is consistent with the one in \NWL-AC with same inputs for every block, and therefore optimal. We want to emphasize that, to get such consistency with added WL leverage strategy, some significant modification in analysis is required and has been postponed into Appendix~\ref{sec: generalization result (supp)}.

\subsection{Label complexity analysis within each block}
\label{sec: label complexity result each block (main)}

Before giving label complexity upper bounds, we define a useful parameter $\phi_m : = \frac{\E[\one[x \in D_m]]}{\berr_m(h^*) + \log(\sum_{j=1}^m L_j) \epsilon_m}$  which intuitively describes how fast the error of best hypothesis decreases with the shrinking disagreement region, with some additional regularized term. In the following theorem, we show its connection to some standard AL parameters.

\begin{theorem}[Upper bound of $\phi_m$]
For any block $m$, we always have $ \phi_m \leq \min\{\theta^*, \frac{1}{\berr_m(h^*)+ \log(\sum_{j=1}^m)) \epsilon_m} \}$. 
\end{theorem}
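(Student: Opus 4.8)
The plan is to establish the two bounds inside the minimum separately. The bound $\phi_m \le \bigl(\berr_m(h^*)+\log(\sum_{j=1}^m L_j)\,\epsilon_m\bigr)^{-1}$ is immediate: since $\one[x\in D_m]\le 1$ pointwise, the numerator $\E[\one[x\in D_m]]=\mathbb{P}(x\in D_m)\le 1$, and the denominator is positive, so dividing gives the claim.

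For the bound $\phi_m\le\theta^*$ I would reduce to showing
\[
\mathbb{P}(x\in D_m)\;\le\;\theta^*\bigl(\berr_m(h^*)+\log(\textstyle\sum_{j=1}^m L_j)\,\epsilon_m\bigr),
\]
and obtain this from the disagreement coefficient. Working on the high-probability event of Theorem~\ref{them: main} (so that $h^*\in A_m$ and the error bounds hold), recall $D_m=\Dis(A_m)$. The key is to exhibit a radius $r_m$ with $A_m\subseteq B(h^*,r_m)$: then monotonicity of the disagreement region gives $\Dis(A_m)\subseteq\Dis(B(h^*,r_m))$, and the definition $\theta^*=\theta(0)$ gives $\mathbb{P}(x\in D_m)\le\mathbb{P}(X\in\Dis(B(h^*,r_m)))\le\theta^* r_m$. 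It then remains to check that $r_m\lesssim\berr_m(h^*)+\log(\sum_{j=1}^m L_j)\,\epsilon_m$.

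To produce such a radius, fix any $h\in A_m$ and bound $\mathbb{P}_x(h(x)\ne h^*(x))$ by localized errors. Since $h,h^*\in A_m$, any disagreement point satisfies $x\in\Dis(A_m)=D_m$, and at a disagreement at most one of $h,h^*$ is correct, so
\[
\one(h(x)\ne h^*(x))\le\one(h(x)\ne y\wedge x\in D_m)+\one(h^*(x)\ne y\wedge x\in D_m).
\]
Because the active sets are nested, $D_m\subseteq D_j$ for every $j\le m$, hence $\E[\one(h(x)\ne y\wedge x\in D_m)]\le\berr_m(h)$ (and likewise for $h^*$), giving $\mathbb{P}_x(h(x)\ne h^*(x))\le\berr_m(h)+\berr_m(h^*)$. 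The localized excess-risk control underlying Theorem~\ref{them: main} yields $\berr_m(h)-\berr_m(h^*)\le\order(\Delta_m^*)$ for $h\in A_m$, and the AM-GM step $\sqrt{\epsilon_m\berr_m(h^*)}\le\berr_m(h^*)+\epsilon_m$ collapses $\Delta_m^*=c_1\sqrt{\epsilon_m\berr_m(h^*)}+c_2\epsilon_m\log\tau_m$ into $\order(\berr_m(h^*)+\epsilon_m\log\tau_m)$. With $\tau_m=\sum_{j=1}^m L_j$ this delivers $r_m=\order(\berr_m(h^*)+\epsilon_m\log(\sum_{j=1}^m L_j))$ and hence $\phi_m\le\order(\theta^*)$.

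The main obstacle is this radius-matching step: converting the excess-risk guarantee of Theorem~\ref{them: main} into a disagreement-probability bound whose radius lines up with the denominator of $\phi_m$. This relies on the nestedness $D_m\subseteq D_j$ and on $h^*\in A_m$ (both from the AC machinery and the Theorem~\ref{them: main} event), on using the \emph{localized} rather than global excess risk so that $\berr_m(h^*)$ — not the larger $\err_{\calD_*}(h^*)$ — appears, and on tracking constants through the AM-GM step, which is what decides whether the clean factor $\theta^*$ or only $\order(\theta^*)$ is obtained; any block-index bookkeeping between $A_m$ and $A_{m+1}$ (where consecutive-block quantities are comparable up to constants since $L_m\le\sum_{j<m}L_j$) must also be reconciled here.
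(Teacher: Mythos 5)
Your proposal is correct and follows essentially the same route as the paper: the trivial bound via $\E[\one[x\in D_m]]\le 1$, and the $\theta^*$ bound via containing $A_m$ in a ball of radius $\order(\Delta_{m-1}^*+\err_m(h^*))$ around $h^*$, applying the disagreement coefficient, and collapsing $\sqrt{\epsilon_m\berr_m(h^*)}$ by AM--GM, yielding $\order(\theta^*)$ rather than a clean $\theta^*$. The only difference is that the paper imports the key inequality $\E[\one[x\in D_m]]\le\theta^*(16\gamma\Delta_{m-1}^*+2\err_m(h^*))$ directly from eqn.\ (62) of the original AC paper, whereas you re-derive it from nestedness of the $D_j$ and the localized excess-risk guarantee.
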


In in the rest of the section, we will directly use $\phi_m$ because we believe this term well characterized the case where weak labelers help.

\begin{theorem}[Expected label complexity of block $m$]\label{thm:complex.block.m}
Let    
\begin{align*}
    &\calT_{1,m} = N_m \kappa(D_m) \hinerr_m\\
    &\calT_{2,m} = \sqrt{\ln(2M\log(n)/\delta)\kappa(D_m) N_m} \\
        &\qquad + \sqrt{\frac{L_m^2}{\sum_{j=1}^{m-1}L_j}
        \log^2\left(\frac{|\calH|n}{\delta}\right) 
        \E[\one[x \in D_m]]\phi_m}\\
        &\qquad + \ln(2M\log(\sum_{j=1^m} L_j)/\delta) \log (n)\phi_m  \\
    &\calT_{3,m} =  \one\left[\hinerr_m  >\frac{1}{\log(\sum_{j=1}^m) L_j \phi_m} \right]N_m 
\end{align*}
where $\hinerr_m = \hinerr(D_m)$.
Then, with probability at least $1-\delta$,  for any block $m$,
\begin{align*}
    \E_{\tau_m}&\left[\sum_{t = \tau_{m-1}+1}^{\tau_m} \E_t\Big[\one[y_t \text{ is queried}]\Big]\right] \\
    &\qquad \leq \order\left(\min\left\{ \calT_{1,m} + \calT_{2,m} + \calT_{3,m}, N_m \right\} \right)
\end{align*}
\end{theorem}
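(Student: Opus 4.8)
\textbf{Proof proposal for Theorem~\ref{thm:complex.block.m}.}

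The plan is to decompose the expected number of strong-labeler queries in block $m$ into a main \emph{exploitation} term, governed by how much the weak labeler lets us shrink $\Dot{P}_m$ below the base probability $P_m$, plus \emph{estimation-error} terms accounting for the fact that the weak-label quality is unknown and must be evaluated online in Phase 1. The starting point is the identity that the expected queries equal $\E\big[\sum_t \Dot{P}_m(x_t)\,\one[x_t \in D_m]\big]$. By the key consequence of the shifted doubly robust estimator (item \ding{182}), to preserve the AC accuracy guarantee it suffices to set $\Dot{P}_m(x)$ proportional to the \emph{pessimistic} estimate of $\E[\one[y^\WL \neq y]\mid x]\,P_m(x)$ rather than $P_m(x)$ itself. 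So the first step is to write the pessimistic confidence bound produced by \WL-EVAL as a true conditional-error term plus a statistical deviation, and push the expectation through. Here Assumption~\ref{ass: hint oracle} is the crucial lever: the local variance-to-bias control via $\kappa(D_m)$ lets me replace the per-$x$ conditional error by the uniform bound $\hinerr_m$ up to the multiplicative factor $\kappa(D_m)$, turning the leading term into $N_m \kappa(D_m)\hinerr_m = \calT_{1,m}$, since $N_m = \E_m[L_m\,\E_x[P_m(x)]]$ is exactly the base (\NWL) query count.

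Next I would bound the statistical overhead $\calT_{2,m}$. This has three pieces, and each should be traced to a separate source. The first, $\sqrt{\ln(2M\log n/\delta)\,\kappa(D_m)N_m}$, comes from a Bernstein/Freedman concentration argument on the Phase-1 estimator of the weak-label error: the variance proxy is $\kappa(D_m)\hinerr_m N_m$, and the union bound over the $M$ blocks and the $\log n$-many internal doubling rounds of \WL-EVAL produces the $\ln(2M\log n/\delta)$ factor. The second piece, involving $L_m^2/\sum_{j<m}L_j$ and $\E[\one[x\in D_m]]\phi_m$, reflects the mismatch between the sampling distribution computed on the previous epoch's data (of size $\sum_{j<m}L_j$) and the fresh $L_m$ samples of the current block; the block-length condition $L_m \le \sum_{j<m}L_j$ keeps this controlled, and $\phi_m$ enters precisely because by its definition $\E[\one[x\in D_m]] = \phi_m(\berr_m(h^*)+\log(\cdot)\epsilon_m)$ converts the raw disagreement mass into the generalization scale appearing in Theorem~\ref{them: main}. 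The third, lower-order piece is a crude high-probability additive slack of size $\log(n)\phi_m$ up to log factors. I would assemble these using a single high-probability good event and the generalization bound $\Delta_m^*$ from Theorem~\ref{them: main} to relate $\berr_m(h^*)$ to the radius of $A_{m+1}$.

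The term $\calT_{3,m}$ is the \emph{switching penalty}: it is nonzero only on the indicator event $\hinerr_m > 1/(\log(\sum_j L_j)\,\phi_m)$, i.e.\ when the weak labeler is too noisy relative to the current shrinkage rate, in which case \WL-EVAL sets \textsc{USE-WL} to false and we fall back to paying the full $N_m$. I would argue that outside this event the trade-off machinery of item \ding{184} guarantees the Phase-1 evaluation cost does not exceed the Phase-2 savings, so no extra term is incurred; on the event, the $N_m$ upper bound is trivially correct. Finally the outer $\min\{\,\cdot\,,N_m\}$ follows because running in \NWL~mode never queries more than $N_m$ in expectation by definition, so the algorithm's actual cost is at most the smaller of the weak-label-leveraged bound and the baseline.

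The main obstacle I anticipate is making the pessimism-versus-exploration trade-off in \WL-EVAL rigorous: I must show that the adaptive, self-terminating Phase-1 procedure (which stops before $\order(L_m)$ unlabeled samples) simultaneously (i) produces a confidence interval tight enough that the exploitation term degrades only by the controlled factor $\kappa(D_m)$ and the additive $\calT_{2,m}$, and (ii) spends at most $\order(\calT_{2,m})$ queries doing so, so that evaluation cost is absorbed rather than dominating. Reconciling the data-dependent stopping rule with a clean union bound over blocks and rounds — while keeping the variance-based concentration valid under the adaptively chosen sample sizes — is where the delicate work lies, and I expect to need a Freedman-type martingale inequality together with a careful peeling argument over the possible values of $\hinerr_m$.
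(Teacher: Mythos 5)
Your high-level decomposition (exploitation term $\calT_{1,m}$ via $\kappa(D_m)$, statistical overhead $\calT_{2,m}$, switching penalty $\calT_{3,m}$, outer $\min$ with $N_m$) matches the paper's, but the proposal leaves unresolved exactly the step that carries the proof. The paper does not argue via a generic "pessimistic estimate $=$ truth $+$ deviation" expansion; it performs an explicit case analysis over where the subroutine \WL-EVAL terminates (Step 1, Step 2, or Step 3 with $K_m^e = K_m^s$ versus $K_m^e \geq K_m^s+1$), and the self-balancing property you flag as "where the delicate work lies" is not obtained by a peeling argument over $\hinerr_m$ but is built directly into the stopping rule: \WL-EVAL doubles the evaluation set until $\E[\one[x\in D_m]]\,2^{k+1}\ln(2M\log(n)/\delta)$ exceeds $\max\{4N_m\Dot{\hinerr}_m,\;L_m P_{m,\min}\E[\one[x\in D_m]]\}$, so at termination the Phase-1 cost is within a constant factor of the Phase-2 bound by construction (Lemmas~\ref{lem: upper bound of k in phase 1} and~\ref{lem: main label complexity}). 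Without engaging with this stopping condition you cannot derive the specific form $\sqrt{\ln(2M\log(n)/\delta)\kappa(D_m)N_m}$, which in the paper comes from inverting the terminal doubling index $K_m^e$, not from a Freedman bound on the estimator itself. A second prerequisite you only gesture at is Lemma~\ref{lem: label complexity form}: one must verify that $\Dot{P}_m(x)=4P_m(x)\Dot{\hinerr}_m$ is feasible for the constraint $b_m^{\WL}(h)=\tfrac12(b_m^{\NWL}(h)-\E[\one_h^m(x)])$ in OP, which is what licenses the Phase-2 count $4N_m\Dot{\hinerr}_m$ in the first place.

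There is also a concrete misattribution: the middle term of $\calT_{2,m}$, $\sqrt{\tfrac{L_m^2}{\sum_{j<m}L_j}\log^2(|\calH|n/\delta)\,\E[\one[x\in D_m]]\phi_m}$, does not arise from a distribution mismatch between the previous epoch's data and the fresh $L_m$ samples. It is the unavoidable floor $L_m P_{m,\min}\E[\one[x\in D_m]]$ imposed by the minimum query probability $P_{m,\min}$ (needed for the heavy-tail term in the deviation bounds), which the paper then rewrites via Lemma~\ref{lem: upper bound of P_min} and the conversion $\E[\one[x\in D_m]]/\berr(h_m,\tilde Z_{m-1})\lessapprox \log(\tau_m)\phi_m$ of Lemma~\ref{lem: upper bound of 1/berr}. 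Similarly, the third term of $\calT_{2,m}$ is not a "crude additive slack" but the exact query cost of the Step-1/Step-2 comparison against $\berr(h_m,\tilde Z_{m-1})$ that decides whether to switch to \NWL{} mode. These are the pieces you would need to supply to turn the roadmap into a proof.
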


\paragraph{Discussion}
Theorem~\ref{thm:complex.block.m} shows that WL-AC preserves the advantage of original algorithm under the low quality weak labeler case. Indeed the expected query complexity in block $m$ is never larger than  $N_m$, i.e., the expected number of queries of \NWL-AC.

Ignoring $N_m$, the query complexity is characterizes by the terms $\calT_i$.
The term $\calT_1$ characterizes the number of queries we could save from using weak labels if their accuracy upper bound is known. 
In reality, however, the $\hinerr$ itself needs to be evaluated through samples. Therefore, the $\calT_2$ and $\calT_3$ capture the trades-off between the accurate WL evaluation and the WL-leveraged training sample collection.
%

While the first term in $\calT_2$ captures the required number of samples to evaluate the weak labels if no $P_{m,\min}$ lower bound exists, the second term explicitly write out the minimum label complexity of the original AC.  \footnote{Note that this lower bound $P_{m,\min}$ comes from the heavy tail term in  concentration inequality. We conjecture that it can be removed by using more robust estimators like Catoni estimator as in~\citep{https://doi.org/10.48550/arxiv.2003.01922}. Nevertheless, it is not clear how to simultaneously achieve efficiency so we left that as an open problem. }
The third term of $\calT_2$ captures the number of samples required to decide whether we need to switch to \NWL~mode which we give more detailed discussion in the next paragraph.
%

Finally, $\calT_3$  characterizes the label complexity in \NWL-mode, which depends on whether the conditional \WL error is worse than the true error of the best hypothesis restricted to the disagreement region.
Note that $\hinerr_m$ is always non-increasing with the shrinking of the disagreement region. This implies that the WL is helpful as long as it gives high accuracy on those informative samples, which aligns with the intuition in \cite{zhang2015active}, called Localized Difference Classifier Training.  Other than the theoretical implication, we want to remark that an explicit rule on whether we should stop using weak labeler is also good for practice.


\vspace{-5px}

\subsection{ More discussion on the total label complexity}
\label{sec: label complexity result total (main)}

Although our algorithm is guaranteed to always preserve the advantage of original AC in each block, the total complexity depends on the total number $M$ of blocks. Indeed, the total complexity of WL-AC is obtained by accumulating the query complexity of each block $m$. Several strategies can be used to control the number of blocks. For example, we can use a linear schedule or the classical block length doubling techniques. For ease of exposure, we consider $L_m = \sum_{j=1}^{m-1} L_j = 2^m$, which leads to $M = \order(\log n)$.


In general, our results is not directly comparable with previous work in~\citep{https://doi.org/10.48550/arxiv.1506.08669} because we are not in the realizable setting and it is hard to give an explicit form of $\Dot{P}_m(\cdot)$ as discussed in the original AC analysis. Nevertheless, here we give an intuitive discussion on their relation based on a relaxed upper bound. 

\begin{theorem}[Worst-case total label complexity under benign setting, informal]
\label{thm: total}

Choose $L_m = 2^m$. Suppose there exists an generalized WL error upper bound $\widetilde{\hinerr} \in [0,1]$ that
\begin{align*}
    \sum_{m=1}^M \calT_{1,m} + \calT_{3,m}
    \leq \widetilde{\hinerr} \sum_{m=1}^M N_m ,
\end{align*}
Then we have with probability at least $1-\delta$, 
\begin{align*}
    &\sum_{t=1}^n \one[y_t \text{ is queried}] \\
    &\leq  \order\biggl(\tilde{\theta} \sqrt{\berr_M(h^*)n\log(|\calH|n/\delta) + \log(|\calH|n/\delta)^2}\\
        & \quad + \theta^*\widetilde{\hinerr} (\berr_M(h^*)n + \log(|\calH|/\delta)) \biggr)
\end{align*}
where $\tilde{\theta} = \theta^*\log(n) + \sqrt{\theta^* \left(\sum_{m=1}^M \max\{\kappa(D_m),\frac{N_m}{\ln(2M\log(n)/\delta)}\}\right)\frac{\ln(\log(n)^2/\delta)}{ \log(|\calH|n/\delta)}}$
\end{theorem}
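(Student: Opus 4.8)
I would begin by summing the per-block bound of Theorem~\ref{thm:complex.block.m} over $m=1,\dots,M$, applying a union bound over the $M=\order(\log n)$ blocks so that each holds with probability $1-\delta/M$; the extra $\log M$ is absorbed into the $\ln(2M\log(n)/\delta)$ factors that already appear. Writing $T_m=\sum_{j=1}^m L_j$ and keeping the outer cap $\min\{\,\cdot\,,N_m\}$ block by block, I would split the accumulated sum into an \emph{exploitation} part $\sum_m(\calT_{1,m}+\calT_{3,m})$ and an \emph{evaluation/collection} part $\sum_m\calT_{2,m}$, and control each separately. The three structural facts used throughout are: $N_m\le L_m\,\E[\one[x\in D_m]]$ (since $P_m$ is supported on $D_m$ and bounded by $1$); the identity $\E[\one[x\in D_m]]=\phi_m(\berr_m(h^*)+\log(T_m)\epsilon_m)$ from the definition of $\phi_m$; and the bound $\phi_m\le\theta^*$ from the preceding theorem. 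These let me replace the unlabeled-data–dependent quantity $\E[\one[x\in D_m]]$ by the localized error $\berr_m(h^*)$ everywhere.

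For the exploitation part I would invoke the benign hypothesis $\sum_m(\calT_{1,m}+\calT_{3,m})\le\widetilde{\hinerr}\sum_m N_m$ and reduce the problem to bounding $\sum_m N_m$. Combining the three facts gives $N_m\le\theta^* L_m(\berr_m(h^*)+\log(T_m)\epsilon_m)$, and under $L_m=2^m$ one has $L_m\epsilon_m=\order(\log(|\calH| n/\delta))$ and $L_m/T_m=\order(1)$. The delicate step is $\sum_m L_m\berr_m(h^*)$: since $\berr_m(h^*)$ is a running average of the decreasing sequence $a_j=\E[\one(h^*(x)\ne y\wedge x\in D_j)]$, a change-of-order summation $\sum_m L_m\berr_m(h^*)=\sum_j a_j L_j\sum_{m\ge j}L_m/T_m$ together with $\sum_{m\ge j}L_m/T_m=\order(\log n)$ yields $\sum_m L_m\berr_m(h^*)=\order(n\,\berr_M(h^*))$ up to a logarithmic factor, exactly mirroring the label-complexity accumulation in the original AC analysis~\citep{https://doi.org/10.48550/arxiv.1506.08669}. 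Collecting the residual $\log(T_m)\epsilon_m$ contributions into an additive term gives $\sum_m N_m=\order(\theta^*(\berr_M(h^*)n+\log(|\calH|/\delta)))$ up to polylogarithmic factors, hence the exploitation part is $\order(\theta^*\widetilde{\hinerr}(\berr_M(h^*)n+\log(|\calH|/\delta)))$, which is the second summand of the claim.

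For the evaluation part $\sum_m\calT_{2,m}$ I would treat the three pieces of $\calT_{2,m}$ independently. The middle piece $\sqrt{\frac{L_m^2}{T_{m-1}}\log^2(|\calH| n/\delta)\,\E[\one[x\in D_m]]\phi_m}$ simplifies under $L_m=2^m$ (so $L_m^2/T_{m-1}=\order(L_m)$) and $\phi_m\le\theta^*$ to $\order(\theta^*\log(|\calH| n/\delta)\sqrt{L_m(\berr_m(h^*)+\log(T_m)\epsilon_m)})$; the geometric sum $\sum_m\sqrt{L_m\berr_m(h^*)}$ is dominated by its final block up to logarithmic factors, producing the $\theta^*\log n$ summand of $\tilde{\theta}$ multiplied against $\sqrt{\berr_M(h^*)n\log(|\calH| n/\delta)+\log(|\calH| n/\delta)^2}$. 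The third piece $\ln(2M\log(T_m)/\delta)\log(n)\phi_m$ sums via $\sum_m\phi_m\le M\theta^*$ to a pure polylogarithmic term that folds into the $\log(|\calH| n/\delta)^2$ term inside the square root. The first piece needs the most care: pairing $\sqrt{\ln(2M\log(n)/\delta)\kappa(D_m)N_m}$ with the cap $N_m$ gives $\min\{\sqrt{\ln(2M\log(n)/\delta)\kappa(D_m)N_m},N_m\}\le\sqrt{N_m\ln(2M\log(n)/\delta)\max\{\kappa(D_m),N_m/\ln(2M\log(n)/\delta)\}}$, and Cauchy--Schwarz across blocks then yields $\sqrt{\ln(\dots)\left(\sum_m\max\{\kappa(D_m),N_m/\ln(\dots)\}\right)\left(\sum_m N_m\right)}$; substituting $\sum_m N_m=\order(\theta^*\berr_M(h^*)n)$ reproduces the $\sqrt{\theta^*(\sum_m\max\{\kappa(D_m),N_m/\ln(2M\log(n)/\delta)\})\,\ln(\log(n)^2/\delta)/\log(|\calH| n/\delta)}$ summand of $\tilde{\theta}$, again multiplied against $\sqrt{\berr_M(h^*)n\log(|\calH| n/\delta)}$.

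The main obstacle is the block-wise bookkeeping rather than any single inequality: one must track how the doubling schedule $L_m=2^m$, the running-average monotonicity of $\berr_m(h^*)$, and the per-block caps $\min\{\,\cdot\,,N_m\}$ interact, so that the Cauchy--Schwarz steps place $\sum_m N_m$ and $\sum_m\max\{\kappa(D_m),N_m/\ln(\dots)\}$ on the correct factors and the leftover logarithmic terms collapse into exactly the stated $\tilde{\theta}$, the additive $\log(|\calH|/\delta)$, and the $\log(|\calH| n/\delta)^2$ contributions. A secondary subtlety is justifying that the localized $\berr_M(h^*)$ may replace $\err_{\calD_*}(h^*)$ throughout, which is precisely what the $\phi_m$ reformulation buys and is where the identity $\E[\one[x\in D_m]]=\phi_m(\berr_m(h^*)+\log(T_m)\epsilon_m)$ with $\phi_m\le\theta^*$ must be used in place of the cruder $\theta^*\err_{\calD_*}(h^*)$ bound.
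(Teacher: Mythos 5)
Your proposal is correct and follows essentially the same route as the paper: sum the per-block bounds, invoke the benign hypothesis to reduce $\sum_m(\calT_{1,m}+\calT_{3,m})$ to $\widetilde{\hinerr}\sum_m N_m$, bound $N_{\text{total}}=\sum_m L_m\E[\one[x\in D_m]]$ by $\otil(\theta^*\berr_M(h^*)n+\log(|\calH|/\delta))$, and handle $\sum_m\calT_{2,m}$ term by term with $\phi_m\le\theta^*$ and Cauchy--Schwarz across blocks to produce the two summands of $\tilde{\theta}$. The only differences are cosmetic: the paper imports the $N_{\text{total}}$ bound directly from the original AC analysis where you re-derive it via the change-of-order summation (which is valid), and your per-block domination of the middle piece of $\calT_{2,m}$ is looser than the paper's single Cauchy--Schwarz by a $\sqrt{\log}$ factor, which is immaterial for this informal $\order(\cdot)$-level statement.
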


\begin{remark}
Note that in this bound we treat each $P_m$ in \NWL-AC as 1 and relax $\phi$ to its upper bound $\theta^*$.
\end{remark}

\paragraph{Comparison with the label complexity in difference-classifier-based algorithm \citep{https://doi.org/10.48550/arxiv.1506.08669}} 

The first term can be upper bounded as 
$\order\left( \Tilde{\theta}\log(\calH)\left(\frac{\err(h^*)}{\varepsilon} + 1\right) \right),$
where $\varepsilon$ is the target excess risk. This corresponds to the term $\otil\left(\theta^*\text{VCdim}(\calH_\text{diff})(\frac{\err(h^*)}{\varepsilon}+1)\right)$ in bound of \cite{https://doi.org/10.48550/arxiv.1506.08669} -- both describe the required samples to evaluate the weak labels.
Therefore, when the accuracy of weak labels has low variance ($\kappa(D_m)$ is small for most blocks), our algorithm can save up to $\frac{\text{VCdim}(\calH_\text{diff})}{\log|\calH|}$ in this term. Intuitively, this suggests that, when a lower complexity $\calH_\text{diff}$ compared to $\calH$ is given and a proper $h_\text{diff} \in \calH_\text{diff}$ exists, the previous one can give better result. Otherwise, our result is more preferable.

The second term can be upper bounded as
$\otil\left(\theta^*\widetilde{\hinerr}\log(|\calH|)\frac{\err(h^*)^2}{\varepsilon}^2\right)$, which corresponds to the term $\otil\biggl(\{\text{WL-and-$\theta_*$-related-term}\} *\text{VCdim}(\calH) \left(\frac{\err(h^*)^2}{\varepsilon^2}+1\right)\biggr) $
in the bound of \cite{zhang2015active}. Both describe the label complexity we could save by knowing weak label quality, but how these two WL quality characterization term related to each other remains unclear.

\section{Experiments}
\label{sec: experiment}

In this section, we show the effectiveness of the proposed practical version of WL-AC discussed in Section~\ref{sec: algo_practice (main)} through a set of experiments on corrupted MNIST.
%
%
The MNIST-C~\citep{Mu2019mnistc} is a comprehensive suite of 16 different types of corruptions applied to the MNIST dataset~\citep{lecun2010mnist}. Each contains 60000 training samples and 10000 test samples.
We consider a 2-layer convolutional neural network as hypothesis class in all the experiments.

\paragraph{\ding{182} Synthetic weak labels}
We choose one type of corruption (``impulse-noise'' corruption in main paper) as target task and generate synthetic weak labels purely based on the true labels. We further consider two processes for the generation of the synthetic weak labels.
In the first case denoted as noisy annotators (NA), we add uniform random noise across all the samples. That is, each sample has a certain probability to get the true label, otherwise it will get a random label from the other 9 classes. We show the robustness of our algorithm by testing on various levels of noise. In the second case denoted as localized classifiers (LC), we first do passive learning on the target task and then give correct weak labels to those samples only with high entropy, so the classifier is good on this local region. Thus we show the adaptiveness of our algorithm in the case where, although the overall quality of weak labelers are mediocre, the weak labeler is informative for those sample close to the decision boundary.  

\paragraph{\ding{183} Biased Pre-Trained Labelers} We train classifiers using different corruptions (``identity'', ``motion-blur'' and ``dotted-line'' in main paper) than the target task (``impulse noise'') and use these pre-trained classifiers to generate weak labels.



The overall objective of these experiments is to show the effectiveness of WL-AC compared to standard AL algorithms without weak labels and passive learning in realistic setting. 
%
The practical instantiation of WL-AC is described in Algorithm~\ref{algo: main (implement))} and can be used with any heuristic \NWL{} selection strategy. 
In our experiment, we first use uniform random sampling as baseline. Then we choose most commonly used query framework --  entropy-based (ET) uncertainty sampling, which is known to generalizing easily to probabilistic multi-label classifiers \citep{settles2009active}. Specifically, here we choose a hard-threshold for uncertainty sampling and only query samples above that threshold.
To conduct a thorough empirical study on how our algorithm performs with various AL methods remains to be a future direction.

We use ET(x) to denote entropy based uncertainty sampling with a threshold x and NA/LC(x) to denote the weak label setting where x is the noise level. Furthermore, we use ``passive'' to denote uniform sampling with the base NO-WL AL.

In each experiment, we average the results over 5 different random seeds. 
%
In each independent repetition, We randomly shuffle the whole training dataset, choosing 600 samples (1\% of the dataset) as validation set and the rest as training samples coming as stream. 
We set the model update block length as 1000.

\subsection{Results}


\begin{figure}[t]
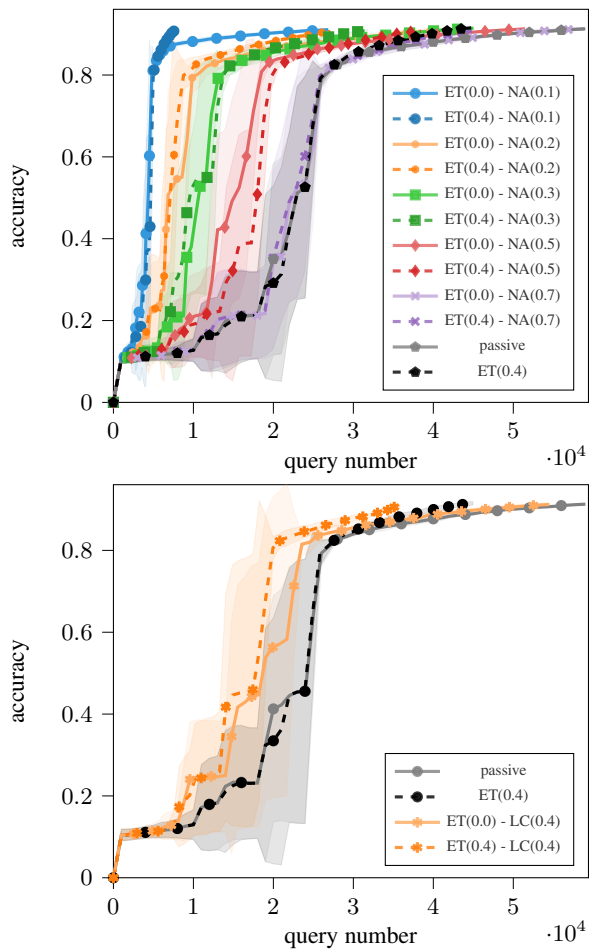

    \begin{subfigure}[b]{0.47\textwidth}
        \centering
        \input{plots/noisy_new.tex}
        \label{fig: noisy_query}
    \end{subfigure}
    \begin{subfigure}[b]{0.47\textwidth}
        \centering
        \input{plots/informative_new.tex}
        \label{fig: informative_queryl}
    \end{subfigure}
    \caption{\ding{182} - Performance comparison under uniform noisy (top) and informative classifier (bottom) weak labeler.}
    \label{fig:synt}
\end{figure}

We start considering experiment \ding{182}. Figure~\ref{fig:synt}(top) shows that our algorithm is able to leverage the weak labels when the noise level is low (below $0.5$) and greatly outperforms both the baseline (passive and ET(0) learners). The gain in query complexity is significant, up to a 80\% save in queries in the best case. In addition, this figure also show that our algorithm preserves the advantage of uncertainty sampling except the $0.5$ noisy case.  One possible reason is that $0.5$ is in the most ambiguous case -- if the noise ratio is low then we can easily take advantage of weak labels, if it is high then we can quickly detect that and switch to NOWL mode which can take advantage of uncertainty sampling. Finally, it is also worth to notice that Figure~\ref{fig:synt}(top) shows that our algorithm is robust to high levels of noise where it achieves similar performance as passive learning.

The uniform label noise fails to consider the case that the weak labels are more accuracy on those sample close to decision boundary. In fact, you may notice that accuracy increase rate becomes as flat as the passive learning in the later horizon, since the $\berr_m(f^*)$ decreases faster then $\hinerr_m$ and the algorithm ultimately switch to \textsc{NOWL} mode. Therefore, in Figure~\ref{fig:synt}(bottom), we test on this carefully designed informative classifier and further demonstrate the advantage of our algorithm in more structured weak labelers.
Without applying uncertainty sampling techniques, the advantage gained by leveraging weak labels will diminish when the active sampling set is shrinking. Only by incorporating this base AL strategy, the algorithm can get full advantage of the property of weak labeler. 
%

Figure~\ref{fig: pretrain_queryl} reports the results in scenario \ding{183}. Even in this more realistic experiment, similarly to the uniform noise experiment, WL-AC is able to leverage the weak labelers and reduce the query complexity to about 40\% of the one of the passive learner and 56\% of the one of the active learner. This implies not only the theoretical soundness of WL-AC, but the potential practical usage of our algorithm.
These experiments confirm the theoretical findings and show that WL-AC is effective in leveraging ``good'' hints while being robust to wrong hints.

\paragraph{Accuracy regarding to unlabeled sample complexity} We postpone the plots of the accuracy regarding to the number of observed unlabeled samples in Appendix~\ref{ sec: experiment unlabel results (supp)}, all of which show that our algorithm guarantee the similar accuracy as the passive one in term of the unlabeled samples.

\begin{figure}[t]
    \centering
    \input{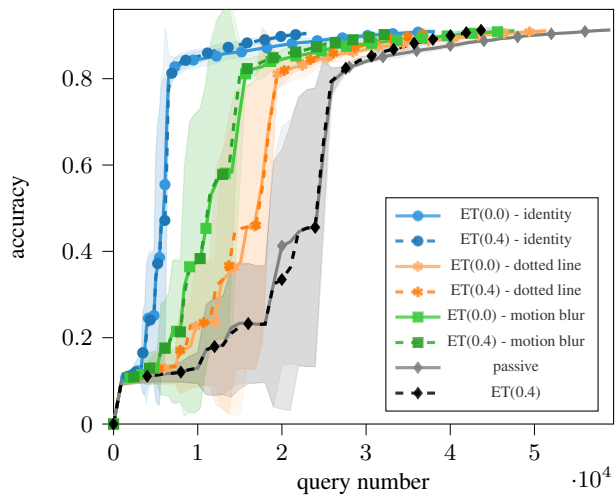}
    \caption{\ding{183} - Performance comparison under pre-trained classifier weak labeler.}
    \label{fig: pretrain_queryl}
\end{figure}

\section{Conclusions}
We introduced a novel algorithm, called WL-AC, for active learning with weak labelers that has both strong theoretical guarantees and good empirical performance. From a theoretical perspective, we showed that the query complexity of WL-AC is no worse than AC and, in certain problem instance, the access to weak labelers can lead to a significant reduction of the query complexity. Furthermore, we showed that WL-AC can be easily extended to practical scenarios where complex models (e.g., neural networks) are required. Our experiments on corrupted MNIST support the theoretical findings and show a significant improvement (more than 30\%) in query complexity compared to standard active learning methods (i.e., not using weak labels).

There are still many open questions. From the theoretical perspective, a natural combine our methods with the difference-classifier-based methods or find a better WL quality characterization that can establish a clear definition between these two. From the practical perspective, a more comprehensive numerical evaluation on incorporating our templates with various up-to-date AL in deep learning methods can further improve our understanding. 

\bibliographystyle{apalike}
\bibliography{ref}

\onecolumn
\aistatstitle{Appendix}
\tableofcontents
\newpage

\section{Appendix: Related works}
\label{sec: related works (supp)}

Active learning with multiple labeling sources has been studied in various setting. One common direction is to assume explicit structural labelers with various label cost and localized expertise, and then to adaptively select the proper labelers by modeling them during active learning. \citep{yan2012active, yan2014learning,fang2014active,huang2017cost}. But most of those works fail in providing rigorous statistically guarantees on their results and their strategy are mostly labeler-wise, which is too coarse compared to instance-wise studies as in \cite{zhang2015active}. Besides those empirical results, there are another line of theoretical works, which assumes the non-parametric settings. That is, those labelers are generally adhere to the rule that "similar samples have similar labels". \cite{urner2012learning,yan2016active}. Note that this notion can be viewed as one variants of our ``low local variance" WL assumption. Finally, all those works except the \cite{zhang2015active} we mentioned in main paper consider the pool-based setting, which is different from our streaming-based setting.

Online learning and bandits is another major topic in interactive learning literature which aims to minimize the regret regarding to best policy over time. As an counterpart of weak labeler setting in active learning, online learning with the help of loss predictors has been widely studied over the past decades. Many results have been achieved with full information feedback \citep{rakhlin2013online,steinhardt2014adaptivity}, as well as partial information feedback such as multi-armed bandits \cite{rakhlin2013online,wei2018more}. Our techniques are mainly inspired by the stochastic contextual bandits setting in the recent state-of-the-art result in \cite{https://doi.org/10.48550/arxiv.2003.01922}, specifically, we use the same doubly-robust estimator to reduce the loss estimator variance, and we borrow the high-level idea of starting an adaptive pure exploration phase for unknown hint quality (which is weak label quality in our setting).

\section{Appendix: Notations for analysis}
We define more notations here that will be used in the following analysis.
\begin{itemize}
    
    \item Let $\calD_*$ and $\calD_\WL$ be the underlying joint sampling distribution where each $(x,y)$, $(x,y^\WL)$ drawn from. 
    And denote the corresponding conditional probability distribution as $\fP_*(\cdot|x), \fP_\WL(\cdot|x)$. Most times, we will neglect the subscription in $\E$ and use that to the represent the expectation of all random variables inside the brackets. 
    \item The regret of a classifier $h \in \mathcal{H}$ relative to another $h^{\prime} \in \mathcal{H}$ is $\reg\left(h, h^{\prime}\right):=\operatorname{err}(h)-\operatorname{err}\left(h^{\prime}\right)$, and the analogous empirical regret on $S$ is $\reg\left(h, h^{\prime}, S\right):=\operatorname{err}(h, S)-\operatorname{err}\left(h^{\prime}, S\right)$. When the second classifier $h^{\prime}$ in (empirical) regret is omitted, it is taken to be the (empirical) error minimizer in $\mathcal{H}$.
    \item Define the expected regret associated to $\err(h,\Tilde{Z}_m)$ and $\reg(h,h',\Tilde{Z}_m)$. \\
    \begin{align*}
        \reg_m^{\ddagger}\left(h, h^{\prime}\right):=& \E_X\left[\left(\one\left(h(X) \neq h_m(X)\right)-\one\left(h^{\prime}(X) \neq h_m(X)\right)\right) \one\left(X \notin D_m\right)\right]+\\
        & \E_{X, Y}\left[\left(\one(h(X) \neq Y)-\one\left(h^{\prime}(X) \neq Y\right)\right) \one\left(X \in D_m\right)\right], \\
        \widetilde{\reg_m}\left(h, h^{\prime}\right)
        :=& \frac{1}{\sum_{j=1}^m L_j} \sum_{j=1}^m L_j \reg_j^{\ddagger}\left(h, h^{\prime}\right) .
    \end{align*}
    \item Let $Z_m$ be the subset of $\Tilde{Z}_m$ that are actually being queried. 
    \item In some proof step, we use $\approx$ to hide the constant factor since we only care about the order.
    \item For convenience, we use $\sum_Z$ to denote $\sum_{(x,y,y^\WL,w) \in Z}$ for any collection dataset $Z$.
    \item We use $\E_t[\cdot]$ to denote the expectation at time $t$ (or sample $t$) condition on the past. 
    \item We use $m(i)$ to denote the block any $i$-th samples located.
    \item Let  $\Dot{N}_m = \E\left[\sum_{t =\tau_m}^{\Dot{\tau}_m} \one[ y_t \text{ is queried}] \right]$, which is the expected query number in Phase 1 (Algo.~\ref{algo: hint-eval}) in any block $m$.
    
\end{itemize}
\section{Appendix: WL-AC (Theoretical version)}
\label{sec: algo_theory (supp)}

Here we present a detailed theoretical version of WL-AC under the templates in Algo.~\ref{algo: main}

\begin{algorithm}[H]
\small
\caption{WL-AC (theoretical version)}
\label{algo: main-inefficient (supp)}
\begin{algorithmic}[1]
\STATE \textbf{Input: }Constants $c_1, c_2, c_3$, confidence $\delta$, error radius $\gamma$, parameters $\alpha, \beta, \xi$ for (OP).  Data collection scheduled block length $L_1,L_2,\ldots$ satisfying $L_{m+1} \leq \sum_{j=1}^m L_j$ and $L_1 = 3$. Strong labeler $\calO$ and weak labeler $\calW$. Candidate hypothesis set $\calH$. 
\STATE \textbf{initialize:} epoch $m=0, \tilde{Z}_{0}:=\emptyset, \Dot{Z}_0: = \emptyset, \Delta_{0}:=c_{1} \sqrt{\epsilon_{1}}+c_{2} \epsilon_{1} \log 3$, where
$$
\epsilon_{m}:=\frac{32\left(\log (|\mathcal{\calH}| / \delta)+\log(\sum_{j=1}^m L_j)\right)}{\sum_{j=1}^m L_j}.
$$

\FOR{ $m = 1,2,\ldots, M$}
    \STATE \textcolor{orange}{Phase 1: WL evaluation and query probability assignment}
    \STATE Compute the \NWL-leveraged query probability $P_{m}= \textbf{\textsc{OP}}(\textsc{Use-WL} = False$) in \ref{algo: op} 
    \STATE Call \textbf{\textsc{WL-Eval}} in Algo.~\ref{algo: hint-eval}, and obtain the current \WL~evaluation dataset $\Dot{Z}_m$, the flag $\textsc{Use-WL}_m$ which decides whether we should transit to NO-WL mode or not and the pessimistically estimated conditional WL error $\Dot{\hinerr}_m$ across $D_m$. 
    \STATE Compute the WL leveraged query probability $\Dot{P}_m(\cdot) = \textbf{\textsc{OP}}(\textsc{Use-WL}_m, \Dot{\hinerr}_m)$
    \STATE Set the stopping time of Phase 1 as $\Dot{\tau}_m$ and the corresponding length as $\Dot{L}_m$
    \STATE \textcolor{orange}{Phase 2: Train data collection based on planed query probability}
    \STATE Set $S = \emptyset$
    \FOR{$t = \Dot{\tau}_m +1, \ldots, \Dot{\tau}_m + L_m$}
        \IF{$x_t \in D_m$}
            \STATE Draw $Q_t \sim \text{Bernoulli}(\Dot{P}_m(x))$ and query label $y_t$ if $Q_t=1$
            \STATE Update the set of examples:
            \begin{align*}
                S:=
                \begin{cases}
                    S \cup\left\{\left(x_t, y_t, y^\WL_t,1 / \Dot{P}_{m}\left(x_t \right)\right)\right\}, & Q_t=1 \\
                    S \cup\left\{x_t, 1,1,0\right\}, & \text { otherwise. }
                 \end{cases}
            \end{align*}
        \ELSE
            \STATE $S:=  S \cup\left\{\left(x_t, h_m\left(x_t\right),h_m\left(x_t\right), 1\right)\right\} .$ 
        \ENDIF
    \ENDFOR
    \STATE Set $\tilde{Z}_m = \tilde{Z}_{m-1} \cup S$
    \STATE \textcolor{orange}{Phase 3: Model and disagreement region updates using collected train data }
    \STATE Calculate the estimated error for all $h \in \calH$ using shifted double robust estimator
        \begin{align*}
            \err(h,\tilde{Z}_m) =
            \begin{cases}
            \sum_{(x,y,y^\WL,w) \in \tilde{Z}_m} \left(\one[h(x) \neq y] - \one[h(x) \neq y^\WL]\right) w + \one[h(x) \neq y^\WL] \quad &\textsc{USE-WL}_m = True\\
            \sum_{(x,y,y^\WL,w) \in \tilde{Z}_m} \one[h(x) \neq y]w & \NWL
            \end{cases}
        \end{align*}
    \STATE Calculate the empirical best hypothesis $h_{m+1}: = \argmin_{h \in \calH} \err(h,\tilde{Z}_m)$
    \STATE Update the active hypothesis set (hypothesis that always include the best hypothesis)
    \begin{align*}
        & \Delta_m : = c_1 \sqrt{\epsilon_m \err(h,\tilde{Z}_m)} + c_2 \epsilon_m \log (\sum_{j=1}^m L_j) \\
        & A_{m+1} : = \left\{ \err(h,\tilde{Z}_m) - \err(h_{m+1},\tilde{Z}_m) \leq \gamma \Delta_m \right\}\\
        & D_{m+1} : = \dis( A_{m+1}) = \{ x \in \calX | \exists h,h' \in A_{m+1}, h(x) \neq h'(x)\}
    \end{align*}
    \STATE Set the end time of the current block as $\tau_{m} = t = \Dot{\tau}_m + L_m$
\ENDFOR
\RETURN $h_M$
\end{algorithmic}
\end{algorithm}
Note that we write out the Line~\ref{line: wl-eval} in the templates into more details, instead of merging into the single WL-EVAL. The various constants in Algo\ref{algo: main-inefficient (supp)} must satisfy:\\
\begin{equation*}
\begin{gathered}
\alpha \geq 1, \quad \eta \geq 864, \quad \xi \leq \frac{1}{8 n \epsilon_M \log n}, \quad \beta^2 \leq \frac{\eta}{864 \gamma n \epsilon_M \log n}, \quad \gamma \geq \eta / 4 \\
c_1 \geq 6 \alpha, \quad c_2 \geq \eta c_1^2 / 2 + 13, \quad c_3 \geq 1
\end{gathered}
\end{equation*}

\begin{algorithm}[H]

\small
\caption{\WL~evaluation for block $m$ (WL-EVAL, theoretical version)}
\label{algo: hint-eval}
\begin{algorithmic}[1]
\STATE \textbf{Input: } $D_m$, $\err(h_m, \Tilde{Z}_{m-1})$,$\Delta_{m-1}$, $L_m$, the current collected \WL~evaluation samples $\Dot{Z}_{m-1}$ and the query probability without \WL~denoted as $P_m$
\STATE Set $\berr(h_m, \Tilde{Z}_{m-1}) = \err(h_m, \Tilde{Z}_{m-1}) + \Delta_{m-1}$
\STATE \textcolor{orange}{Step1: Check if the current estimated best error is already good enough}
\STATE Set $N_m = \E[P_m(x)]$, which is the expected query number of block $m$ without leveraging the WL \label{line: N_m}
\IF{$N_m \leq \left(\frac{6*2\ln(2M\log(n)/\delta)}{\berr(h_m, \Tilde{Z}_{m-1})}   - |\Dot{Z}_{m-1}|\right) \E[\one[x \in D_m]]$  }
    \RETURN $\textsc{Use-WL} = \text{False}, \Dot{\hinerr}_m = 1, \Dot{Z}_m = \Dot{Z}_{m-1}$
\ENDIF  
\STATE \textcolor{orange}{Step2: Check if the WL performance is worse than $\berr(h_m, \Tilde{Z}_{m-1})$}
\STATE Initialize $k = K_m^s =  \lceil\log(\frac{6}{ \berr(h_m, \Tilde{Z}_{m-1})}\rceil$ and initialize $\Dot{S} = \emptyset$ 
\STATE Get $\kappa_m = \kappa(D_m)$ from $\calW$ as defined in \ref{ass: hint oracle}
\STATE Draw $\max\{(2^{k+1}\ln(2M\log(n)/\delta)-|\Dot{Z}_{m-1}| ),0\} $ number of new unlabeled samples and query labels for those inside $D_m$. For each sample at $t$, add them as 
    \begin{align*}
        \Dot{S}:=
        \begin{cases}
            \Dot{S} \cup\left\{x_t, y_t, y^\WL_t\right\}, & Q_t=1 \\
            \Dot{S} \cup\left\{x_t, 1,1\right\}, & \text { otherwise. }
         \end{cases}
    \end{align*}
and update $\ddot{Z}_{m,k} = \Dot{Z}_{m-1} \cup \Dot{S}$
\STATE Calculate the empirical mean $\wh{\hinerr}_{m,k} = \frac{1}{|\ddot{Z}_{m,k}|} \sum_{(x,y,y^\WL) \in \ddot{Z}_{m,k}} \one[y^\WL \neq y \wedge x \in D_m]$ 
\STATE Calculate the pessimistic overall estimation $\overline{\hinerr}_{m,k} =  \min\{\wh{\hinerr}_k + 2^{-k}, \E[x \in D_m] \}$ 
\IF{$\overline{\hinerr}_k \geq \berr(h_m, \Tilde{Z}_{m-1})$}
    \RETURN $\textsc{Use-WL} = \text{False}, \Dot{\hinerr}_m = 1,  \Dot{Z}_m = \ddot{Z}_{m,k}$
\ENDIF
\STATE \textcolor{orange}{Step3: Get a more precise estimation on WL performance}
\IF{$\max\{4N_m  \min\left\{\frac{ \kappa_m \overline{\hinerr}_{m,k}}{\E[\one[x \in D_m]]},1\right\}, L_m P_{m,\min}\E[\one[x \in D_m]]\} \leq \E[x \in D_m]2^{k+1}\ln(2M\log(n)/\delta)$ }
    \RETURN $\textsc{Use-WL} = \text{True}, \Dot{\hinerr}_m =  \min\left\{\frac{ \kappa_m \overline{\hinerr}_{m,k}}{\E[\one[x \in D_m]]},1\right\},  \ddot{Z}_{m,k}$ 
\ENDIF
\FOR{$k = K_m^s+1,\ldots$}
\STATE Initialize $\Dot{S} = \emptyset$ 
\STATE Draw $\max\{(2^{k+1}\ln(2M\log(n)/\delta)-|\ddot{Z}_{m,k-1}| ),0\} $ number of new unlabeled samples and query labels for those inside $D_m$. For each sample at $t$, add them as 
    \begin{align*}
        \Dot{S}:=
        \begin{cases}
            \Dot{S} \cup\left\{x_t, y_t, y^\WL_t\right\}, & Q_t=1 \\
            \Dot{S} \cup\left\{x_t, 1,1\right\}, & \text { otherwise. }
         \end{cases}
    \end{align*}
and update $\ddot{Z}_{m,k} = \ddot{Z}_{m,k-1} \cup \Dot{S}$
\STATE Calculate the the optimistic estimation $\overline{\hinerr}_{m,k}$ using $\ddot{Z}_{m,k}$ as before
\IF{$\max\{4N_m \min\left\{\frac{ \kappa_m \overline{\hinerr}_{m,k}}{\E[\one[x \in D_m]]},1\right\}, L_m P_{m,\min}\E[\one[x \in D_m]]\} \leq \E[x \in D_m]2^{k+1}\ln(2M\log(n)/\delta)$ }
    \RETURN $\textsc{Use-WL} = \text{True}, \Dot{\hinerr}_m = \min\left\{\frac{ \kappa_m \overline{\hinerr}_{m,k}}{\E[\one[x \in D_m]]},1\right\},  \ddot{Z}_{m,k}$ 
\ENDIF
\ENDFOR
\end{algorithmic}
\end{algorithm}

As described in the templates, there are two purposes for WL-EVAL: \textbf{(1)} Deciding whether to transit to \NWL~mode by comparing the WL error with estimated $\berr(h_m, \tilde{Z}_{m-1})$; \textbf{(2)} Pessimistically estimating the conditional WL error if USE-WL. To achieve this, we have three steps. In Step 1, we compute the required number to estimate $\E[ \one[y^\WL \neq y | x] \one[ x \in D_m]]$ with sufficient accuracy to compare with $\berr(h_m, \tilde{Z}_{m-1})$. If such number is too large, which indicates  $\berr(h_m, \tilde{Z}_{m-1})$ has already been sufficiently small, then we transit to \NWL~mode without collecting any samples in WL-EVAL. Otherwise, we collect the required number of samples and do the comparison with $\berr(h_m, \tilde{Z}_{m-1})$ in Step 2. If we pass the comparison, then we further estimate the conditional WL error in Step 3.

\newpage
\begin{mdframed}
\label{algo: op}
\small
{\centering \textbf{Optimization Problem} (OP, theoretical version) to compute $P_m$ \\}
\textbf{Input: } Estimated WL performance upper bound $\Dot{\hinerr}_m$, $\textsc{use-WL}$
\begin{align*}
    &\min_{P} \quad \E_{X}\left[\frac{1}{1-P(X)}\right]\\
    \text { s.t. } \quad 
    &\forall h \in \mathcal{
    \calH}, \quad 
    \begin{cases}
        \E_{X}\left[\frac{\one\left(h(x) \neq h_m(x) \wedge x \in D_{m}\right)}{P(X)}\right] \Dot{\hinerr}_m \leq b_{m}^\WL(h) 
        & \textsc{Use-WL}\\
        \E_{X}\left[\frac{\one\left(h(x) \neq h_m(x) \wedge x \in D_{m}\right)}{P(X)}\right] \leq b_{m}^{\NWL}(h) 
         & \text{otherwise}
    \end{cases}
    \\
    \text { where } \quad &\one_{h}^{m}(x) =\one\left(h(x) \neq h_m(x) \wedge x \in D_{m}\right), \\
    &b_{m}^\NWL(h) =2 \alpha^{2} \E_{x}\left[\one_{h}^{m}(x)\right]+2 \beta^{2} \gamma \reg\left(h, h_m, \tilde{Z}_{m-1}\right) \tau_{m-1} \Delta_{m-1}+\xi \tau_{m-1} \Delta_{m-1}^{2}, \\
    &b_{m}^\WL(h) = \frac{1}{2}(b_{m}^\NWL(h)- \E_{X}\left[\one_{h}^{m}(x)\right]) \\
    &P_{\min , m} =\min \left(\frac{c_{3}}{\sqrt{\frac{(\sum_{j=1}^{m-1}L_m) \err\left(h_m, \tilde{Z}_{m-1}\right) }{n\epsilon_{M}}}+\log(\sum_{j=1}^{m-1}L_m)}, \frac{1}{2}\right)
\end{align*}
\end{mdframed}

\begin{remark}[Efficient implementation]
As discussed in \cite{https://doi.org/10.48550/arxiv.1506.08669}, this theoretical version of WL-AC algorithm itself can actually be implemented efficiently by solving $\one[x \in D_m]$ with ERM oracle and estimating the expected distribution with empirical distribution under the small scale models. 
\end{remark}

\subsection{A more detailed explanation on term $N_m$ and its relationship with NOWL-AC}

Readers may notice that we abuse the notation $N_m$ a little bit. Here we define $N_m = \E[ P_m(x) | \text{past history in WL-AC}]$ in Line~\ref{line: N_m} while in Section~\ref{sec: result(main)} we define $N_m = \E[P_m(x) | \text{past history in NOWL-AC}]$. It is hard to argue that how exactly these two are closed to each other because each OP at block $m$ depends on $\reg(h,h_{m}, \tilde{Z}_{m-1})$ which is effected by the randomness during querying in Phase 2. Nevertheless, by the generalization guarantees shown in Theorem~\ref{them: general}, we can guarantee that these two are roughly in the same order. Specifically, from the theorem we have for any $h$,
\begin{align*}
    & \reg(h,h_{m},\tilde{Z}_{m-1}) \leq \frac{3}{2} \widetilde{\reg}_m(h,h^*) + \frac{\eta}{2}\Delta_{m-1} \\
    &  \reg(h,h_{m},\tilde{Z}_{m-1}) \geq\frac{1}{2} \widetilde{\reg}_m(h,h^*) - \frac{\eta}{4}\Delta_{m-1},
\end{align*}
no matter whether this $\tilde{Z}_{m-1}$ comes from NOWL-AC or WL-AC. Therefore, replace this upper bound and lower bound into the NOWL mode of WL-AC and the corresponding constraints in NOWL-AC, we have roughly
\begin{align*}
    b_m^\NWL 
    =  \widetilde{\Theta}(\E_{x}\left[\one_{h}^{m}(X)\right] + \widetilde{\reg}_m\left(h, h^*\right) \tau_{m-1} \Delta_{m-1} +  \tau_{m-1} \Delta_{m-1}^2),
\end{align*}
regardless of the randomness in query, in both WL-AC version and its NOWL-AC counterparts.  Thus this $N_m$ in algorithm is in the same order as we define in main paper, and therefore preserves benefits in original AC.

We further give an example raised from the original AC to demonstrate how we keep the parismoniousness property in some benign cases.

\paragraph{Example restatement}

 Let $U\{-1,1\}$ denote the uniform distribution on $\{-1,1\}$. The data distribution $\mathcal{D}(\mathcal{X}, \mathcal{Y})$ and the classifiers are defined jointly:
 \begin{itemize}
     \item With probability $\epsilon$,
        $y=h^*(x), \quad h(x) \sim U\{-1,1\}, \forall h \neq h^* .$
    \item With probability $1-\epsilon$, 
        \begin{align*}
            & y \sim U\{-1,1\}, \quad h^*(x) \sim U\{-1,1\},\\
            & h_r(x)=-h^*(x) \quad \text{ for some $h_r$ drawn uniformly at random from } \mathcal{H} \backslash h^*,\\
            & h(x)=h^*(x),  \forall h \neq h^* \wedge h \neq h_r.
        \end{align*}
 \end{itemize}
Under this setting, the best classifier has $\err(h^*) = \frac{1-\epsilon}{2}$ and others has $\err(h) = \frac{1}{2}$. It is easy to see that only $\epsilon$ case is informative for us to distinguish $h^*$. The traditional disagreement based algorithm will waste a lost of queries on uninformative sample due to the slow shrinking disagreement region.

Now in \cite{https://doi.org/10.48550/arxiv.1506.08669}, they show that $P_m(x) = P_{\min,m}, \forall x \in D_m$ is a feasible solution when $|\calH|$ is large because this choice of $P(x)$ satisfies
\begin{align*}
    \E_{X}\left[\frac{\one\left(h(x) \neq h_m(x) \wedge x \in D_{m}\right)}{P(X)}\right] \leq \order(\xi) \leq \order(\xi \tau_{m-1} \Delta_{m-1}^{2}). \text{ (see details in original paper)}
\end{align*}

The same argument works also in our case, because $\tau_{m-1}$, $\Delta_{m-1}$ does not affected by the randomness of sampling. That is,
\begin{align*}
    \E_{X}\left[\frac{\one\left(h(x) \neq h_m(x) \wedge x \in D_{m}\right)}{P(X)}\right] \Dot{\hinerr}_m 
    &\leq \E_{X}\left[\frac{\one\left(h(x) \neq h_m(x) \wedge x \in D_{m}\right)}{P(X)}\right] \\
    & \leq \order(\xi)
    \leq \frac{1}{2} \xi \tau_{m-1} \Delta_{m-1}^{2}
    \leq b_m^\WL(h)
\end{align*}

\section{Appendix: Guarantees of pessimistic estimation \WL error}

Define the event that that the error \WL has been estimated within proper confidence region


\begin{align*}
    &\calE_\WL\\
    &:= \left\{ \wh{\hinerr}_{m,k} \in \left[\frac{1}{2}\E_{x,y,y^\WL} \one[x \in D_{m} \wedge y^\WL \neq y ] -\frac{2^{-k}}{2},2\E_{x,y,y^\WL} \one[x \in D_{m} \wedge y^\WL \neq y ] + 2^{-k}\right], \forall k \leq K_m^e,  \forall m \in [M] \right\}
\end{align*}

\begin{lemma}
    By running Algo.~\ref{algo: hint-eval} in each block, $\Pr[\calE_\WL] \geq 1-\delta$.
\end{lemma}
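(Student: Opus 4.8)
The plan is to condition on the randomness that fixes the region $D_m$, reduce $\wh{\hinerr}_{m,k}$ to an empirical mean of i.i.d.\ Bernoulli variables, apply a two-sided multiplicative concentration inequality for each fixed pair $(m,k)$, and then union bound over all blocks and all rounds $k$.

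First I would fix $m$ and $k$ and condition on the $\sigma$-algebra generated by the Phase~2 data $\tilde{Z}_{m-1}$, which alone determines $D_m$ and hence the target quantity $p_m := \E_{x,y,y^\WL}\big[\one(x \in D_m \wedge y^\WL \neq y)\big]$. The key structural fact I must verify is that, given this conditioning, every triple $(x,y,y^\WL)$ in $\ddot{Z}_{m,k}$ — both the fresh draws of round $k$ and the reused evaluation samples in $\Dot{Z}_{m-1}$ — is an i.i.d.\ draw from $\calD_* \times \fP_\WL$ that is \emph{independent} of $D_m$. This holds because the weak-label evaluation stream never feeds into the model or region update (only $\tilde{Z}$ does) and occupies positions of the i.i.d.\ input stream disjoint from those used in Phase~2. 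Consequently $\wh{\hinerr}_{m,k} = \tfrac{1}{n_k}\sum_{i=1}^{n_k} B_i$ with $B_i \sim \mathrm{Bernoulli}(p_m)$ i.i.d., where $n_k := |\ddot{Z}_{m,k}| = 2^{k+1}\iota$ and $\iota := \ln(2M\log(n)/\delta)$. The sample schedule is chosen precisely so that the additive slack matches the sample count: $n_k \cdot 2^{-k} = 2\iota$.

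With this reduction, the upper deviation $\wh{\hinerr}_{m,k} \ge 2 p_m + 2^{-k}$ is the event $\sum_i B_i \ge 2\mu + 2\iota$ with $\mu = n_k p_m$, to which I would apply Bernstein's inequality with deviation $t = \mu + 2\iota$; the exponent reduces to the inequality $3\mu^2 + 4\mu\iota + 8\iota^2 \ge 0$, which always holds, yielding a tail bound of $e^{-\iota} = \delta/(2M\log n)$. For the lower deviation $\wh{\hinerr}_{m,k} \le \tfrac12 p_m - \tfrac12 2^{-k}$, i.e.\ $\sum_i B_i \le \tfrac12\mu - \iota$, I would use the standard sub-Poisson lower tail $\Pr[\sum_i B_i \le \mu - t] \le e^{-t^2/(2\mu)}$ with $t = \tfrac12\mu + \iota$ (the event being vacuous when $t > \mu$); the exponent $t^2/(2\mu)$ is minimized over $\mu$ at $\mu = 2\iota$, where it equals exactly $\iota$, so this tail is also at most $\delta/(2M\log n)$.

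Finally I would union bound. Since $n_k = 2^{k+1}\iota$ cannot exceed the $n$ available stream samples, each block uses at most $K_m^e \le \log_2 n$ rounds, so there are at most $2 \cdot M \cdot \log n$ tail events, each of probability at most $\delta/(2M\log n)$; summing gives $\Pr[\calE_\WL] \ge 1-\delta$, which is exactly why the factor $2M\log(n)$ appears inside $\iota$. The main obstacle I anticipate is not the concentration arithmetic but the measurability/independence step: one must argue rigorously that reusing the evaluation samples $\Dot{Z}_{m-1}$ collected in earlier blocks, together with the adaptive (history-dependent) sizes $n_k$, does not break the i.i.d.-given-$D_m$ structure, which I would formalize via a filtration in which $D_m$ is measurable with respect to the Phase~2 data while the evaluation draws remain conditionally fresh.
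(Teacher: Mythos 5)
Your proposal is correct and follows essentially the same route as the paper: reduce $\wh{\hinerr}_{m,k}$ to an empirical mean of (conditionally) i.i.d.\ Bernoulli indicators, apply a two-sided concentration bound for each fixed $(m,k)$, exploit the schedule $|\ddot{Z}_{m,k}| = 2^{k+1}\ln(2M\log(n)/\delta)$ to turn the additive deviation into the factor-of-two multiplicative window with slack $2^{-k}$, and union bound over the at most $M\log n$ pairs. The only (cosmetic) divergence is the lower tail, where the paper invokes the empirical Bernstein inequality to express the deviation in terms of $\wh{\hinerr}_{m,k}$ while you bound the multiplicative-deviation event directly via the sub-Poisson Chernoff tail; your version actually tracks the constants more carefully, and your explicit attention to the independence of the evaluation stream from $D_m$ addresses a point the paper leaves implicit.
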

\begin{proof}
By using Bernstein inequality and the union bound over all blocks, we have with probability $1-\delta/2$, for all block $m$ and $k \in [K_m^s,K_m^e]$,  
\begin{align*}
    &\frac{1}{|\dot{Z}_{m,k}|}\sum_{ (x,y,y^\WL) \in \dot{Z}_{m,k}}\one[y^\WL \neq y \wedge x \in D_{m}]
    - \E[\one[y^\WL \neq y \wedge x \in D_{m}]]\\
    & \leq \sqrt{\frac{2 \E[\one[y^\WL \neq y \wedge x \in D_{m}]]\ln(4M\log(n)/\delta)}{|\dot{Z}_m|}} + \frac{3 \ln(2M \log(n)/\delta)}{|\dot{Z}_m|}
\end{align*}
On the other hand, by applying empirical Bernstein inequality and the union bound,  we have with probability $1-\delta/2$, for all $k,m$
\begin{align*}
    & \E[\one[y^\WL \neq y \wedge x \in D_{m}]] - \frac{1}{|\Dot{Z}_{m,k}|}\sum_{  (x,y,y^\WL) \in \Dot{Z}_{m,k}}\one[y^\WL \neq y \wedge x \in D_{m}]\\
    & \leq  \sqrt{\frac{\frac{2}{|\Dot{Z}_{m,k}|}\sum_{  (x,y,y^\WL) \in \Dot{Z}_{m,k}}\one[y^\WL \neq y \wedge x \in D_m]\ln(4M\log(n)/\delta)}{|\Dot{Z}_{m,k}|}} + \frac{3 \ln(2M \log(n)/\delta)}{|\Dot{Z}_{m,k}|}
\end{align*}
Now by the choice $|\Dot{Z}_{m,k}| = 2^{k+1}\ln(2 M \log(n)/\delta)$, we get the desired bound.
\end{proof}

\begin{corollary}
When $\calE_\WL$ holds, for any m and $k \leq K_m^e$, $\overline{\hinerr}_{m,k} = \min\{ 2\wh{\hinerr}_m + 2^{-k},1\}$ is an pessimistic estimation (upper bound) of $\E\left[\one[x \in D_m \wedge y^\WL \neq y ]\right]$. 
\end{corollary}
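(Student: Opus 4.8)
The plan is to read the corollary off directly from the lower-confidence-bound half of the event $\calE_\WL$, supplemented only by the trivial observation that the quantity being estimated never exceeds $1$. Throughout I interpret the $\wh{\hinerr}_m$ appearing in the statement as $\wh{\hinerr}_{m,k}$, matching the index used in $\calE_\WL$ and in the preceding lemma.

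First I would abbreviate $\mu_m := \E\left[\one[x \in D_m \wedge y^\WL \neq y]\right]$, the target quantity we wish to upper bound. Since $\mu_m$ is the expectation of a $\{0,1\}$-valued indicator, we have at once the crude bound $\mu_m \leq 1$, which is what will justify the clipping at $1$ in the definition of $\overline{\hinerr}_{m,k}$.

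Next I would invoke the event $\calE_\WL$. By its very definition it supplies, for every $m \in [M]$ and every $k \leq K_m^e$, the two-sided estimate $\tfrac{1}{2}\mu_m - \tfrac{2^{-k}}{2} \leq \wh{\hinerr}_{m,k} \leq 2\mu_m + 2^{-k}$. To establish an \emph{upper} bound on $\mu_m$ only the left inequality is relevant. Multiplying $\wh{\hinerr}_{m,k} \geq \tfrac{1}{2}\mu_m - \tfrac{2^{-k}}{2}$ through by $2$ and then adding $2^{-k}$ yields $2\wh{\hinerr}_{m,k} + 2^{-k} \geq \mu_m$.

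Finally I would combine the two inequalities $2\wh{\hinerr}_{m,k} + 2^{-k} \geq \mu_m$ and $1 \geq \mu_m$: since both arguments of the minimum dominate $\mu_m$, so does their minimum, giving $\overline{\hinerr}_{m,k} = \min\{2\wh{\hinerr}_{m,k} + 2^{-k},\, 1\} \geq \mu_m$, which is exactly the claimed pessimistic (upper) estimate of $\E\left[\one[x \in D_m \wedge y^\WL \neq y]\right]$. The argument is essentially immediate once $\calE_\WL$ is in force; there is no substantive obstacle, and the only two points requiring care are to use the lower confidence bound (rather than the upper one, which would move in the wrong direction) and to dispatch the clipping at $1$ through the trivial bound $\mu_m \leq 1$.
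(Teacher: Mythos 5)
Your argument is correct and is exactly the (implicit) argument the paper relies on: the corollary is stated without proof as an immediate consequence of the lower-confidence half of $\calE_\WL$, namely $\wh{\hinerr}_{m,k} \geq \tfrac{1}{2}\E[\one[x \in D_m \wedge y^\WL \neq y]] - \tfrac{2^{-k}}{2}$, rearranged to $2\wh{\hinerr}_{m,k} + 2^{-k} \geq \E[\one[x \in D_m \wedge y^\WL \neq y]]$, together with the trivial bound by $1$ for the clipping. Your reading of $\wh{\hinerr}_m$ as $\wh{\hinerr}_{m,k}$ is also the right way to resolve the paper's index typo.
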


By using this assumption, we get the following WL quality estimation guarantees.
\begin{lemma}[Guarantee on the estimated conditional error]
\label{lem: conditional hint error est}
Under the assumption~\ref{ass: hint oracle}, when $\calE_\WL$ holds, we have for any m and $k \in [K_m^s, K_m^e]$,
\begin{align*}
    \hinerr(D_m)
   \leq \underbrace{\min\left\{\kappa(D_m) \frac{2\wh{\hinerr}_{m,k} + 2^{-k}}{\E[\one[x \in D_m]]},1 \right\}}_{\min\left\{\kappa(D_m)\frac{\overline{\hinerr}_{m,k}}{\E[\one[x \in D_m]]},1\right\}}
    \leq \min\left\{\kappa(D_m) \left( 4\hinerr(D_m) + 3\frac{2^{-k}}{\E[\one[x \in D_m]]} \right),1\right\}
\end{align*}
Notice here $\min\left\{\kappa(D_m)\frac{\overline{\hinerr}_{m,k}}{\E[\one[x \in D_m]]},1\right\}$ is the pessimistic estimation of the \textit{conditional} WL error.
\end{lemma}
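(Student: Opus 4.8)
The plan is to prove both inequalities by sandwiching the quantities around the true joint error $\mu_m := \E\left[\one[x \in D_m \wedge y^\WL \neq y]\right]$, which is exactly the population quantity that $\wh{\hinerr}_{m,k}$ estimates. First I would introduce the pointwise conditional error $e(x) := \E_{y,y^\WL}[\one[y^\WL \neq y \mid x]]$, so that $\hinerr(D_m) = \max_{x \in D_m} e(x)$ and the \emph{average} conditional error over $D_m$ can be written as $\bar e_m := \mu_m / \E[\one[x \in D_m]]$. The whole argument reduces to two elementary facts about $e(\cdot)$ combined with the two-sided concentration furnished by the event $\calE_\WL$. Throughout, the outer $\min\{\cdot,1\}$ truncations are harmless: $\hinerr(D_m) \le 1$ always holds, so the leftmost inequality survives the truncation, and truncation only decreases the middle and right expressions, so the rightmost inequality is preserved as well.

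For the first inequality I would exploit Assumption~\ref{ass: hint oracle}. Since $\kappa(D_m) \ge 1$ bounds the ratio of the largest to smallest conditional error inside $D_m$, we have the chain
\begin{align*}
    \hinerr(D_m) = \max_{x \in D_m} e(x) \le \kappa(D_m)\,\min_{x' \in D_m} e(x') \le \kappa(D_m)\,\bar e_m,
\end{align*}
where the last step uses that the average $\bar e_m$ is at least the minimum. Rewriting $\bar e_m = \mu_m / \E[\one[x \in D_m]]$ and then invoking the lower-tail side of $\calE_\WL$, namely $\wh{\hinerr}_{m,k} \ge \tfrac12\mu_m - \tfrac{2^{-k}}{2}$ (equivalently $\mu_m \le 2\wh{\hinerr}_{m,k} + 2^{-k}$), yields
\begin{align*}
    \hinerr(D_m) \le \kappa(D_m)\,\frac{\mu_m}{\E[\one[x \in D_m]]} \le \kappa(D_m)\,\frac{2\wh{\hinerr}_{m,k} + 2^{-k}}{\E[\one[x \in D_m]]},
\end{align*}
which is the leftmost inequality.

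For the second inequality I would use the upper-tail side of $\calE_\WL$, $\wh{\hinerr}_{m,k} \le 2\mu_m + 2^{-k}$, to bound $2\wh{\hinerr}_{m,k} + 2^{-k} \le 4\mu_m + 3\cdot 2^{-k}$, and then replace $\mu_m / \E[\one[x \in D_m]] = \bar e_m$ by its trivial upper bound $\hinerr(D_m)$ (the average never exceeds the maximum), giving
\begin{align*}
    \kappa(D_m)\,\frac{2\wh{\hinerr}_{m,k} + 2^{-k}}{\E[\one[x \in D_m]]} \le \kappa(D_m)\left(4\hinerr(D_m) + 3\,\frac{2^{-k}}{\E[\one[x \in D_m]]}\right).
\end{align*}
The only subtle point — and the step I would double-check most carefully — is the direction of the $\kappa$ comparison: one must pass through the minimum conditional error (max $\le \kappa\cdot$min $\le \kappa\cdot$average), rather than comparing max to average directly, since $\kappa$ only controls ratios between individual points and the factor is applied to the smallest conditional value. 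Everything else is routine substitution from the concentration event, which is itself guaranteed (with probability $1-\delta$) by the preceding lemma and corollary.
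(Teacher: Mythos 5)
Your proof is correct and follows essentially the same route as the paper's: bound $\max_{x\in D_m} e(x) \leq \kappa(D_m)\min_{x\in D_m} e(x) \leq \kappa(D_m)\,\mu_m/\E[\one[x\in D_m]]$, then apply the lower-tail and upper-tail sides of $\calE_\WL$ for the two directions, finishing with the trivial bound of the average by the maximum. Your explicit remark on why the $\min\{\cdot,1\}$ truncations are harmless is a small addition the paper leaves implicit, but the substance is identical.
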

\begin{proof}
Recall by definition in Assumption~\ref{ass: hint oracle}, $\hinerr(D_m) =  \max_{x \in D} \E[\one[y^\WL \neq y]|x]$. Therefore we have,
\begin{align*}
     \max_{x \in D} \E[\one[y^\WL \neq y]|x]
     & \leq  \kappa(D_m) \min_{x \in D_m} \E[\one[y^\WL \neq y]|x]\\
     & \leq  \kappa(D_m) \frac{\E[\one[y^\WL \neq y \wedge x \in D_m]]}{ \E[\one[x \in D_m]]}\\
     & \leq  \kappa(D_m) \frac{ 2\wh{\hinerr}_m + 2^{k}}{ \E[\one[x \in D_m]]} 
    \\
     & \leq  \kappa(D_m) \frac{4\E[\one[y^\WL \neq y \wedge x \in D_m]] + 3*2^{-k}}{ \E[\one[x \in D_m]]} \\
     & \leq  \kappa(D_m) \frac{4\E[\one[x \in D_m]]\max_{x \in D_m} \E[\one[y^\WL \neq y]|x]  + 3*2^{-k}}{ \E[\one[x \in D_m]]}\\
     & \leq  4\kappa(D_m) \max_{x \in D} \E[\one[y^\WL \neq y]|x] + 3\kappa(D_m) \frac{2^{-k}}{\E[\one[x \in D_m]]}
\end{align*}
where the third and forth inequality comes from $\calE_\WL$.
\end{proof}
\section{Appendix: Generalization guarantees and its analysis}
\label{sec: generalization result (supp)}

\subsection{Relation with the original proofs without WL evaluation in Phase 1 (Read before going to analysis)}

Before start proving the generalization guarantees, we want to remark that our analysis is built on the original proof of \cite{https://doi.org/10.48550/arxiv.1506.08669}, so we will skip most of the proofs that are exactly the same as original one and only give analysis on those significant modifications. In order to make readers easy to connect our proofs with the original one, without loss of generality, we will ignore the samples collected in Phase 1, that is $\Dot{\tau}_m = \tau_m$ in the rest of sections, and therefore, $L_m = \tau_m - \tau_{m-1}$. In the other word, within this section, readers can regard the pessimistic estimated $\Dot{\hinerr}$ directly given by some oracle.

One subtlety to make such simplification is that, our final generalization guarantees will only depend on the number of total unlabeled data in Phase 2, that is $\{L_m\}_{m=1}^M$, but does not include the number of unlabeled data in phase 1, that is $\Dot{\tau}_m - \tau_{m-1}\}_{m=1}^M$. In order to make our excess risk result depends on the whole unlabeled sample complexity $n$ so that we can compare it with the original AC, we show that $L_m$ and $\Dot{\tau}_m - \tau_{m-1}$ are roughly as the same order. Therefore, for all the $\tau_m$ notation in the rest of analysis, after considering the Phase 1, we need to replace $\tau_m$ with $\order(\tau_m)$ (see Lemma~\ref{lem: Unlabeled sample guarantee}, which does not effect the order of those number-of-unlabeled-samples dependent results.)

\subsection{Deviation bounds and the benefits of using shifted double robust estimator }

We first show the deviation bound with or without the shifted double robust estimator. This bound is different than the original one in \cite{https://doi.org/10.48550/arxiv.1506.08669} but later we will show that we can control the deviation of the empirical regret and error terms by combining similar techniques in original paper and our carefully designed hint leveraging strategies.
\\\\
For convenience, let's let's first define the instantaneous variance of regret and error in expectation for any fixed block $m$ and any pair of hypothesis $h,h'$ (or a single hypothesis $h$) as follows
\begin{align*}
    & \regvar_\WL(h,h',m) =  \E\left[\left(\frac{2}{\Dot{P}_m(x)} \one[y^\WL \neq y] + 1\right)\one[h(x) \neq h'(x) \wedge x \in D_m] + \one[h(x) \neq h'(x) \wedge x \notin D_m]\right], \\
    & \regvar_\NWL(h,h',m) = \E\left[ \frac{1}{\Dot{P}_m(x)} \one[h(x) \neq h'(x) \wedge x \in D_m]  + \one[h(x) \neq h'(x) \wedge x \notin D_m]\right],\\
    & \errvar_\WL(h,m) = \E\left[ \frac{\one\left(x \in D_{m} \wedge y^\WL \neq y\right)}{\Dot{P}_m(x)}+ \one\left( x \in D_m \wedge h(x) \neq y\right)\right] \\
    & \errvar_\NWL(h,m) = \E\left[\frac{\one\left(x \in D_{m} \wedge h(x) \neq y\right)}{\Dot{P}_m(x)}\right],
\end{align*}
Note that subscript $\WL$ denotes the \textsc{Use-WL} mode where double robust estimator is used, otherwise the normal inverse weighted estimator is used. Again as we discussed in the first contribution in Sections~\ref{sec: algo (main)} that, the weak labeler will never hurt estimating difference between two policies (up to some constant error) since $\one[y^\WL \neq y] \leq 1$. But it will hurt estimating the error for a single classifier. Specifically, we have,

\begin{lemma}[Deviation bounds]
\label{lem: deviation bound}
 Pick $0<\delta<1 /$ e such that $|\mathcal{H}| / \delta>\sqrt{192}$. With probability at least $1-\delta$ the following holds for all $m \geq 1$. 
\begin{align*}
    &\left|\reg\left(h, h',\tilde{Z}_{m}\right)-\tilreg_{m}(h,h')\right| \\
    &\leq \sqrt{\frac{\epsilon_{m}}{\tau_{m}} \left(\sum_{ j \in M_{m,\WL}} (\tau_{j}-\tau_{j-1})\regvar_\WL(h,h',j) + 
    \sum_{ j \in M_{m,\NWL}} (\tau_{j}-\tau_{j-1}) \regvar_\NWL(h,h',j) \right)}+\frac{2\epsilon_{m}}{\Dot{P}_{\min , m}}\\
    &\left|\err\left(h, Z_{m}\right)-\overline{\err}_{m}(h)\right| \\
    &\leq \sqrt{\frac{\epsilon_{m}}{\tau_{m}} \left(\sum_{ j \in M_{m,\WL}} (\tau_{j}-\tau_{j-1}) \errvar_\WL(h,j) + 
    \sum_{ j \in M_{m,\NWL}} (\tau_{j}-\tau_{j-1})  \errvar_\NWL(h,j) \right)}+\frac{\epsilon_{m}}{\Dot{P}_{\min , m}}
\end{align*}
where $M_{m,\WL} = \left\{j| j\leq m, \textsc{Use-WL}_j = True \right\}$  and vice versa.

\textcolor{blue}{To compare, the original bounds only have \NWL~terms.}
\end{lemma}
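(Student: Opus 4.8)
The plan is to treat each of the two empirical quantities as a martingale difference sequence plus its predictable mean, and then apply Freedman's inequality (a Bernstein-type bound for martingales). Fix a pair $(h,h')$ and, for each sample index $t \leq \tau_m$ lying in some block $j=j(t)$, let $V_t$ denote the per-sample contribution to $\reg(h,h',\tilde{Z}_m)$: the difference $\ell_\text{shifted}(h(x_t),\cdot)-\ell_\text{shifted}(h'(x_t),\cdot)$ when $j \in M_{m,\WL}$, and the inverse-propensity difference $w_t(\one[h(x_t)\neq y_t]-\one[h'(x_t)\neq y_t])$ when $j \in M_{m,\NWL}$, where $w_t$ is the recorded weight. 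Conditioning on the history up to time $t$ fixes the block, the query probability $\Dot{P}_{j}$, and the predictable quantities, after which $(x_t,y_t,y^\WL_t)$ and the Bernoulli query indicator $Q_t$ are drawn; hence $\{V_t-\E_t[V_t]\}_t$ is a martingale difference sequence. I would treat the error estimator analogously with its own per-sample contributions.

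\textbf{Key steps.} First I would verify \emph{unbiasedness}: since $\E_t[w_t]=1$ for a sample in $D_j$ and the baseline $\one[h(x_t)\neq y^\WL_t]$ is present whether or not $Q_t=1$, the shifted doubly robust estimator satisfies $\E_t[V_t]=\reg_{j}^{\ddagger}(h,h')$, and averaging over $t$ recovers $\tilreg_m(h,h')$ exactly (the error case recovers $\berr_m(h)$). Second, and this is the crux, I would bound the conditional second moment $\E_t[V_t^2]$. The decisive algebraic observation is that the inverse-weighted part $\one[h(x)\neq y]-\one[h(x)\neq y^\WL]$ vanishes whenever $y=y^\WL$ and otherwise lies in $\{-1,+1\}$; consequently, for the \emph{regret} the weighted contribution is proportional to $\one[y^\WL\neq y]$ and the two baselines partially cancel, yielding $\E_t[V_t^2]\le \regvar_\WL(h,h',j)$ up to the absorbed constants, while $\NWL$ blocks give $\regvar_\NWL(h,h',j)$. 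Summing reproduces the variance proxy $\tfrac{1}{\tau_m}\sum_{j}(\tau_j-\tau_{j-1})\regvar_{(\cdot)}(h,h',j)$ inside the square root. Third, I would bound the range $|V_t-\E_t[V_t]|=O(1/\Dot{P}_{\min,m})$, since $w_t\le 1/\Dot{P}_{\min,m}$ and all indicators are bounded, which produces the additive $2\epsilon_m/\Dot{P}_{\min,m}$ term.

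\textbf{Concentration and union bound.} With the conditional-variance sum $\Sigma_m$ and range $R_m=O(1/\Dot{P}_{\min,m})$ in hand, Freedman's inequality gives, for fixed $(h,h')$ and fixed $m$, a deviation of order $\sqrt{\Sigma_m\log(1/\delta)}+R_m\log(1/\delta)$. Dividing by $\tau_m$ and calibrating the confidence budget against $\epsilon_m=32(\log(|\calH|/\delta)+\log\tau_m)/\tau_m$ yields the stated bound: the $\log(|\calH|/\delta)$ factor absorbs a union bound over $\calH$ (each $h$ compared against a fixed reference hypothesis, as in the original analysis), and the $\log\tau_m$ factor comes from a union over the epochs $m=1,2,\dots$, exactly as in the deviation lemma of \cite{https://doi.org/10.48550/arxiv.1506.08669}. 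Since every step other than the variance and unbiasedness bookkeeping is formally identical to the strong-labeler analysis there, I would cite those portions and write out only the modified computations.

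\textbf{Main obstacle.} The hard part is precisely the conditional second-moment bound in the \textsc{Use-WL} blocks: one must track the cancellation that replaces the $\one[x\in D_j]$ weight of the vanilla importance-weighted estimator by the much smaller $\one[y^\WL\neq y]$ weight for the regret, while simultaneously showing that no such cancellation occurs for the single-hypothesis error, so that $\errvar_\WL$ genuinely carries the extra $\E[\one(x\in D_j\wedge h\neq y)]$ baseline term. A secondary subtlety is that the probabilities $\Dot{P}_j$ are chosen adaptively by the optimization problem and by WL-EVAL, so I must confirm that $\Dot{P}_j$ and $\Dot{P}_{\min,m}$ are predictable with respect to the filtration at the start of block $j$; this is what legitimizes both the martingale structure and the uniform range bound.
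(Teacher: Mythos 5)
Your proposal is correct and follows essentially the same route as the paper's proof: both define the per-sample instantaneous regret/error contributions as a martingale difference sequence adapted to $\calF_i = \sigma(\{x_j,y_j,Q_j\}_{j=1}^i)$, exploit the cancellation that makes the importance-weighted part of the shifted estimator proportional to $\one[y^\WL \neq y]$ (which is exactly what yields $\regvar_\WL$ for pairs while $\errvar_\WL$ retains the extra baseline term), bound the range by $2/\Dot{P}_{\min,m}$, and invoke the Freedman-type inequality with the same union-bound calibration against $\epsilon_m$ as in the original AC analysis.
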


\begin{proof}
We will only prove the \WL~mode since the proof for \NWL~is exactly the same as the original proof.

Again our proof follows the similar steps as in the proof of Lemma 2 in the original paper with a specialized adaption to our shifted loss estimator. First we look at the concentration of the empirical regret on $\tilde{Z}_m$. To avoid clutter, we overload our notation so that $D_i=D_{m(i)}$, $h_i=h_{m(i)}$ and $P_i=P_{m(i)}$ when $i$ is the index of an example rather than a round.

For any pair of classifier $h$ and $h'$, we define the random variables for the instantaneous regrets:
\begin{align*}
    \tilde{R}_i
    & : = \one[x_i \in D_i] \biggl[\left( \frac{\one[h(x_i) \neq y_i] -\one[h(x_i) \neq y_i^\WL] }{P_i(x_i)} Q_i + \one[h(x_i) \neq y_i^\WL] \right) \\
         &\quad - \left( \frac{\one[h'(x_i) \neq y_i] -\one[h'(x_i) \neq y_i^\WL] }{P_i(x_i)}Q_i + \one[h'(x_i) \neq y_i^\WL] \right)\biggr]\\
         & \quad +  \one[x_i \notin D_i]\left(\one[h(x_i) \neq h_i(x_i)] - \one[h'(x_i) \neq h_i(x_i)]\right)
\end{align*}
and the associated $\sigma$-fields $\calF_i:=\sigma\left(\left\{X_j, Y_j, Q_j\right\}_{j=1}^i\right)$. We have that $\tilde{R}_i$ is measurable with respect to $\calF_i$. Therefore $\tilde{R}_i-\E\left[\tilde{R}_i \mid \calF_{i-1}\right]$ forms a martingale difference sequence adapted to the filtrations $\calF_i, i \geq 1$, and $x_i,y_i,Q_i$ are independent from the past. Now we want to state several properties of the instantaneous regrets.

First of all, we can simply the instantaneous regrets as 
\begin{align*}
    \tilde{R}_i
    & = \one[x_i \in D_i] \one[y_i = y^\WL] (\one[h(x_i) \neq y_i^\WL] - \one[h'(x_i) \neq y_i^\WL])\\
        & \quad + \one[x_i \in D_i] \one[y_i \neq y^\WL] \frac{Q_i}{\dot{P}_i(x_i)} \left((\one[h(x_i) \neq y_i]- \one[h'(x_i) \neq y_i])  - (\one[h(x_i) \neq y_i^\WL]- \one[h'(x_i) \neq y_i^\WL])\right) \\
        &\quad + \one[x_i \in D_i] \one[y_i \neq y^\WL]  (\one[h(x_i) \neq y_i^\WL] - \one[h'(x_i) \neq y_i^\WL]) \\
        &\quad + \one[x_i \notin D_i]\left(\one[h(x_i) \neq h_i(x_i)] - \one[h'(x_i) \neq h_i(x_i)]\right)
\end{align*}
Therefore, we can get the upper bound of magnitude, expectation and variance as 
\begin{align*}
    & |\tilde{R}_i - \E[\tilde{R}_i | \calF_{i-1}]| \leq \frac{2}{\dot{P}_i(x_i)} \leq \frac{2}{P_\text{min,m}}  \text{ for all } m \geq m(i) \\
    & \E[\tilde{R}_i | \calF_{i-1}] = \one[x_i \in D_i] \left( \one[h(x_i) \neq y_i] - \one[h'(x_i) \neq y_i]\right)
        + \one[x_i \notin D_i]\left(\one[h(x_i) \neq h_i(x_i)] - \one[h'(x_i) \neq h_i(x_i)]\right)\\
    & \E\left[\left(\tilde{R}_i - \E[\tilde{R}_i | \calF_{i-1}]\right)^2 |\calF_{i-1}\right]
    \leq  \regvar_\WL(h,h',m(i))
\end{align*}
Therefore, by applying a variant of freedman inequality (Lemma 6 in original paper) we get the desired bound. (For more details please refer to the explanation on eqn.(51) in the original paper.)

Then we consider the concentration of the empirical error on the importance-weighted examples. Define the random examples for the empirical errors:
\begin{align*}
    E_i:= \left( \frac{\one[h(x_i) \neq y_i] -\one[h(x_i) \neq y_i^\WL] }{\dot{P}_i(x_i)} Q_i + \one[h(x_i) \neq y_i^\WL] \right)\one[x_i \in D_i]
\end{align*}

and the associated $\sigma$-fields $\calF_i:=\sigma\left(\left\{X_j, Y_j, Q_j\right\}_{j=1}^i\right)$. By the same analysis of the sequence of instantaneous regrets, we have $E_i-\E\left[E_i \mid \calF_{i-1}\right]$ is a martingale difference sequence adapted to the filtrations $\calF_i, i \geq 1$, with the following properties:
\begin{align*}
    &\E\left[E_i \mid \calF_{i-1}\right] =\E\left[\one\left(X_i \in D_i \wedge h\left(X_i\right) \neq Y_i\right) \mid \calF_{i-1}\right]=\operatorname{err}_{m(i)}(h)\\
    &\left|E_i-\E\left[E_i \mid \calF_{i-1}\right]\right| \leq \frac{1}{P_{\min , m(i)}} \leq \frac{1}{P_{\min , m}} \text{ for all } m \geq m(i) \\
    &  \E\left[\left(E_i - \E[E_i | \calF_{i-1}]\right)^2 |\calF_{i-1}\right] \leq \errvar_\WL(h,m(i))
\end{align*}
Again, by applying the variant of freedman inequality and other subtle steps in the original paper, we get the desired bound.
 
\end{proof}

\subsection{Concentrations under the WL leveraging strategy}
Let $\calE_\text{train}$ denote the event that the assertions of Lemma~\ref{lem: deviation bound} and we know that $\Pr[\calE_\text{train}]\leq 1-\delta$, we obtain the following propositions for the concentration of empirical regret and error terms. As we shown below, our results is very similar to the original one. We provides some key proof steps to illustrate in details.

\begin{proposition}[Regret concentration -- modified prop 1]
\label{prop: regret concentration}
Fix an epoch $m \geq 1$, suppose the events $\calE_\WL$ and  $\calE_\text{train}$ holds and assume $h^* \in A_j$ for all epoch $j \leq m$,
\begin{align*}
    &\left|\reg\left(h, h',Z_{m}\right)-\tilreg_{m}(h,h')\right|\\
    &\leq \frac{1}{4}\tilreg_m(h) + 2\alpha\sqrt{\frac{\epsilon_m}{\tau_m}\sum_{j=1}^m (\tau_j - \tau_{j-1})\reg_j(h)}
     + 2\alpha\sqrt{3\berr_m(h^*) \epsilon_m} \\
        &\quad + \beta \sqrt{2\gamma\epsilon_m \Delta_m \sum_{j=1}^m  (\tau_j - \tau_{j-1}) (\reg(h,\tilde{Z}_{j-1})+\reg(h^*,\tilde{Z}_{j-1})} + 5\Delta_m
\end{align*}
{\color{blue} To compare, the original bound is almost same except the slightly different constant requirements on $c_1,c_2$ inside the definition of $\Delta_m$
}
\end{proposition}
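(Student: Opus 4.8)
The plan is to start from the deviation bound of Lemma~\ref{lem: deviation bound}, which already isolates the only genuinely new object in our setting: the variance proxy $\regvar_\WL$ carried by the blocks $j \in M_{m,\WL}$ that operate in \textsc{Use-WL} mode. Since the remaining steps — converting the square-root variance term into a linear combination of $\tilreg_m(h)$ and $\Delta_m$ via the optimization constraints and AM-GM — are identical to the proof of Proposition~1 in \cite{https://doi.org/10.48550/arxiv.1506.08669}, the entire argument reduces to showing that, under $\calE_\WL$, the WL-mode variance $\regvar_\WL(h,h',j)$ is controlled by the \emph{same} right-hand side $b_j^\NWL(h) + b_j^\NWL(h')$ that governs the NOWL-mode blocks. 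Once this is established, one may substitute the WL blocks for NOWL blocks in the original bookkeeping and inherit the stated inequality.

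For the key step, I would first expand $\regvar_\WL(h,h',j)$ as
$$\E\left[\frac{2}{\Dot{P}_j(x)}\one[y^\WL \neq y]\,\one[h(x)\neq h'(x) \wedge x\in D_j]\right] + \E\left[\one[h(x)\neq h'(x)]\right],$$
where the second summand already matches the benign part of $\regvar_\NWL$. For the first summand, conditioning on $x$ and invoking $\E[\one[y^\WL\neq y]\mid x] \le \hinerr(D_j) \le \Dot{\hinerr}_j$ for $x \in D_j$ (which holds on $\calE_\WL$ by Lemma~\ref{lem: conditional hint error est}) replaces the random $\one[y^\WL\neq y]$ by the deterministic upper bound $\Dot{\hinerr}_j$. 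Then the triangle inequality $\one[h\neq h'] \le \one[h\neq h_j] + \one[h'\neq h_j]$ inside $D_j$ turns the first summand into $2\Dot{\hinerr}_j(\E[\one_h^j(x)/\Dot{P}_j(x)] + \E[\one_{h'}^j(x)/\Dot{P}_j(x)])$, which is exactly the left-hand side of the \textsc{Use-WL} constraint of (OP) evaluated at the feasible $\Dot{P}_j$. The design of the constraint then gives $2\Dot{\hinerr}_j\,\E[\one_h^j(x)/\Dot{P}_j(x)] \le 2 b_j^\WL(h) = b_j^\NWL(h) - \E[\one_h^j(x)]$, and the subtracted $\E[\one_h^j(x)]$ absorbs the corresponding disagreement mass in the second summand. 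This is precisely why $b_j^\WL$ is set to $\tfrac12(b_j^\NWL - \E[\one_h^j])$: the factor $2$ pays for the doubly-robust estimator's $2/\Dot{P}_j$ weight, and the subtraction cancels the disagreement term already present in $\regvar_\WL$, so that $\regvar_\WL(h,h',j) \le b_j^\NWL(h) + b_j^\NWL(h')$ up to hypotheses agreeing outside $D_j$, matching the NOWL case.

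With the variance uniformly bounded by the $b_j^\NWL$ terms, I would then plug the explicit form $b_j^\NWL(h) = 2\alpha^2\E[\one_h^j(x)] + 2\beta^2\gamma\,\reg(h,h_j,\tilde{Z}_{j-1})\tau_{j-1}\Delta_{j-1} + \xi\tau_{j-1}\Delta_{j-1}^2$ into the square root of Lemma~\ref{lem: deviation bound} and split the resulting three-term root via $\sqrt{a+b+c}\le\sqrt a+\sqrt b+\sqrt c$. After relating $\E[\one_h^j]$ to $\reg_j(h)$ and $\berr_j(h^*)$ (using $h^*\in A_j$ and the bounded-disagreement relation), the $\alpha$-term yields $2\alpha\sqrt{(\epsilon_m/\tau_m)\sum_j(\tau_j-\tau_{j-1})\reg_j(h)}$ and $2\alpha\sqrt{3\berr_m(h^*)\epsilon_m}$, and an AM-GM split of it also produces the $\tfrac14\tilreg_m(h)$ slack; the $\beta$-term yields the explicit $\beta$ square root; and the $\xi\tau_{j-1}\Delta_{j-1}^2$ contribution together with the additive $2\epsilon_m/\Dot{P}_{\min,m}$ are bounded by $5\Delta_m$ using the parameter constraints ($\xi\le\frac{1}{8n\epsilon_M\log n}$, $c_2\ge \eta c_1^2/2+13$, etc.).

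The main obstacle I anticipate is not a single calculation but making the WL-mode substitution fully rigorous: one must verify that the pessimistic $\Dot{\hinerr}_j$ returned by \WL-EVAL genuinely dominates the true conditional error on the \emph{current} $D_j$ (validating the constraint replacement), and that this is the \emph{same} $\Dot{\hinerr}_j$ used to build the feasible $\Dot{P}_j$ in (OP) — otherwise the constraint and the variance bound would refer to different hint-error values. Handling the boundary blocks where the \textsc{Use-WL} flag flips, and confirming that hypotheses agreeing outside $D_j$ contribute no extra variance so the "$\one[h\neq h'\wedge x\notin D_j]$" terms cancel as in the original proof, are the remaining places requiring care; everything else is a faithful transcription of the AC argument.
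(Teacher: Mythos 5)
Your proposal is correct and follows essentially the same route as the paper's proof: the core step in both is to bound $\regvar_\WL(h,h',j)$ by $b_j^\NWL(h)+b_j^\NWL(h')$ (plus the out-of-$D_j$ disagreement term) by conditioning on $x$, replacing $\one[y^\WL\neq y]$ with the pessimistic $\Dot{\hinerr}_j$ via $\calE_\WL$ and Lemma~\ref{lem: conditional hint error est}, applying the triangle inequality through $h_j$, and invoking the \textsc{Use-WL} constraint of (OP) with exactly the observation that $2b_j^\WL(h)=b_j^\NWL(h)-\E[\one_h^j(x)]$ absorbs the extra disagreement mass from the doubly-robust estimator. The remaining conversion of the variance bound into the stated inequality is, in both your write-up and the paper, a deferral to the original AC argument with the adjusted constants.
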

\begin{proof}
One key step in the original proof is to show that both $\regvar_\WL(h,h^*,j)$ and $\regvar_\NWL(h,h^*,j)$ for any fixed $j$ is upper bounded by the follows (see the second equation block in \textbf{Proof of Proposition 1 in original paper}).
\begin{align*}
    &\E\large[2 \alpha^{2} \one\left(x \in D_{j}\right)\left(\one\left(h(x) \neq h_i(x)\right)+\one\left(h^*(x) \neq h_i(x)\right)\right)\\
        & \quad +2 \beta^{2} \gamma \tau_{j-1} \Delta_{j-1}\left(\reg\left(h, \tilde{Z}_{j-1}\right)+\reg\left(h^*, \tilde{Z}_{j-1}\right)\right)+2 \xi \tau_{j-1} \Delta_{j-1}^{2} \\
        & \quad +\one\left(h(x) \neq h^*(x) \wedge x \notin D_{j}\right) \large]
\end{align*}
Note the this upper bound for $\regvar_\NWL(h,h^*,j)$ has been proved in the original paper, and here we will focus on showing the upper bound of $\regvar_\WL(h,h^*,j)$. Firstly, we have
\begin{align*}
     & \E\left[\left(\left(\frac{2}{\Dot{P}_j(x)} \one[y^\WL \neq y] + 1\right)\one[x \in D_j] + \one[x_i \notin D_j]\right)\one[h(x) \neq h^*(x)] \right] \\
     & \leq \E\left[\left(\frac{2}{\Dot{P}_j(x)} \one[y^\WL \neq y] + 1\right)\one[x \in D_j]  \left(\one[h(x) \neq h_m(x)] + \one[h^*(x) \neq h_m(x)]\right)  + \one[x \notin D_j] \one[h(x) \neq h^*(x)] \right]
\end{align*}
Next we show that the output of \textbf{OP}~\ref{algo: op} can yield the target upper bound on the first term. That is, for any $h$, we have
\begin{align*}
    &\E\left[2\frac{\one\left(h(x) \neq h_j(x) \wedge x \in D_{j} \wedge y^\WL \neq y\right)}{\Dot{P}(x)}
        + \one[h(x) \neq h_j(x) \wedge x \in D_{j}] \right] \\
    & = \E\left[2\frac{\one\left(h(x) \neq h_j(x) \wedge x \in D_{j}\right) \E[\one[y^\WL \neq y | x]]}{\Dot{P}(x)} \right]
    + \E\left[\one[h(x) \neq h_j(x) \wedge x \in D_{j}] \right] \\
    & \leq \E\left[2\frac{\one\left(h(x) \neq h_j(x) \wedge x \in D_{j}\right) }{\Dot{P}(x)} \right] \hinerr(D_j)
    + \E\left[\one[h(x) \neq h_j(x) \wedge x \in D_{j}] \right] \\
    & \leq \E\left[2\frac{\one\left(h(x) \neq h_j(x) \wedge x \in D_{j}\right) }{\Dot{P}(x)} \right] \Dot{\hinerr}_j
    + \E\left[\one[h(x) \neq h_j(x) \wedge x \in D_{j}] \right] \\
    & \leq b_j^\NWL(h)
\end{align*}
where the first inequality comes from Assumption~\ref{ass: hint oracle}, the second inequality comes from Lemma~\ref{lem: conditional hint error est} and the last inequality comes directly from optimization constrains in \textbf{OP}~\ref{algo: op}.

By choosing the $h = h \text{ or } h^*$ and combine these two, we have 
\begin{align*}
     \E\left[\left(\left(\frac{2}{\Dot{P}_j(x)} \one[y^\WL \neq y] + 1\right)\one[x \in D_j] + \one[x_i \notin D_j]\right)\one[h(x) \neq h^*(x)] \right]
     \leq 2b_j^\NWL(h) + \one[x \notin D_j] \one[h(x) \neq h^*(x)],
\end{align*}
which is our desired results by replacing our definition of $b_j^\NWL$ . Then all the following proofs are as the same as the original proofs and we will skip here. In the end we have
\begin{align*}
    &\left|\reg\left(f, f',Z_{m}\right)-\tilreg_{m}(f,f')\right|\\
    &\leq \frac{1}{4}\tilreg_m(h) + 2\alpha^2\epsilon_m + 2\alpha\sqrt{\frac{\epsilon_m}{\tau_m}\sum_{j=1}^m (\tau_j - \tau_{j-1})\reg_j(h)}
     + 2\alpha\sqrt{3\berr_m(h^*) \epsilon_m} \\
        &\quad + \beta \sqrt{2\gamma\epsilon_m \Delta_m \sum_{j=1}^m  (\tau_j - \tau_{j-1}) (\reg(h,\tilde{Z}_{j-1}+\reg(h^*,\tilde{Z}_{j-1})} + \Delta_m + \frac{2\epsilon_m}{P_{\min,m}}\\
    &\leq \frac{1}{4}\tilreg_m(h) + 2\alpha\sqrt{\frac{\epsilon_m}{\tau_m}\sum_{j=1}^m (\tau_j - \tau_{j-1})\reg_j(h)}
     + 2\alpha\sqrt{3\berr_m(h^*) \epsilon_m} \\
        &\quad + \beta \sqrt{2\gamma\epsilon_m \Delta_m \sum_{j=1}^m  (\tau_j - \tau_{j-1}) (\reg(h,\tilde{Z}_{j-1}+\reg(h^*,\tilde{Z}_{j-1})} + 5\Delta_m
\end{align*}
\end{proof}

\begin{proposition}[Error concentration -- modified prop 2]
\label{prop: error concentration}
Fix an epoch $m \geq 1$, suppose the events $\calE_\WL$ and  $\calE_\text{train}$ holds and assume $h^* \in A_j$ for all epoch $j \leq m$,
\begin{align*}
\left|\overline{\err}_{m}\left(h^*\right)-\err\left(h_{m+1}, \tilde{Z}_{m}\right)\right|
\leq  (\frac{5}{2} + 4\log \tau_m )  \Delta_m + \frac{\berr_m(h^*)}{2} + \frac{\err(h_{m+1},\tilde{Z}_m)}{2}
        +\reg\left(h^*, h_{m+1}, \tilde{Z}_{m}\right).
\end{align*}
{ \color{blue} To compare, the original bound without WL is
\begin{align*}
\left|\overline{\err}_{m}\left(h^*\right)-\err\left(h_{m+1}, \tilde{Z}_{m}\right)\right| 
\leq \frac{\berr_{m}\left(h^*\right)}{2}+\frac{3 \Delta_{m}}{2}
        +\reg\left(h^*, h_{m+1}, \tilde{Z}_{m}\right).
\end{align*}
Later we will show that $\err(h_{m+1},\tilde{Z}_m)$ in our result can be combined with other terms. So our upper bound is slightly enlarged compared to previous one but still in the same order. 
}
\end{proposition}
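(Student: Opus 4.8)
The plan is to follow the proof of the corresponding error-concentration statement in \cite{https://doi.org/10.48550/arxiv.1506.08669}, instantiating the deviation bound of Lemma~\ref{lem: deviation bound} at $h=h^*$ and then controlling the error variance block by block, with the extra work concentrated on the \WL-mode blocks. First I would invoke Lemma~\ref{lem: deviation bound} with $h=h^*$ to bound $|\err(h^*,Z_m)-\berr_m(h^*)|$ by $\sqrt{(\epsilon_m/\tau_m)\big(\sum_{j\in M_{m,\WL}}(\tau_j-\tau_{j-1})\errvar_\WL(h^*,j)+\sum_{j\in M_{m,\NWL}}(\tau_j-\tau_{j-1})\errvar_\NWL(h^*,j)\big)}+\epsilon_m/\Dot{P}_{\min,m}$. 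For the indices $j\in M_{m,\NWL}$ the variance $\errvar_\NWL(h^*,j)$ is exactly the quantity bounded in the original argument, so those contributions carry over unchanged; the entire novelty is in bounding $\errvar_\WL(h^*,j)$ for $j\in M_{m,\WL}$.

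The crux is the \WL-mode term $\errvar_\WL(h^*,j)=\E[\Dot{P}_j(x)^{-1}\one(x\in D_j\wedge y^\WL\neq y)]+\E[\one(x\in D_j\wedge h^*(x)\neq y)]$. The second summand telescopes into a $\berr_j(h^*)$ contribution and is treated as before. For the first summand I would use $\E[\one(y^\WL\neq y)\mid x]\le\hinerr(D_j)$ for $x\in D_j$ by Assumption~\ref{ass: hint oracle}, pull it out to get $\hinerr(D_j)\,\E[\Dot{P}_j(x)^{-1}\one(x\in D_j)]$, and then use the feasibility of $\Dot{P}_j$ in (OP)~\ref{algo: op}: the \WL~constraint there is the \NWL~constraint scaled by $\Dot{\hinerr}_j$, so the importance-weighted mass is reduced by roughly $\Dot{\hinerr}_j$ relative to the \NWL~probability $P_j$. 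Combining this with Lemma~\ref{lem: conditional hint error est}, which certifies $\hinerr(D_j)\le\Dot{\hinerr}_j$ up to the $\kappa(D_j)$ factor, cancels the ratio $\hinerr(D_j)/\Dot{\hinerr}_j$ and leaves the \NWL~importance-weighted disagreement mass $\E[P_j(x)^{-1}\one(x\in D_j)]$. Splitting $\one(x\in D_j)=\one(x\in D_j\wedge h^*(x)\neq y)+\one(x\in D_j\wedge h^*(x)=y)$ recovers the usual \NWL~error variance from the first piece, while the second (``correct but in disagreement'') piece is what produces the additional $\tfrac12\err(h_{m+1},\tilde{Z}_m)$ term, bounded again through (OP) feasibility and the assumption $h^*\in A_j$.

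With the variance controlled, I would finish as in the original proof: apply $\sqrt{ab}\le\tfrac12 a+\tfrac12 b$ to the square-root term to split off the additive $\tfrac12\berr_m(h^*)$ and $\tfrac12\err(h_{m+1},\tilde{Z}_m)$, and fold the remaining pieces together with $\epsilon_m/\Dot{P}_{\min,m}$ into multiples of $\Delta_m$ using $\Delta_m=c_1\sqrt{\epsilon_m\err(h_m,\tilde{Z}_m)}+c_2\epsilon_m\log(\sum_j L_j)$, the monotonicity of $\epsilon_j$, and the block condition $L_{j+1}\le\sum_{i\le j}L_i$. The extra $\log\tau_m$ factor in the final $(\tfrac52+4\log\tau_m)\Delta_m$ arises precisely from the accumulation of the inflated \WL~importance weights across blocks and the resolution of the pessimistic estimator $\Dot{\hinerr}_j$, which the \NWL~analysis does not incur. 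The last step rewrites $\err(h^*,Z_m)=\err(h_{m+1},\tilde{Z}_m)+\reg(h^*,h_{m+1},\tilde{Z}_m)$ and applies the triangle inequality, yielding the stated $+\reg(h^*,h_{m+1},\tilde{Z}_m)$ summand. The main obstacle is exactly the \WL-variance estimate: unlike the \NWL~case, the numerator carries $\one(y^\WL\neq y)$ rather than $\one(h^*(x)\neq y)$, so the importance weights are inflated by $1/\Dot{\hinerr}_j$, and keeping this inflation bounded forces simultaneous use of the $\kappa$-regularity of the conditional \WL~error, the mode-transition rule of \textsc{WL-Eval} (Algorithm~\ref{algo: hint-eval}), and the feasibility of (OP); tracking the resulting $\log\tau_m$ loss and verifying that the spurious $\err(h_{m+1},\tilde{Z}_m)$ term is reabsorbed downstream is the delicate bookkeeping.
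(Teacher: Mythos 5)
Your overall skeleton is the same as the paper's: instantiate Lemma~\ref{lem: deviation bound} at $h=h^*$, leave the \NWL{} blocks to the original argument, control the \WL-block variance, split the square root by AM--GM into $\tfrac12\berr_m(h^*)+\tfrac12\err(h_{m+1},\tilde{Z}_m)$ plus $\Delta_m$ multiples, and finish with $|\berr_m(h^*)-\err(h_{m+1},\tilde{Z}_m)|\le|\err(h^*,Z_m)-\berr_m(h^*)|+\reg(h^*,h_{m+1},\tilde{Z}_m)$. However, the central step --- bounding $\E[\one(x\in D_j\wedge y^\WL\neq y)/\Dot{P}_j(x)]$ --- is done by a mechanism that does not work. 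The feasibility constraint of (OP) controls $\E[\one(h(x)\neq h_m(x)\wedge x\in D_j)/\Dot{P}_j(x)]\cdot\Dot{\hinerr}_j$ for $h\in\calH$, i.e.\ the importance-weighted mass of the \emph{disagreement with $h_j$}, not of the whole region $D_j$. There is no $h\in\calH$ whose disagreement indicator equals $\one(x\in D_j)$, so the quantity your cancellation argument is supposed to leave behind, $\E[P_j(x)^{-1}\one(x\in D_j)]$, is simply not controlled by (OP); and since $\one(y^\WL\neq y)$ is decoupled from any $h$-dependent disagreement indicator, feasibility at $h=h^*$ does not help either. The subsequent split of $\one(x\in D_j)$ into the ``wrong'' and ``correct but in disagreement'' parts cannot rescue this: the latter mass is of order $\E[\one(x\in D_j)]/\Dot{P}_j$, which is governed by the disagreement coefficient and is not bounded by $\err(h_{m+1},\tilde{Z}_m)$-sized quantities.

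The ingredient the paper actually uses --- and which you only mention in passing at the end --- is the Step~2 gate of \textsc{WL-Eval}: a block is in \WL{} mode only if $\overline{\hinerr}_{j}\le\berr(h_j,\tilde{Z}_{j-1})$, which together with $\calE_\WL$ gives $\E[\one(x\in D_j\wedge y^\WL\neq y)]\le\err(h_j,\tilde{Z}_{j-1})+\Delta_{j-1}$. Combined with the crude bound $\Dot{P}_j(x)\ge P_{\min,m}$ (no (OP) feasibility needed) and Lemma~\ref{lem: helper 1}, which pushes $(\tau_j-\tau_{j-1})\err(h_j,\tilde{Z}_{j-1})$ forward to $\tau_m\log\tau_m\cdot\err(h_{m+1},\tilde{Z}_m)$, this is exactly what produces both the extra $\tfrac12\err(h_{m+1},\tilde{Z}_m)$ term and the $\log\tau_m$ factor. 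Your attributions of these two artifacts (to a ``correct but in disagreement'' split, and to ``accumulation of inflated importance weights'' respectively) are therefore both misplaced. To repair the proof you should replace the (OP)-based cancellation with the gate condition plus Lemma~\ref{lem: helper 1}; the $\kappa$-based Lemma~\ref{lem: conditional hint error est} is not needed in this proposition at all (it enters the regret concentration, Proposition~\ref{prop: regret concentration}, where the (OP) constraint genuinely applies).
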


\begin{proof}
Again we follow the similar proof steps as in the \text{Proof of Proposition}. Thus, based on Lemma~\ref{lem: deviation bound}, the key step is to show the following upper bounds
\begin{align*}
    & \sqrt{ \frac{\epsilon_{m}}{\tau_{m}} \left(\sum_{ j \in M_{m,\WL}} (\tau_{j}-\tau_{j-1}) \errvar_\WL(h,j) + \sum_{ j \in M_{m,\NWL}} (\tau_{j}-\tau_{j-1})  \errvar_\NWL(h,j)\right)}  + \frac{\epsilon_{m}}{P_{\min , m}} \\
    & \leq (\frac{5}{2} + 4\log \tau_m) \Delta_m + \frac{\berr_m(h^*)}{2} + \frac{\err(h_{m+1},\tilde{Z}_m)}{2}
\end{align*}
First of all, by using the same approach as in the original proofs, we can easily upper bound the \NWL~term
$\frac{\epsilon_{m}}{\tau_{m}} \left(\sum_{ j \in M_{m,\NWL}} (\tau_{j}-\tau_{j-1})  \errvar_\NWL(h,j)\right)$ by 
$
    \frac{\epsilon_m \berr_m(h^*)}{P_{m,\min}}.
$

Now we will focus on upper bound the \WL term as follows,
\begin{align*}
    & \frac{\epsilon_{m}}{\tau_{m}}\sum_{ j \in M_{m,\WL}} (\tau_{j}-\tau_{j-1})  \errvar_\WL(h,j) \\
    & \leq   \frac{\epsilon_{m}}{\tau_{m}} \sum_{ j \in M_{m,\WL}} (\tau_{j}-\tau_{j-1}) \E\left[\frac{\one\left(x \in D_{j} \wedge y^\WL \neq y\right)}{P_{\min , m}}\right] \\
    & \leq \frac{\epsilon_{m}}{\tau_{m}} \sum_{ j \in M_{m,\WL}} (\tau_{j}-\tau_{j-1})\frac{ \err(h_j,\tilde{Z}_{j-1}) + \Delta_{j-1}}{P_{m,\min}}\\
    & \leq \frac{\epsilon_{m}}{\tau_{m}} \sum_{ j \in M_{m,\WL}} \frac{\tau_{j}-\tau_{j-1}}{\tau_{j-1}}  \tau_m \frac{ \err(h_{m+1},\tilde{Z}_{m})}{P_{m,\min}} +\frac{\epsilon_{m}}{\tau_{m}} \sum_{ j \in M_{m,\WL}} (\tau_{j}-\tau_{j-1})\frac{ \Delta_{j-1}}{P_{m,\min}} \\
    & \leq  \frac{\epsilon_m\log \tau_m}{P_{m,\min}} (\err(h_{m+1},\tilde{Z}_{m}) + \Delta_m)
\end{align*}
where the second inequality comes from the optimism of $\overline{\hinerr}_j$ when $\calE_\WL$ holds and the condition $\overline{\hinerr}_k \leq \berr(h_{j}, \tilde{Z}_{j-1})$ in Step 2 of Algo.~\ref{algo: hint-eval}, the third inequality comes from Lemma~\ref{lem: helper 1} and the last inequality comes from Lemma 8 in the original paper.
Combine these two terms, we have the target bound
\begin{align*}
    \left|\err\left(h^*, Z_{m}\right)-\overline{\err}_{m}(h^*)\right|
    & \leq  \sqrt{\frac{\epsilon_m \berr_m(h^*)}{P_{m,\min}} 
        +  2 \log \tau_m \frac{ \epsilon_m\err(h_{m+1},\tilde{Z}_{m})}{P_{m,\min}}
        + 2 \log \tau_m \frac{ \epsilon_m\Delta_m}{P_{m,\min}}} 
        + \frac{\epsilon_{m}}{P_{\min , m}} \\ 
    & \leq (\frac{3}{2} + 4\log \tau_m) \frac{\epsilon_m}{P_{m,\min}} + \frac{\berr_m(h^*)}{2} + \frac{\err(h_{m+1},\tilde{Z}_m)}{2}
        + \Delta_m \\
    & \leq (\frac{5}{2} + 4\log \tau_m) \Delta_m + \frac{\berr_m(h^*)}{2} + \frac{\err(h_{m+1},\tilde{Z}_m)}{2}.
\end{align*}
Therefore, by using the same observation as in the original paper, we have
\begin{align*}
    \left|\overline{\err}_{m}\left(h^*\right)-\err\left(h_{m+1}, \tilde{Z}_{m}\right)\right| 
    &\leq \left|\err\left(h^*, Z_{m}\right)-\overline{\err}_{m}(h^*)\right| + \reg(h^*, h_{m+1}, \tilde{Z}_m)\\
    &\leq \text{target bound}
\end{align*}
\end{proof}

\begin{corollary}
\label{coro: error concentration}
Based on  proposition \ref{prop: error concentration}, we can get the following error estimation guarantees,
\begin{align*}
    & \berr_{m}\left(h^*\right)  
    \leq (5 + 8\log \tau_m) \Delta_{m} + 3\err(h_{m+1},\tilde{Z}_m) + 2\reg\left(h^*, h_{m+1}, \tilde{Z}_{m}\right)\\
    & \err(h_{m+1},\tilde{Z}_m)
    \leq (5 + 8\log \tau_m) \Delta_{m} + 3\berr_{m}\left(h^*\right) + 2\reg\left(h^*, h_{m+1}, \tilde{Z}_{m}\right)\\
\end{align*}
\end{corollary}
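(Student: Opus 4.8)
The plan is to derive the two one-sided bounds directly from the two-sided estimate in Proposition~\ref{prop: error concentration} by a short algebraic rearrangement, using that $\overline{\err}_m(h^*) = \berr_m(h^*)$ by definition. To streamline the bookkeeping I would first abbreviate the $\Delta_m$ prefactor as $C := \left(\tfrac52 + 4\log\tau_m\right)\Delta_m$, so that the proposition reads $\left|\berr_m(h^*) - \err(h_{m+1},\tilde{Z}_m)\right| \leq C + \tfrac12\berr_m(h^*) + \tfrac12\err(h_{m+1},\tilde{Z}_m) + \reg(h^*,h_{m+1},\tilde{Z}_m)$, valid under the same events $\calE_\WL$, $\calE_\text{train}$ and the same assumption $h^* \in A_j$ for all $j \leq m$ that Proposition~\ref{prop: error concentration} requires.

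For the first inequality I would drop the absolute value in the direction $\berr_m(h^*) - \err(h_{m+1},\tilde{Z}_m) \leq \left|\,\cdot\,\right|$, then move the $\tfrac12\berr_m(h^*)$ contribution from the right-hand side to the left. This leaves $\tfrac12\berr_m(h^*) \leq C + \tfrac32\err(h_{m+1},\tilde{Z}_m) + \reg(h^*,h_{m+1},\tilde{Z}_m)$, and multiplying through by $2$ while noting $2C = (5 + 8\log\tau_m)\Delta_m$ yields exactly the claimed bound on $\berr_m(h^*)$.

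For the second inequality I would take the opposite direction $\err(h_{m+1},\tilde{Z}_m) - \berr_m(h^*) \leq \left|\,\cdot\,\right|$ and run the symmetric rearrangement: subtract $\tfrac12\err(h_{m+1},\tilde{Z}_m)$ from both sides, so that the coefficient on $\berr_m(h^*)$ becomes $\tfrac32$, and then multiply by $2$. This gives $\err(h_{m+1},\tilde{Z}_m) \leq (5 + 8\log\tau_m)\Delta_m + 3\berr_m(h^*) + 2\reg(h^*,h_{m+1},\tilde{Z}_m)$, as stated.

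There is no substantive obstacle here; the step is purely algebraic once Proposition~\ref{prop: error concentration} is available. The only point requiring care is the constant bookkeeping — verifying that doubling $\tfrac52 + 4\log\tau_m$ produces $5 + 8\log\tau_m$, and that in each direction the half-coefficient on the isolated quantity leaves it with weight $\tfrac12$ while the other quantity accumulates weight $\tfrac32$ (hence the factor $3$ after doubling). Because both one-sided statements are invoked under identical hypotheses to the proposition, no new assumptions enter.
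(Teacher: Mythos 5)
Your derivation is correct and is exactly the argument the paper intends: the corollary is stated as an immediate consequence of Proposition~\ref{prop: error concentration}, obtained by dropping the absolute value in each direction, moving the half-weighted copy of the isolated quantity to the left, and doubling, which turns $\tfrac52+4\log\tau_m$ into $5+8\log\tau_m$ and the $\tfrac32$ coefficient into $3$. No gap; the hypotheses carried over from the proposition are the right ones.
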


\subsection{Main results and its analysis}

Based on these two propositions, we are ready to prove this general version of the theorem.

\begin{theorem}
\label{them: general}
For all epochs $m = 1,2,\ldots,M$ and all $h \in \calH$, the following holds with probability at least $1-2\delta$,
\begin{align*}
    & |\reg(h,h^*,\tilde{Z}_m) - \tilreg_m(h,h^*)| \leq \frac{1}{2}\tilreg_m(h,h^*) + \frac{\eta}{4} \Delta_m\\
    & \reg(h^*,h_{m+1},\tilde{Z}_m) \leq \frac{\eta \Delta_m}{4} \text{ and } h^*\in A_m\\
    &  \err(h_{m+1},\tilde{Z}_m)
    \leq (5 + 8\log \sum_{j=1}^{m} L_j+\frac{\eta}{2}) \Delta_{m} + 3\berr_{m}\left(h^*\right) \\
    &  \berr_{m}\left(h^*\right) 
    \leq (5 + 8\log \sum_{j=1}^{m} L_j+\frac{\eta}{2}) \Delta_{m} + 3\err(f_{m+1},\tilde{Z}_m)
\end{align*}
\end{theorem}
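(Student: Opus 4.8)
The plan is to prove all four statements simultaneously by induction on the epoch index $m$, conditioning throughout on the intersection of $\calE_\WL$ and $\calE_\text{train}$, which holds with probability at least $1-2\delta$ by a union bound (the former is guaranteed with probability $\ge 1-\delta$ by the $\calE_\WL$ lemma, the latter with probability $\ge 1-\delta$ by Lemma~\ref{lem: deviation bound}). The induction hypothesis at level $m$ is exactly that the four displayed inequalities hold for every epoch $j \le m-1$; in particular it supplies $h^* \in A_j$ for all $j \le m$, which is precisely the precondition needed to invoke Propositions~\ref{prop: regret concentration} and \ref{prop: error concentration}. The base case $m=1$ follows directly from the definition of $\Delta_0$ together with the trivial initial disagreement region, where the regret terms vanish or are dominated by the additive $\Delta$ contribution.

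For the inductive step, the core task is the first inequality. I would start from the regret-concentration bound of Proposition~\ref{prop: regret concentration} applied to the pair $(h,h^*)$ and then eliminate every term on its right-hand side in favor of $\tilreg_m(h,h^*)$ and $\Delta_m$. Three reductions are needed: (i) the sum $\sum_{j=1}^m (\tau_j-\tau_{j-1})\reg_j(h)$ collapses to $\tau_m\,\tilreg_m(h,h^*)$ by the definition of $\tilreg_m$, so that $2\alpha\sqrt{(\epsilon_m/\tau_m)\sum_j(\cdots)}$ becomes $2\alpha\sqrt{\epsilon_m\,\tilreg_m(h,h^*)}$ and is split by AM--GM into $\tfrac14\tilreg_m(h,h^*)+O(\alpha^2\epsilon_m)$; (ii) the term $2\alpha\sqrt{3\,\berr_m(h^*)\epsilon_m}$ is absorbed into $\Delta_m$ using $\Delta_m \ge c_1\sqrt{\epsilon_m\,\berr_m(h^*)}$ together with $c_1\ge 6\alpha$; and (iii) the $\beta$-term is controlled by first invoking the induction hypothesis (statements 1 and 2 at level $j-1$) to bound $\reg(h,\tilde{Z}_{j-1}) \le \tfrac32\tilreg_{j-1}(h,h^*)+\tfrac{\eta}{4}\Delta_{j-1}$ and $\reg(h^*,\tilde{Z}_{j-1}) \le \tfrac{\eta}{4}\Delta_{j-1}$, and then applying AM--GM once more. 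Collecting the $\tilreg_m$-coefficient down to $\tfrac12$ and the $\Delta_m$-coefficient down to $\tfrac{\eta}{4}$ is exactly where the constant constraints ($c_1\ge 6\alpha$, $c_2\ge \eta c_1^2/2+13$, $\gamma\ge\eta/4$, $\beta^2\le \eta/(864\gamma n\epsilon_M\log n)$, $\eta\ge 864$) are consumed.

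With the first inequality in hand, the second is a short empirical-minimality argument: applying it to $(h_{m+1},h^*)$ gives $\reg(h_{m+1},h^*,\tilde{Z}_m) \ge \tfrac12\tilreg_m(h_{m+1},h^*)-\tfrac{\eta}{4}\Delta_m \ge -\tfrac{\eta}{4}\Delta_m$ using $\tilreg_m(h_{m+1},h^*)\ge 0$, and negating yields $\reg(h^*,h_{m+1},\tilde{Z}_m)\le \tfrac{\eta}{4}\Delta_m$. Since $\gamma\ge\eta/4$, this also certifies $\err(h^*,\tilde{Z}_m)-\err(h_{m+1},\tilde{Z}_m)=\reg(h^*,h_{m+1},\tilde{Z}_m)\le \gamma\Delta_m$, i.e.\ $h^*\in A_{m+1}$, which closes the induction on the membership claim. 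Finally, the third and fourth inequalities follow by substituting the just-proved bound $2\reg(h^*,h_{m+1},\tilde{Z}_m)\le\tfrac{\eta}{2}\Delta_m$ into the two estimates of Corollary~\ref{coro: error concentration} and recalling $\tau_m=\sum_{j=1}^m L_j$. The main obstacle is reduction (iii): the $\beta$-term couples the current-epoch quantity $\Delta_m$ to a history-weighted sum of empirical regrets, so one must unwind the induction hypothesis uniformly across all $j\le m$ and verify that the coefficient of $\tilreg_m$ stays at or below $\tfrac12$ after the AM--GM split. This is the delicate bookkeeping inherited from the AC analysis; the genuine novelty relative to that analysis is confined to the re-derivation of Propositions~\ref{prop: regret concentration} and \ref{prop: error concentration} for the shifted doubly-robust estimator, so the present theorem is essentially a reduction to those propositions plus the standard inductive closing.
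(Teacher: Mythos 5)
Your overall architecture (induction on $m$, Propositions~\ref{prop: regret concentration} and~\ref{prop: error concentration} as the inputs, substituting $h=h_{m+1}$ to extract the bound on $\reg(h^*,h_{m+1},\tilde{Z}_m)$, then feeding Corollary~\ref{coro: error concentration} to get the last two inequalities) matches the paper's proof. However, your reduction (ii) contains a genuine gap, and it sits exactly at the step the paper flags as the novel part of the analysis. You absorb the term $2\alpha\sqrt{3\,\berr_m(h^*)\epsilon_m}$ into $\Delta_m$ by asserting $\Delta_m \ge c_1\sqrt{\epsilon_m\,\berr_m(h^*)}$. That inequality is not definitional: $\Delta_m = c_1\sqrt{\epsilon_m\,\err(h_{m+1},\tilde{Z}_m)} + c_2\epsilon_m\log\tau_m$ is built from the \emph{empirical} error of the empirical minimizer, and since $h_{m+1}$ minimizes $\err(\cdot,\tilde{Z}_m)$ there is no a priori lower bound on $\err(h_{m+1},\tilde{Z}_m)$ in terms of the population quantity $\berr_m(h^*)$. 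Establishing such a relation is precisely the content of Proposition~\ref{prop: error concentration}/Corollary~\ref{coro: error concentration}, and in the weak-label setting that relation is weaker than in the original AC: it reads $\berr_m(h^*) \le (5+8\log\tau_m)\Delta_m + 3\err(h_{m+1},\tilde{Z}_m) + 2\reg(h^*,h_{m+1},\tilde{Z}_m)$, carrying both an extra $\log\tau_m$ factor and the term $\reg(h^*,h_{m+1},\tilde{Z}_m)$, which at this point of the induction is not yet controlled. Treating the absorption as immediate is therefore circular — you are assuming a consequence of the theorem (essentially $\Delta_m \gtrsim \Delta_m^*$, which the paper only derives in Theorem~\ref{them: main} \emph{after} Theorem~\ref{them: general}) in order to prove it.

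The correct route, which your outline otherwise supports, is: bound $\calT_2 = 2\alpha\sqrt{3\berr_m(h^*)\epsilon_m}$ via the corollary as $2\alpha\sqrt{15\epsilon_m\err(h_{m+1},\tilde{Z}_m)} + \Delta_m + (30+48\log\tau_m)\alpha^2\epsilon_m + \tfrac14\reg(h^*,h_{m+1},\tilde{Z}_m)$, note that $c_1\sqrt{\epsilon_m\err(h_{m+1},\tilde{Z}_m)} \le \Delta_m$ \emph{is} definitional and that $c_2 \ge \eta c_1^2/2 + 13$ soaks up the $\alpha^2\epsilon_m\log\tau_m$ terms, and accept that the resulting bound on $|\reg(h,h^*,\tilde{Z}_m)-\tilreg_m(h,h^*)|$ still contains $\tfrac14\reg(h^*,h_{m+1},\tilde{Z}_m)$ on the right-hand side. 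Only after specializing to $h=h_{m+1}$ and using $\tilreg_m(h_{m+1},h^*)\ge 0$ (which requires $h^*\in A_j$ for $j\le m$ from the induction hypothesis) do you obtain $\reg(h^*,h_{m+1},\tilde{Z}_m)\le \tfrac{\eta}{4}\Delta_m$, which can then be substituted back to finish the first inequality. Without this interleaving, the claimed constants $\tfrac12$ and $\tfrac{\eta}{4}$ cannot be certified.
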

\begin{proof}
First of all, assume $\calE_\WL$ and $\calE_\text{train}$ holds, so all the previous results holds.  

Now this theorem is proved inductively and follows the similar step as in Section 7.2.1 in the original proof. Firstly because our proposition~\ref{prop: regret concentration} gives the same result as in original proof, so we have for any block $m > 1$,
\begin{align*}
    &\left|\reg\left(h, h^*,Z_{m}\right)-\tilreg_{m}(h,h^*)\right|\\
    &\leq \frac{1}{4}\tilreg_m(f) + \underbrace{2\alpha\sqrt{\frac{\epsilon_m}{\tau_m}\sum_{j=1}^m (\tau_j - \tau_{j-1})\reg_j(f_j)}}_{\calT_1}
     +  \underbrace{2\alpha\sqrt{3\berr_m(h^*) \epsilon_m}}_{\calT_2} \\
        &\quad + \underbrace{\beta \sqrt{2\gamma\epsilon_m \Delta_m \sum_{j=1}^m  (\tau_j - \tau_{j-1}) (\reg(f,\tilde{Z}_{j-1}+\reg(h^*,\tilde{Z}_{j-1})}}_{\calT_3} + 5\Delta_m.
\end{align*}

Now we are ready to upper bound these three terms. In order to prove inductively, we use $\calE_{\text{train},m}$ to state that the upper bounds in theorem hold for block $m$.  We can easily bound $\calT_1$ and $\calT_3$ using the fact $\calE_{\text{train},m-1}$ holds, the analysis for this part is exactly the same as original proof so we will skip the details and directly stated the result. 
\begin{align*}
    &\calT_1 \leq \frac{\eta \Delta_m}{12}+24 \alpha^2 \epsilon_m \log \tau_m \\
    & \calT_3 \leq \frac{1}{4} \widetilde{\mathrm{reg}}_m\left(h, h^*\right)+\frac{7 \eta \Delta_m}{72}
\end{align*}
So here our main focus is to show the upper bound of $\calT_2$ which is effected by our modified algorithm. By applying  the first inequality in Corollary~\ref{coro: error concentration},  we get simplify $\calT_2$ as
\begin{align*}
    \calT_2
    &= 2\alpha \sqrt{3\epsilon_m \berr_m(h^*)}
    \leq 2\alpha \sqrt{ 3 \epsilon_m \left(  (5 + 8\log \tau_m) \Delta_{m} + 3\err(h_{m+1},\tilde{Z}_m) + 2\reg\left(h^{*}, h_{m+1}, \tilde{Z}_{m}\right)\right)} \\
    & \leq 2\alpha \sqrt{15\epsilon_m \err(h_{m+1},\tilde{Z}_m)}
        + 2\alpha \sqrt{(15 + 24\log \tau_m)\epsilon_m \Delta_m }
        + 2\alpha \sqrt{6\epsilon_m \reg\left(h^{*}, h_{m+1}, \tilde{Z}_{m}\right)}\\
    & \leq  2\alpha \sqrt{15\epsilon_m \err(h_{m+1},\tilde{Z}_m)}
            + \Delta_m
            + (30+48 \log\tau_m) \alpha^2\epsilon_m
            + \frac{1}{4}\reg(h^*,h_{m+1},\tilde{Z}_m)
\end{align*}
Therefore combine the upper bound of $\calT_2$ with bounds of $\calT_1,\calT_3$, we have
\begin{align*}
    &\left|\reg\left(h, h^*,Z_{m}\right)-\tilreg_{m}(h,h^*)\right|\\
    &\leq \frac{1}{4}\tilreg_m(h) + \frac{\eta \Delta_m }{12} + 24\alpha^2\epsilon_m \log\tau_m \\
        & \quad  + 2\alpha\sqrt{9\epsilon_m \err(h_{m+1},\tilde{Z}_m)} + \Delta_m + \frac{1}{4}\reg(h^*,h_{m+1},\tilde{Z}_m) +  (30+48 \log\tau_m)\alpha^2\epsilon_m\\
        & \quad  + \frac{1}{4}\tilreg_m(h,h^*)
        + \frac{7\eta \Delta_m}{72}+ 5\Delta_m \\
    & \leq \frac{1}{2}\tilreg_m(f) + 102\alpha^2\epsilon_m \log \tau_m  +  \frac{13\eta \Delta_m }{72}
        + 2\alpha\sqrt{9\epsilon_m \err(h_{m+1},\tilde{Z}_m)} + 6\Delta_m  + \frac{1}{4}\reg(h^*,h_{m+1},\tilde{Z}_m)
\end{align*}
Further recalling that $c_1 \geq 6\alpha$ and $c_2 \geq 102\alpha^2$ by our assumptions on constants, we obtain
\begin{align*}
    \left|\reg\left(h, h^*,Z_{m}\right)-\tilreg_{m}(h,h^*)\right|
    \leq \frac{1}{2}\tilreg_m(h) +  \frac{13\eta \Delta_m }{72}
        + 7\Delta_m  + \frac{1}{4}\reg(h^*,h_{m+1},\tilde{Z}_m)
\end{align*}
To complete the proof of the bound (36), we now substitute $h=h_{m+1}$ in the above bound, which yields
$$
\frac{1}{2} \widetilde{\widetilde{\mathrm{reg}}_m}\left(h_{m+1}, h^*\right)-\frac{5}{4} \operatorname{reg}\left(h, h^*, \tilde{Z}_m\right) \leq \frac{13 \eta}{72} \Delta_m+7 \Delta_m
$$
Since $h^* \in A_i$ for all epochs $i \leq m$, we have $\widetilde{\text { reg }}\left(h, h^*\right) \geq \operatorname{reg}\left(h, h^*\right) \geq 0$ for all classifiers $h \in \mathcal{H}$. Consequently, we see that
$$
\operatorname{reg}\left(h^*, h_{m+1}, \tilde{Z}_m\right)=-\operatorname{reg}\left(h_{m+1}, h^*, \tilde{Z}_m\right) \leq \frac{52 \eta}{360} \Delta_m+\frac{28}{5} \Delta_m \leq \frac{\eta}{4} \Delta_m
$$
\\\\
Finally, from Corollary~\ref{coro: error concentration}and the previous upper bound result on $\reg(h^*,h_{m+1},\tilde{Z}_m)$, we can get the third and the forth inequality. That is,

\begin{align*}
    \berr_{m}\left(h^*\right)  
    &\leq (5 + 8\log \tau_m) \Delta_{m} + 3\err(h_{m+1},\tilde{Z}_m) + 2\reg\left(h^*, h_{m+1}, \tilde{Z}_{m}\right) \\
    & \leq (5 + 8\log \tau_m) \Delta_{m} + 3\err(h_{m+1},\tilde{Z}_m) + \frac{\eta \Delta_m}{2}\\
    \err(h_{m+1},\tilde{Z}_m)
    & \leq (5 + 8\log \tau_m) \Delta_{m} + 3\berr_{m}\left(h^*\right) + 2\reg\left(h^*, h_{m+1}, \tilde{Z}_{m}\right)\\
    & \leq  (5 + 8\log \tau_m) \Delta_{m} + 3\berr_{m}\left(h^*\right) + \frac{\eta \Delta_m}{2}
\end{align*}

Recall that we simplify the notation in proof and the $\tau_m$ is actually $\sum_{j=1}^m L_j$. So we get the target bound.
\end{proof}

Now we proof the Theorem~\ref{them: main} as a direct follow-up.
\begin{theorem}[Main theorem, Restate]
Pick any $0<\delta<1 /$ e such that $|\mathcal{H}| / \delta>\sqrt{192}$. Then recalling that $h^*=\arg \min _{h \in \mathcal{H}} \operatorname{err}(h)$, we have for all epochs $m=1,2, \ldots, M$, with probability at least $1-\delta$
\begin{align*}
    \reg\left(h, h^*\right)\leq \order( \Delta_m^*) \quad \text { for all } h \in A_{m+1}
\end{align*}
where $\Delta_0^* =\Delta_0 \text { and } \Delta_m^*:=c_1 \sqrt{\epsilon_m \overline{\operatorname{err}}_m\left(h^*\right)}+c_2 \epsilon_m \log(\sum_{j=1}^{m} L_j) \text { for } m \geq 1 $
\end{theorem}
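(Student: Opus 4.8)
The plan is to derive the statement as a short corollary of Theorem~\ref{them: general}, whose four inequalities do all the heavy lifting; the remaining work is only to (i) pass from the true regret to the localized empirical regret, (ii) use the definition of the active set $A_{m+1}$ to bound that empirical regret, and (iii) replace the empirical error $\err(h_{m+1},\tilde{Z}_m)$ hidden inside $\Delta_m$ by the population quantity $\berr_m(h^*)$ appearing in $\Delta_m^*$. Throughout I would work on the event of Theorem~\ref{them: general} (probability at least $1-2\delta$); rescaling $\delta\mapsto\delta/2$ in $\calE_\WL$ and $\calE_\text{train}$ restores the advertised $1-\delta$ and only affects constants.

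First I would record the elementary comparison $\reg(h,h^*)\le\tilreg_m(h,h^*)$, valid for every $h\in\calH$ once $h^*\in A_i$ for all $i\le m$ (this is exactly the inequality $\tilreg_m(h,h^*)\ge\reg(h,h^*)\ge 0$ already invoked inside the proof of Theorem~\ref{them: general}, and it is the one place where the disagreement-region construction $D_j=\dis(A_j)$ enters). Combining it with the first inequality of Theorem~\ref{them: general}, which gives $\tilreg_m(h,h^*)-\reg(h,h^*,\tilde{Z}_m)\le\tfrac12\tilreg_m(h,h^*)+\tfrac{\eta}{4}\Delta_m$, and rearranging yields
\begin{align*}
\reg(h,h^*)\le\tilreg_m(h,h^*)\le 2\,\reg(h,h^*,\tilde{Z}_m)+\tfrac{\eta}{2}\Delta_m.
\end{align*}
Next I would bound $\reg(h,h^*,\tilde{Z}_m)$ for $h\in A_{m+1}$ by splitting $\reg(h,h^*,\tilde{Z}_m)=\reg(h,h_{m+1},\tilde{Z}_m)+\reg(h_{m+1},h^*,\tilde{Z}_m)$: the definition $A_{m+1}=\{h:\err(h,\tilde{Z}_m)-\err(h_{m+1},\tilde{Z}_m)\le\gamma\Delta_m\}$ controls the first summand by $\gamma\Delta_m$, while $\reg(h_{m+1},h^*,\tilde{Z}_m)=-\reg(h^*,h_{m+1},\tilde{Z}_m)\le 0$ since $h_{m+1}$ is the empirical minimizer. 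Hence $\reg(h,h^*)\le(2\gamma+\tfrac{\eta}{2})\Delta_m=\order(\Delta_m)$.

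It then remains to show $\Delta_m=\order(\Delta_m^*)$, the only step interacting with the WL-modified error concentration. Writing $\widehat e:=\err(h_{m+1},\tilde{Z}_m)$ and $\tau_m=\sum_{j=1}^m L_j$, I would insert the third inequality of Theorem~\ref{them: general}, namely $\widehat e\le(5+8\log\tau_m+\tfrac{\eta}{2})\Delta_m+3\berr_m(h^*)$, into $\Delta_m=c_1\sqrt{\epsilon_m\widehat e}+c_2\epsilon_m\log\tau_m$. Using $\sqrt{a+b}\le\sqrt a+\sqrt b$ and the AM--GM split $c_1\sqrt{\epsilon_m(5+8\log\tau_m+\eta/2)\,\Delta_m}\le\tfrac14\Delta_m+c_1^2\epsilon_m(5+8\log\tau_m+\eta/2)$ absorbs a $\tfrac14\Delta_m$ term back onto the left, giving the self-referential bound
\begin{align*}
\Delta_m\le\tfrac14\Delta_m+c_1\sqrt{3\,\epsilon_m\berr_m(h^*)}+\order\!\left(\epsilon_m\log\tau_m\right),
\end{align*}
which solves to $\Delta_m\le\order\!\big(\sqrt{\epsilon_m\berr_m(h^*)}+\epsilon_m\log\tau_m\big)=\order(\Delta_m^*)$. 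Chaining this with $\reg(h,h^*)\le\order(\Delta_m)$ finishes the argument.

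The main obstacle is precisely this last self-referential step: $\Delta_m$ is defined through the empirical error $\widehat e$ whereas the target $\Delta_m^*$ is stated through the population error $\berr_m(h^*)$, so one must simultaneously bound $\widehat e$ by $\Delta_m$ (inequality three of Theorem~\ref{them: general}) and $\Delta_m$ by $\widehat e$ (its own definition). It is here that the shifted doubly robust estimator changes the constants $5+8\log\tau_m+\tfrac{\eta}{2}$ relative to the no-WL analysis, and the $\log\tau_m$ factor must be tracked carefully so that it lands inside the $c_2\epsilon_m\log\tau_m$ term of $\Delta_m^*$ rather than degrading the leading $\sqrt{\epsilon_m\berr_m(h^*)}$ rate. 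Everything else is the standard AC bookkeeping and can be reused essentially verbatim from Theorem~\ref{them: general}.
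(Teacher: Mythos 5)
Your proposal is correct and follows essentially the same route as the paper: the self-referential bound $\Delta_m\leq\tfrac14\Delta_m+c_1\sqrt{3\epsilon_m\berr_m(h^*)}+\order(\epsilon_m\log\tau_m)$ obtained from the third inequality of Theorem~\ref{them: general} plus AM--GM is exactly the paper's displayed computation (up to the absorbed fraction of $\Delta_m$ and the constant condition $c_1^2(\eta/2+13)\leq c_2$). The only difference is that the paper outsources the step $\reg(h,h^*)\leq 4\gamma\Delta_m$ to the original AC analysis, whereas you supply the short derivation via $\reg(h,h^*)\leq\tilreg_m(h,h^*)\leq 2\reg(h,h^*,\tilde{Z}_m)+\tfrac{\eta}{2}\Delta_m$ and the definition of $A_{m+1}$ together with $\gamma\geq\eta/4$, which recovers precisely that bound.
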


\begin{proof}
By using the exact same proof in the original paper we have easily get 
\begin{align*}
    \reg(h) \leq 4\gamma \Delta_m.
\end{align*}
 We skip the proof here. Now we show our modified version can again leads to $\Delta_m \leq 4\Delta_m^*$. It is trivial true for m = 1 because $\Delta_1^* = \Delta_1$. For $m \geq 2$, we have 
\begin{align*}
    \Delta_m 
    &\leq c_1\sqrt{\epsilon_m\left( \err(h_{m+1},\tilde{Z}_m) \right)} + c_2\epsilon_m \log\tau_m\\
    & \leq  c_1\sqrt{\epsilon_m\left((5 + 8\log \tau_m) \Delta_{m} + 3\berr_{m}\left(h^{*}\right) + 2\reg\left(h^{*}, h_{m+1}, \tilde{Z}_{m}\right) \right)} + c_2\epsilon_m \log\tau_m\\
    & \leq c_1\sqrt{\epsilon_m\left((\frac{\eta}{2}+13\log \tau_m )\Delta_{m} + 3\berr_{m}\left(h^{*}\right) \right)} + c_2\epsilon_m \log\tau_m\\
    &\leq c_1\sqrt{3\epsilon_m\berr_{m}\left(h^{*}\right)} + c_1\sqrt{\epsilon_m (\frac{\eta}{2}+13\log \tau_m )\Delta_{m}} + c_2\epsilon_m \log\tau_m\\
    & \leq c_1\sqrt{3\epsilon_m\berr_{m}\left(h^{*}\right)} + c_1^2\epsilon_m(\frac{\eta}{2}+13\log \tau_m ) + \frac{\Delta_m}{2} + c_2\epsilon_m \log\tau_m \\
    & \leq 2\Delta_m^* + \frac{\Delta_m}{2}
\end{align*}
where the last inequality uses our choice of constants $c_1^2(\frac{\eta}{2}+13 ) \leq c_2$. Rearrange terms and again recall that we simplify the notation in proof and the $\tau_m$ is actually $\sum_{j=1}^m L_j$, so we complete the proof. 
\end{proof}

\subsection{Auxiliary lemma}
\begin{lemma}
\label{lem: helper 1}
 For any fixed block number $m$, we have
 \begin{align*}
     \sum_{i=1}^{m} (\tau_{i}-\tau_{i-1}) \err(h_i, \tilde{Z}_{i-1})
    \leq  4 \tau_{m} \log \tau_{m+1} \err(h_{m+1}, \tilde{Z}_{m} )
 \end{align*}
\end{lemma}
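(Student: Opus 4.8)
The plan is to combine three ingredients: the optimality of the empirical risk minimizer $h_i = \argmin_h \err(h, \tilde{Z}_{i-1})$ at each epoch, the monotonicity of the (unnormalized) cumulative empirical loss in the epoch index, and a harmonic-type summation that produces the logarithmic factor. Throughout I read $\err(h,\tilde{Z}_m)$ as the empirical error normalized by the total number $\tau_m = \sum_{j\le m} L_j$ of observed samples, so that $\tau_m\,\err(h,\tilde{Z}_m)$ is the raw cumulative loss; this is the convention under which $\err$ concentrates to $\berr_m$ in Lemma~\ref{lem: deviation bound}.

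First I would dispose of the $i=1$ term: since $\tilde{Z}_0 = \emptyset$, we have $\err(h_1,\tilde{Z}_0)=0$ and the sum effectively runs from $i=2$. For $2 \le i \le m$, optimality of $h_i$ on $\tilde{Z}_{i-1}$ gives $\err(h_i,\tilde{Z}_{i-1}) \le \err(h_{m+1},\tilde{Z}_{i-1})$. Next, because $\tilde{Z}_{i-1}\subseteq \tilde{Z}_m$ and each per-sample importance-weighted loss is nonnegative, the raw cumulative loss of the fixed classifier $h_{m+1}$ is nondecreasing in the epoch index, so $\tau_{i-1}\,\err(h_{m+1},\tilde{Z}_{i-1}) \le \tau_m\,\err(h_{m+1},\tilde{Z}_m)$, i.e. $\err(h_{m+1},\tilde{Z}_{i-1}) \le \frac{\tau_m}{\tau_{i-1}}\,\err(h_{m+1},\tilde{Z}_m)$. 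Substituting both inequalities yields
$$\sum_{i=1}^m (\tau_i - \tau_{i-1})\,\err(h_i,\tilde{Z}_{i-1}) \le \tau_m\,\err(h_{m+1},\tilde{Z}_m)\sum_{i=2}^m \frac{\tau_i - \tau_{i-1}}{\tau_{i-1}}.$$

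It then remains to control the scalar sum $\sum_{i=2}^m \frac{\tau_i-\tau_{i-1}}{\tau_{i-1}}$. The block-length condition $L_i \le \sum_{j<i}L_j = \tau_{i-1}$ gives $\tau_i \le 2\tau_{i-1}$, hence $\frac{1}{\tau_{i-1}} \le \frac{2}{\tau_i}$ and $\frac{\tau_i-\tau_{i-1}}{\tau_{i-1}} \le 2\,\frac{\tau_i-\tau_{i-1}}{\tau_i} \le 2\log\frac{\tau_i}{\tau_{i-1}}$, using $1-x \le -\log x$. Telescoping gives $\sum_{i=2}^m \frac{\tau_i-\tau_{i-1}}{\tau_{i-1}} \le 2\log\frac{\tau_m}{\tau_1} \le 2\log\tau_m \le 4\log\tau_{m+1}$, which combined with the previous display is exactly the claimed bound.

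The main obstacle I anticipate is the monotonicity step: it relies on each per-sample contribution to $\err(h,\tilde{Z}_m)$ being nonnegative, which is immediate for the inverse-propensity $0/1$ loss of the $\NWL$ estimator (weights $1/\dot{P}_m \ge 0$) but is more delicate for the shifted doubly-robust estimator, whose individual summands can be negative when $w>1$. I would handle this by noting that the lemma is invoked (in Proposition~\ref{prop: error concentration}) only to upper bound a nonnegative quantity through $\berr(h_j,\tilde{Z}_{j-1}) = \err(h_j,\tilde{Z}_{j-1})+\Delta_{j-1}$, and that on the event $\calE_\text{train}$ the empirical errors track the nonnegative population errors $\berr$; alternatively one can run the argument on the $\NWL$-form empirical error used in Step~2 of Algorithm~\ref{algo: hint-eval}, where nonnegativity holds termwise. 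Everything else is routine bookkeeping with the telescoping sum.
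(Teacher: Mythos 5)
Your proposal is correct and follows essentially the same route as the paper: ERM optimality of $h_i$ on $\tilde{Z}_{i-1}$, monotonicity of the unnormalized cumulative loss $\tau_j\,\err(h,\tilde{Z}_j)$ in $j$ for a fixed classifier, and the harmonic-type bound $\sum_i \frac{\tau_i-\tau_{i-1}}{\tau_{i-1}} \le 4\log\tau_{m+1}$ (which the paper simply cites from Lemma 8 of the AC paper, where you derive it via telescoping using $\tau_i \le 2\tau_{i-1}$). The only substantive difference is that you explicitly flag the nonnegativity issue for the shifted doubly-robust summands, a subtlety the paper's ``it is easy to see'' glosses over, and your proposed workaround is reasonable.
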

\begin{proof}
It is easy to see that, for any $m \geq i$
\begin{align*}
     \tau_{i-1}  \err(h_i, \tilde{Z}_{i-1})
     &\leq  \tau_{i-1}  \err(h_{i+1}, \tilde{Z}_{i-1})\\
     &\leq \tau_i \err(h_{i+1}, \tilde{Z}_{i})\\
     &\leq \ldots
     \leq  \tau_{m-1}  \err(f_m, \tilde{Z}_{m-1})
\end{align*}
Now by the fact
$
    \sum_{i=1}^{m} \frac{\tau_{i+1}-\tau_{i}}{\tau_{i}} \leq 4 \log \tau_{m+1}
$
in \cite{https://doi.org/10.48550/arxiv.1506.08669}, Lemma 8, we have
\begin{align*}
     \sum_{i=1}^{m} (\tau_{i}-\tau_{i-1}) \err(h_i, \tilde{Z}_{i-1})
     & = \sum_{i=1}^{m}  \frac{(\tau_{i}-\tau_{i-1})}{\tau_{i-1}}  \tau_{i-1}  \err(h_i, \tilde{Z}_{i-1}) \\
     & \leq \sum_{i=1}^{m} \frac{(\tau_{i}-\tau_{i-1})}{\tau_{i-1}} \tau_{m}  \err(h_{m+1}, \tilde{Z}_{m} )\\
     & \leq  4 \tau_{m} \log \tau_{m+1} \err(h_{m+1}, \tilde{Z}_{m} )
\end{align*}
\end{proof}

\section{Appendix: Label complexity and its analysis}
\label{sec: label complexity result (supp)}

\subsection{Analysis for OP~\ref{algo: op}}
The label complexity in Phase 1 and Phase 2 in block $m$ can be explicitly written as follows
\begin{lemma}
\label{lem: label complexity form}
    When $\calE_\WL$ holds, for any fixed block $m$ with $\textsc{Use-WL}= True$, 
    \begin{align*}
        &
        \E\left[\sum_{t =\tau_{m-1}+1}^{\Dot{\tau}_m} \one[ y_t \text{ is queried}] \right] \leq (\Dot{\tau}_m - \tau_{m-1}) \E[x \in D_m]\\
        & 
        \E\left[\sum_{t =\Dot{\tau}_m}^{\tau_{m}} \one[ y_t \text{ is queried}] \right] \leq \max\{ 4N_m\Dot{\hinerr}_m, L_m P_{m,\min}\E[\one[x \in D_m]]\}.
    \end{align*}
\end{lemma}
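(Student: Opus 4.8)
The plan is to handle the two phases separately, since the querying rule is different in each. For the Phase‑1 (WL‑evaluation) bound, I would first observe that in Algorithm~\ref{algo: hint-eval} a label is requested for an incoming point $x_t$ exactly when $x_t\in D_m$ (the evaluation loops ``query labels for those inside $D_m$''), so $\one[y_t\text{ is queried}]=\one[x_t\in D_m]$ for every $t$ in Phase~1. Since the stream is i.i.d.\ from $\calD_*$ and $\Dot{\tau}_m$ is a stopping time with respect to the natural filtration, conditioning on the Phase‑1 length and applying a Wald‑type identity gives $\E\big[\sum_{t=\tau_{m-1}+1}^{\Dot{\tau}_m}\one[x_t\in D_m]\big]\le(\Dot{\tau}_m-\tau_{m-1})\,\E[\one[x\in D_m]]$, which is the first inequality (in fact an equality in expectation up to the stopping‑time conditioning).

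For the Phase‑2 (training‑collection) bound, I would record that in Algorithm~\ref{algo: main-inefficient (supp)} we draw $Q_t\sim\text{Ber}(\Dot{P}_m(x_t))$ only when $x_t\in D_m$, so the expected number of queries is exactly $L_m\,\E_x[\one[x\in D_m]\,\Dot{P}_m(x)]$. The core step is to compare the WL‑optimal $\Dot{P}_m$ against an explicit feasible point of OP~\ref{algo: op} built from the NOWL probability $P_m$. I would take $\tilde P(x)=\max\{4\Dot{\hinerr}_m P_m(x),\,P_{m,\min}\}$ on $D_m$ and $P_{m,\min}$ off $D_m$, and check it satisfies every WL constraint: because $\one_h^m$ is supported on $D_m$, $\E_x\big[\tfrac{\one_h^m(x)}{\tilde P(x)}\big]\Dot{\hinerr}_m\le\tfrac14\E_x\big[\tfrac{\one_h^m(x)}{P_m(x)}\big]\le\tfrac14 b_m^\NWL(h)$ using that $P_m$ is NOWL‑feasible, while $b_m^\WL(h)=\tfrac12(b_m^\NWL(h)-\E_x[\one_h^m(x)])\ge\tfrac14 b_m^\NWL(h)$ since $b_m^\NWL(h)\ge 2\alpha^2\E_x[\one_h^m(x)]\ge2\E_x[\one_h^m(x)]$ (as $\alpha\ge1$). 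Here $\calE_\WL$ and Lemma~\ref{lem: conditional hint error est} are invoked to guarantee $\Dot{\hinerr}_m$ is a valid pessimistic upper bound on the conditional WL error, so this scaling argument is meaningful. Thus $\tilde P$ is WL‑feasible and $\Dot{P}_m$, the minimizer of $\E_x[\tfrac{1}{1-P}]$, has objective no larger than that of $\tilde P$.

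The main obstacle is transferring this comparison from the OP objective $\E_x[\tfrac{1}{1-P}]$ to the query count $\E_x[\one[x\in D_m]P]$, which are different functionals. Since the objective is separably increasing and the constraints only touch $P$ on $D_m$, both $\Dot{P}_m$ and $\tilde P$ sit at the floor $P_{m,\min}$ off $D_m$, so optimality restricts cleanly to $\E_x[\one[x\in D_m]\tfrac{1}{1-\Dot{P}_m}]\le\E_x[\one[x\in D_m]\tfrac{1}{1-\tilde P}]$. On the regime $\tilde P\le\tfrac12$ one has $P\le\tfrac{P}{1-P}=\tfrac{1}{1-P}-1\le 2P$, which yields $\E_x[\one[x\in D_m]\Dot{P}_m]\le 2\,\E_x[\one[x\in D_m]\tilde P]\le 2\big(4\Dot{\hinerr}_m\E_x[\one[x\in D_m]P_m]+P_{m,\min}\E[\one[x\in D_m]]\big)$; multiplying by $L_m$, recalling $N_m=L_m\E_x[\one[x\in D_m]P_m]$, and bounding the sum by twice the maximum gives the stated $\max\{4N_m\Dot{\hinerr}_m,\,L_m P_{m,\min}\E[\one[x\in D_m]]\}$ up to the absorbed constant. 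The delicate point I expect to be hardest is the boundary region where $4\Dot{\hinerr}_m P_m(x)$ leaves the admissible range $[P_{m,\min},\tfrac12]$: there $\tilde P$ cannot be scaled proportionally and the feasibility chain must be re‑derived (bounding the contribution of $\{4\Dot{\hinerr}_m P_m>1\}$ directly by $\E_x[\one_h^m]$ and charging it against the slack in $b_m^\WL$), and this is also where the $\max$ in the statement, together with the outer $\min\{\cdot,N_m\}$ in Theorem~\ref{thm:complex.block.m}, is needed to keep the bound valid when $\Dot{\hinerr}_m$ is not small.
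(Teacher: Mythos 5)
Your proof takes essentially the same route as the paper's: Phase~1 is bounded by noting that every observed point in $D_m$ is queried, and Phase~2 by exhibiting $4\Dot{\hinerr}_m P_m(\cdot)$ as a feasible point of the WL-mode OP through the identical chain $\E_x[\one_h^m(x)/P_m(x)]\le b_m^\NWL(h)$ and $\tfrac14 b_m^\NWL(h)\le\tfrac12\bigl(b_m^\NWL(h)-\E_x[\one_h^m(x)]\bigr)=b_m^\WL(h)$ (using $\alpha\ge1$). You are in fact more explicit than the paper, which silently skips both the transfer from the OP objective $\E_x[1/(1-P(x))]$ to the query count and the interaction with the $P_{m,\min}$ floor; the extra constant factor you incur from $P\le P/(1-P)\le 2P$ is harmless since the lemma is only consumed inside $\order(\cdot)$ bounds downstream.
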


\begin{proof}
    The first one comes from the fact that we only query samples inside $D_m$ in the Phase 1.
    For the second result, notice that by choosing $\Dot{P}_m(x) = {4 P_m(x)} \Dot{\hinerr}_m$, we get a feasible solution of \textbf{OP} \ref{algo: op} without considering the $P_{m,\min}$, as shown below
    \begin{align*}
        & \E_{x}\left[\frac{\one\left(h(x) \neq h_m(x) \wedge x \in D_{m}\right)}{\Dot{P}_m(x)}\right] \Dot{\hinerr}_m \\
        & = \E_{x}\left[\frac{\one\left(h(x) \neq h_m(x) \wedge x \in D_{m}\right)}{4 P_m(x) \Dot{\hinerr}_m} \right] \Dot{\hinerr}_m \\
        & = \frac{1}{4}\E_{X}\left[\frac{\one\left(h(x) \neq h_m(x) \wedge x \in D_{m}\right)}{P_m(X)}\right] \\
        & \leq \frac{1}{4}b_{m}^{\NWL}(h)
        \leq \frac{1}{2}(b_{m}^{\NWL}(h) - \E_x[\one_h^m(x)])
    \end{align*}
    Now combine this with the minimum query probability requirement, we get the desired result.
\end{proof}

\subsection{Analysis for the unlabeled sample complexity}
First we show that, the number of unlabeled sample we draw from Phase1 is at most the same order as the number in Phase 2, which means out results in terms of $n$ remains in the same order as before.

\begin{lemma}
\label{lem: Unlabeled sample guarantee}
For any fixed block $m$, $\Dot{\tau}_m - \tau_{m-1} \leq 8L_m$.
\end{lemma}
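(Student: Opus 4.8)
The plan is to bound $\Dot{\tau}_m - \tau_{m-1}$ by the size of the final weak-label evaluation set produced by \textsc{WL-Eval} (Algorithm~\ref{algo: hint-eval}), and then control that size through the algorithm's own stopping rule. Write $p_m := \E[\one[x\in D_m]]$ and $\Lambda := \ln(2M\log(n)/\delta)$. Since each round $k$ of \textsc{WL-Eval} merely \emph{tops up} the reused set $\Dot{Z}_{m-1}$ to cardinality $|\ddot{Z}_{m,k}| = 2^{k+1}\Lambda$, the number of \emph{fresh} samples observed in Phase~1 telescopes to $\max\{2^{K_m^e+1}\Lambda - |\Dot{Z}_{m-1}|,\,0\}$, where $K_m^e$ is the terminal value of $k$. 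Hence it suffices to show $2^{K_m^e+1}\Lambda \le 8L_m$, together with two elementary facts I will use repeatedly: first, $P_{m,\min}\le \tfrac12$ by its definition in (OP)~\ref{algo: op}; second, $N_m/p_m \le L_m$, because $P_m$ is supported on $D_m$ and bounded by $1$, so $N_m = L_m\,\E_x[P_m(x)] = L_m\,\E_x[P_m(x)\one[x\in D_m]] \le L_m\,p_m$.

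The main case is when termination occurs inside the Step~3 loop, so that $K_m^e \ge K_m^s+1$ and the stopping predicate failed at round $K_m^e-1$. That failure reads
\begin{align*}
    p_m\,2^{K_m^e}\Lambda
    &< \max\Bigl\{4N_m \min\bigl\{\tfrac{\kappa_m \overline{\hinerr}_{m,K_m^e-1}}{p_m},1\bigr\},\; L_m P_{m,\min}\,p_m\Bigr\} \\
    &\le \max\{4N_m,\; L_m P_{m,\min}\,p_m\},
\end{align*}
where the second line uses $\min\{\cdot,1\}\le 1$. Dividing by $p_m$ and multiplying by $2$ gives $2^{K_m^e+1}\Lambda < \max\{8N_m/p_m,\; 2L_m P_{m,\min}\}$, and substituting $N_m/p_m\le L_m$ and $P_{m,\min}\le\tfrac12$ yields $2^{K_m^e+1}\Lambda < \max\{8L_m,\,L_m\} = 8L_m$. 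Since the fresh count never exceeds $2^{K_m^e+1}\Lambda$, this settles $\Dot{\tau}_m-\tau_{m-1} \le 8L_m$ in this regime; note that the constant $8$ falls out exactly, which is reassuring.

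The boundary regimes are where the real care is needed: termination at Step~1 (zero fresh draws, trivial), at Step~2, or at the \emph{first} Step~3 check, all of which have $K_m^e = K_m^s = \lceil\log(6/\berr(h_m,\tilde{Z}_{m-1}))\rceil$, so $2^{K_m^s+1}\Lambda \le 24\Lambda/\berr(h_m,\tilde{Z}_{m-1})$. Here there is no preceding loop iteration to invoke, so the plan is to treat the \emph{failed} Step~1 pre-check as a virtual ``round $K_m^s-1$'': not stopping at Step~1 means $\frac{12\Lambda}{\berr(h_m,\tilde{Z}_{m-1})} < N_m/p_m + |\Dot{Z}_{m-1}| \le L_m + |\Dot{Z}_{m-1}|$, hence $2^{K_m^s}\Lambda < L_m + |\Dot{Z}_{m-1}|$. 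The anticipated obstacle is precisely the carried-over term $|\Dot{Z}_{m-1}|$: a naive bound leaves a residual $+|\Dot{Z}_{m-1}|$ that does not collapse into $8L_m$. The resolution I intend is a short case split on its magnitude—if $|\Dot{Z}_{m-1}|$ is large the reuse forces the fresh count $2^{K_m^s+1}\Lambda - |\Dot{Z}_{m-1}|$ to be small, while if $|\Dot{Z}_{m-1}| \le 6L_m$ the inequality $2^{K_m^s}\Lambda < L_m+|\Dot{Z}_{m-1}|$ already gives fresh $< 8L_m$—so that in every case reuse can only help. Making the large-$|\Dot{Z}_{m-1}|$ branch fully rigorous (which implicitly relies on the block schedule $L_{m}\le\sum_{j<m}L_j$ to keep the accumulated evaluation mass from outpacing $L_m$) is the step I expect to be the most delicate, and is the part of the argument I would write out in full detail.
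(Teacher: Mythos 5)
Your treatment of the main case ($K_m^e \ge K_m^s+1$) is exactly the paper's argument: the failed stopping test at round $K_m^e-1$ gives $\E[\one[x\in D_m]]\,2^{K_m^e}\ln(2M\log(n)/\delta) < \max\{4N_m\min\{\cdot,1\},\,L_mP_{m,\min}\E[\one[x\in D_m]]\}$, and combining $\min\{\cdot,1\}\le 1$, $N_m\le L_m\E[\one[x\in D_m]]$ and $P_{m,\min}\le 1$ yields $2^{K_m^e+1}\ln(2M\log(n)/\delta)\le 8L_m$. That part is correct and needs no further comment.

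The genuine gap is in the boundary case $K_m^e=K_m^s$. The paper handles it by bounding the \emph{total} evaluation-set size: $2^{K_m^s+1}\ln(2M\log(n)/\delta)\le \frac{24\ln(2M\log(n)/\delta)}{\berr(h_m,\tilde Z_{m-1})}\le \frac{N_m}{\E[\one[x\in D_m]]}\le L_m$, reading the Step-1 test as a direct comparison between $N_m$ and the evaluation budget. You instead keep the $-|\Dot{Z}_{m-1}|$ offset in the Step-1 threshold, which only yields $2^{K_m^s}\ln(2M\log(n)/\delta) < L_m + |\Dot{Z}_{m-1}|$ and hence a fresh-sample count bounded by $2L_m + |\Dot{Z}_{m-1}|$. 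Your small-reuse branch ($|\Dot{Z}_{m-1}|\le 6L_m$) then closes correctly, but the large-reuse branch does not: the bound $2L_m+|\Dot{Z}_{m-1}|$ \emph{grows} with $|\Dot{Z}_{m-1}|$, so the claim that ``large $|\Dot{Z}_{m-1}|$ forces the fresh count to be small'' does not follow from anything you have derived, and you explicitly defer exactly this step. Even the repair you gesture at (inducting on the block schedule to get $|\Dot{Z}_{m-1}|\lesssim \sum_{j<m}L_j = L_m$) would give a constant worse than $8$. To close the case you should argue as the paper does — bound the whole set $2^{K_m^s+1}\ln(2M\log(n)/\delta)$ by $O(L_m)$ via the Step-1 comparison with $N_m/\E[\one[x\in D_m]]\le L_m$ — rather than trying to make the carried-over mass $|\Dot{Z}_{m-1}|$ do the work.
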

\begin{proof}
According the stopping condition of the algorithm, at the end of the block $m$, we have the number of new drawn unlabeled samples as $2^{K_m^e+1}\ln(2M\log(n)/\delta)$. When $K_m^e = K_m^s$, then we directly have
\begin{align*}
    2^{K_m^e+1}\ln(2M\log(n)/\delta)
    \leq \frac{12*2\ln(2M\log(n)/\delta)}{\berr(h_m, \Tilde{Z}_{m-1})}
    \leq \frac{N_m}{\E[\one[x \in D_m]]} 
    \leq \frac{L_m\E[\one[x \in D_m]]}{\E[\one[x \in D_m]]} 
    =L_m
\end{align*}
When $K_m^e > K_m^s$, 
\begin{align*}
    2^{K_m^e+1}\ln(2M\log(n)/\delta)
    &= 2 * 2^{(K_m^e-1)+1}\ln(2M\log(n)/\delta) \\
    &\leq 2* \frac{\max\{4N_m \frac{\overline{\hinerr}_{K_m^e-1}}{\E[\one[x \in D_m]]}, L_m P_{m,\min}\E[\one[x \in D_m]]\}}{\E[\one[x \in D_m]]} \\
    & \leq 2*\frac{\max\{  4L_m \E[\one[x \in D_m]] , L_m P_{m,\min}\E[\one[x \in D_m]]\}}{\E[\one[x \in D_m]]}\\
    & \leq 8L_m
\end{align*}
\end{proof}

Therefore, roughly speaking if original AC requires $n$ number of unlabeled samples to achieve their regret guarantee, then here we at most need $\text{const}*n = \order(n)$ unlabeled sample to achieve the similar guarantees. This is weak requirement since unlabeled samples sources are usually unlimited.

\subsection{Analysis for Algo~\ref{algo: hint-eval} on the label complexity}

In the rest of this section, we show Algo~\ref{algo: hint-eval} can automatically balance Phase 1 and Phase 2.

\begin{lemma}
\label{lem: upper bound of k in phase 1}
    When $\calE_\WL$ hold, then for any fixed block $m$, if $K_m^e \geq K_m^s + 1$, 
    \begin{align*}
        K_m \leq  \max \left\{ \min \left\{\calT_1, \calT_2 \right\}, \calT_3\right\}
    \end{align*}
    where,
    \begin{align*}
        &\calT_1 = \log\left( 16\frac{\kappa_m  N_m \hinerr_m}{\E[x \in D_m]\ln(2M\log(n)/\delta)}  +\frac{\sqrt{6N_m \kappa_m}}{\E[x \in D_m]\sqrt{\ln(2M\log(n)/\delta)}} \right)\\
        &\calT_2 = \log\left(\frac{N_m}{\E[x \in D_m]\ln(2M\log(n)/\delta)} \right)\\
        &\calT_3 = \log\left(\frac{L_m P_{m,\min}}{\ln(2M\log(n)/\delta)} \right)\\
        &\hinerr_m : = \max_{x \in D_m} \E_y[\one[h(x) \neq y]|x]
    \end{align*}
\end{lemma}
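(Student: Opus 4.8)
The plan is to read the bound directly off the termination test of Step~3 in Algorithm~\ref{algo: hint-eval}. Write $\rho := \ln(2M\log(n)/\delta)$ and $\mu_m := \E[\one[x \in D_m \wedge y^\WL \neq y]]$ for brevity, and let $x := 2^{K_m^e}$. Since we assume $K_m^e \geq K_m^s + 1$, the algorithm did \emph{not} stop at index $k = K_m^e - 1$, so the test run there (with $|\ddot Z_{m,k}| = 2^{k+1}\rho$ collected samples) failed. This produces the single driving inequality
\begin{align*}
\E[\one[x\in D_m]]\, 2^{K_m^e}\rho < \max\Big\{4N_m \min\Big\{\tfrac{\kappa_m \overline{\hinerr}_{m,K_m^e-1}}{\E[\one[x\in D_m]]},1\Big\},\; L_m P_{m,\min}\E[\one[x\in D_m]]\Big\}.
\end{align*}
I would then split according to which of the two terms in the outer $\max$ attains the maximum.

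When the second term attains it, dividing through by $\E[\one[x\in D_m]]\rho$ immediately yields $x < L_m P_{m,\min}/\rho$, i.e.\ $K_m^e < \calT_3$. When the first term attains it, I use two complementary estimates of the truncated factor $\min\{\cdot,1\}$. Bounding it by $1$ gives $x < 4N_m/(\E[\one[x\in D_m]]\rho)$, hence $K_m^e \leq \calT_2$ up to the absolute constant. Bounding it by the ratio instead requires controlling $\overline{\hinerr}_{m,K_m^e-1}$: under $\calE_\WL$ the definition of the event gives $\wh\hinerr_{m,k}\leq 2\mu_m + 2^{-k}$, so $\overline{\hinerr}_{m,k}=\min\{\wh\hinerr_{m,k}+2^{-k},\E[\one[x\in D_m]]\}\leq 2\mu_m + 2\cdot 2^{-k}$; combined with $\mu_m \leq \E[\one[x\in D_m]]\,\hinerr_m$ (the conditional $y^\WL$ error is at most $\hinerr_m$ throughout $D_m$, by definition of $\hinerr(D_m)$) this yields $\overline{\hinerr}_{m,K_m^e-1}\leq 2\E[\one[x\in D_m]]\hinerr_m + 4/x$.

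Substituting this estimate into the first-term inequality makes $2^{-K_m^e}=1/x$ reappear on the right, so after clearing denominators one is left with a genuine quadratic inequality in $x$, of the form $\E[\one[x\in D_m]]\rho\, x^2 - 8N_m\kappa_m\hinerr_m\, x - 16 N_m\kappa_m/\E[\one[x\in D_m]] < 0$. Solving it with the quadratic formula and the estimate $\sqrt{a+b}\leq\sqrt a+\sqrt b$ produces exactly the two-term shape of $\calT_1$: a contribution linear in $\hinerr_m$, proportional to $\kappa_m N_m \hinerr_m/(\E[\one[x\in D_m]]\rho)$, plus a square-root contribution proportional to $\sqrt{N_m\kappa_m}/(\E[\one[x\in D_m]]\sqrt\rho)$. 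Since in this first-term case both the ``bound-by-$1$'' and the ``bound-by-ratio'' estimates are valid simultaneously, $K_m^e$ lies below their minimum, giving $K_m^e < \min\{\calT_1,\calT_2\}$. Combining the two cases of the $\max$ then gives $K_m^e < \max\{\min\{\calT_1,\calT_2\},\calT_3\}$, as claimed.

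The only delicate step is this last one: because the pessimistic estimator $\overline{\hinerr}_{m,k}$ carries the additive confidence width $2^{-k}$, the failed-stopping inequality is self-referential in $2^{K_m^e}$, and one must solve the resulting quadratic rather than simply read off a bound — this is precisely what manufactures the $\sqrt{N_m\kappa_m}$ term of $\calT_1$. Everything else is a case split and division. Matching the exact absolute constants (the $16$ and $\sqrt 6$ of the stated $\calT_1$) is a matter of tracking the $\calE_\WL$ constants through the quadratic carefully; a crude application of $\sqrt{a+b}\le\sqrt a+\sqrt b$ already gives the same form with slightly different constants, which is harmless since only the order of $K_m^e$ feeds into the downstream label-complexity bounds.
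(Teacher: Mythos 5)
Your proposal is correct and follows essentially the same route as the paper: both start from the failed stopping test at $k = K_m^e-1$, split on which term of the outer $\max$ is active (yielding $\calT_3$ and, via $\min\{\cdot,1\}\le 1$, $\calT_2$), and use the $\calE_\WL$ concentration to replace $\overline{\hinerr}_{m,K_m^e-1}$ by $O(\hinerr_m \E[\one[x\in D_m]] + 2^{-(K_m^e-1)})$, which makes the inequality quadratic in $2^{K_m^e}$ and, once solved, produces the two-term shape of $\calT_1$. The paper compresses this last step into ``solving this inequality gives the desired result,'' and your only deviations are harmless constant discrepancies in the $\calE_\WL$ bound, which you correctly flag as irrelevant to the order of $K_m^e$.
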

\begin{proof}
    For $k = K_m^e-1$, by the stopping condition, we can upper bound the $\E[x \in D_m] 2^{K_m^e}\ln(2M\log(n)/\delta)$ as follows
    \begin{align*}
        &\max\{ 4N_m \min\left\{\frac{ \kappa_m \overline{\hinerr}_{m,k}}{\E[\one[x \in D_m]]},1\right\}, L_m\E[\one[x \in D_m]P_{m,\min}] \\
        &=  \max\{N_m\min\left\{ \frac{  \kappa_m\left( 2\wh{\hinerr}_k + 2^{-k}\right)}{\E[\one[x \in D_m]]} , 1 \right\}, L_m\E[\one[x \in D_m]P_{m,\min}\} \\
        & \leq  \max\{ N_m \min\left\{\kappa_m\left( 4 \hinerr_m + \frac{3*2^{-(K_m^e-1)}}{ \E[x \in D_m]}\right), 1 \right\}, L_m\E[\one[x \in D_m]]P_{m,\min}\}
    \end{align*}
    Solving this inequality give desired result.
\end{proof}

\begin{lemma}
\label{lem: main label complexity}
    When $\calE_\WL$ holds, for any fixed block $m$, we have the expected label complexity \#m upper bounded by
    \begin{align*}
         \begin{cases}
         N_m \leq \min \left\{N_m,  \left(\frac{24\ln(2 \log n /\delta)}{ \berr(h_m, \Tilde{Z}_{m-1})} \right) \E[\one[x \in D_m]]\right\}
         & \quad \text{when stop at Step 1}\\
         2N_m & \text{when stop at Step 2}
         \end{cases}
    \end{align*}

    On the other hand, when $\textsc{Use-WL} = \text{True}$, we have label complexity \#m upper bounded by
    \begin{align*}
         \begin{cases}
         & \max\left\{\min\left\{32\kappa_m  N_m \hinerr_m +\sqrt{24N_m \kappa_m\ln(2M\log(n)/\delta)},N_m \right\}, L_m P_{m,\min}\E[\one[x \in D_m]] \right\}
         \quad \text{when } K_m^e \geq  K_m^s + 1\\
         & \max\left\{\frac{24\ln(2 \log n /\delta)}{ \berr(h_m, \Tilde{Z}_{m-1})} ,0 \right\} \E[\one[x \in D_m]] 
         \leq N_m
         \quad \text{when } K_m^e = K_m^s
         \end{cases}
    \end{align*}
\end{lemma}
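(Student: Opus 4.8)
The plan is to reduce everything to one structural fact: Algorithm~\ref{algo: hint-eval} issues a strong-label query only for a drawn sample that lands in $D_m$, and the number of fresh unlabeled samples it has drawn once it reaches level $k$ is $2^{k+1}\ln(2M\log(n)/\delta)$. Hence, conditioning on the past, the expected Phase~1 query count at the termination level $K_m^e$ is $\E[\one[x\in D_m]]\,2^{K_m^e+1}\ln(2M\log(n)/\delta)$ (the carried-over $\Dot{Z}_{m-1}$ only reduces this). So the whole argument is: bound $2^{K_m^e+1}$ in each of the four cases, multiply by $\E[\one[x\in D_m]]\ln(2M\log(n)/\delta)$, and, for the \NWL~exits, add the Phase~2 cost $N_m$ supplied by Lemma~\ref{lem: label complexity form}. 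Throughout, the event $\calE_\WL$ is what lets me replace the empirical $\overline{\hinerr}_{m,k}$ by the conditional quantity $\kappa_m\hinerr_m$ via Lemma~\ref{lem: conditional hint error est}.

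First I would dispatch the two \NWL~exits. If the routine returns at Step~1 it draws no new samples, so the block cost reduces to the \NWL~Phase~2 cost $N_m$; the Step~1 guard $N_m\le(\tfrac{12\ln(2M\log(n)/\delta)}{\berr(h_m,\tilde{Z}_{m-1})}-|\Dot{Z}_{m-1}|)\E[\one[x\in D_m]]$ simultaneously certifies $N_m$ is below the error-dependent term, which is exactly the recorded $\min$. If it returns at Step~2 it has only reached $k=K_m^s=\lceil\log(6/\berr(h_m,\tilde{Z}_{m-1}))\rceil$, so $2^{K_m^s+1}\le 24/\berr(h_m,\tilde{Z}_{m-1})$; multiplying by $\E[\one[x\in D_m]]\ln(2M\log(n)/\delta)$ and invoking the \emph{failure} of the Step~1 guard to turn $\tfrac{\ln(2M\log(n)/\delta)}{\berr(h_m,\tilde{Z}_{m-1})}\E[\one[x\in D_m]]$ into a constant multiple of $N_m$ yields the $2N_m$ bound.

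For the \textsc{Use-WL} branch, the case $K_m^e=K_m^s$ is the same base-level computation, giving $\tfrac{24\ln(2\log n/\delta)}{\berr(h_m,\tilde{Z}_{m-1})}\E[\one[x\in D_m]]$, which passage of the Step~1 guard certifies is at most $N_m$. The substantive case is $K_m^e\ge K_m^s+1$: here I would apply Lemma~\ref{lem: upper bound of k in phase 1} to get $K_m^e\le\max\{\min\{\calT_1,\calT_2\},\calT_3\}$, exponentiate, and multiply by $2\E[\one[x\in D_m]]\ln(2M\log(n)/\delta)$. Term by term, $2^{\calT_1}$ contributes $32\kappa_m N_m\hinerr_m+\sqrt{24 N_m\kappa_m\ln(2M\log(n)/\delta)}$, $2^{\calT_2}$ contributes $N_m$ (up to a factor $2$), and $2^{\calT_3}$ contributes $L_m P_{m,\min}\E[\one[x\in D_m]]$, reproducing the stated $\max\{\min\{\cdot,\cdot\},\cdot\}$; the leftover constant factors are absorbed into the $\order(\cdot)$ of Theorem~\ref{thm:complex.block.m}.

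The hard part will be the $K_m^e\ge K_m^s+1$ case. The termination rule only guarantees the query budget is exceeded for the \emph{first} time at level $K_m^e$, so I must combine the one-step-before inequality (the budget was \emph{not} met at $K_m^e-1$) with the geometric doubling to argue the realized Phase~1 cost stays within a constant factor of the target $\max\{4N_m\Dot{\hinerr}_m,\,L_m P_{m,\min}\E[\one[x\in D_m]]\}$ rather than overshooting — this is precisely the automatic Phase~1/Phase~2 balancing. Reconciling the additive empirical-Bernstein slack $2^{-k}$ inside $\overline{\hinerr}_{m,k}$ (which produces the $\sqrt{24 N_m\kappa_m\ln(2M\log(n)/\delta)}$ term) with the multiplicative $\kappa_m\hinerr_m$ term, and carefully tracking the carried-over samples $\Dot{Z}_{m-1}$ together with the constants ($24$ versus $12$, and the $M$ inside the logarithm), is where the bookkeeping is most delicate.
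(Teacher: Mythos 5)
Your proposal is correct and follows essentially the same route as the paper: case analysis over the four exit points of WL-EVAL, bounding the Phase-1 cost by $\E[\one[x\in D_m]]\,2^{K_m^e+1}\ln(2M\log(n)/\delta)$, using the Step-1/Step-3 stopping guards to relate that quantity to $N_m$, and plugging in Lemma~\ref{lem: upper bound of k in phase 1} together with Lemma~\ref{lem: label complexity form} for the $K_m^e\ge K_m^s+1$ case. The "hard part" you flag (the first-time-exceeded budget at level $K_m^e$ versus the failed check at $K_m^e-1$) is exactly the content already packaged in Lemma~\ref{lem: upper bound of k in phase 1}, so invoking it as you do is precisely what the paper does, up to the same looseness in constants.
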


\begin{proof}
Suppose the \WL Evaluation algorithm stops at Step 1, then no samples are drawn in Phase 1 of block $m$. $\#m = N_m$. 

Suppose the \WL Evaluation algorithm stops at Step 2, by the previous condition\\
$N_m \leq \left(\frac{4\ln(2 \log n /\delta)}{\berr(h_m, \Tilde{Z}_{m-1})} \right) \E[\one[x \in D_m]]$, $\#m = 2N_m$

Finally we focus on the Step 3. By the stopping condition and Lemma~\ref{lem: label complexity form}, we always have 
\begin{align*}
    N_m \leq \E[ \one[x \in D_m]]2^{K_m^e+1}\ln(2M\log(n)/\delta)
\end{align*}
Now in the case that $K_m^e = K_m^s$, we directly have the upper bound
\begin{align*}
    \max\left\{\frac{24\ln(2 \log n /\delta)}{ \berr(h_m, \Tilde{Z}_{m-1})} - |\Dot{Z}_m|,0 \right\}\E[\one[x \in D_m]]
\end{align*}
Otherwise, by replace the upper bound of $K_m^e$ from Lemma~\ref{lem: upper bound of k in phase 1}, we finish the proof.
\end{proof}

Now before we going to the final proof, we need to upper bound the number of WL mode oracle in the next section.

\subsection{Analysis on number of hint mode blocks}

For each block $m$, there are two conditions that will lead to \textsc{No Hint} mode,
\begin{itemize}
    \item \textbf{Condition 1}: $N_m \leq \left(\frac{6}{\berr(h_m, \Tilde{Z}_{m-1})} \right) \E[\one[x \in D_m]]$. This suggests the biased estimated error for best hypothesis is too small that, even evaluating whether the hint performs better than this can cost too much samples in Phase 1.
    \item \textbf{Condition 2}: $\overline{\hinerr}_{m,K_m^s} \geq \err(h_m, \Tilde{Z}_{m-1})$. This suggested the estimated hint performance is worse than the biased estimated error for best hypothesis. therefore may deteriorate the active learning strategy.  
\end{itemize}

Firstly, we will show later in the proof of main theorem that the occurrence of Condition 1 is not important because it can be implicitly upper bounded in the end. So here we will only focus on condition 2.
\begin{lemma}
\label{lem: compare to best error}
    For any fixed block $m$, as long as $\E \one[x \in D_m \wedge y^\WL \neq y  ] \leq \frac{1}{8}\berr(h_m,\Tilde{Z}_{m-1})$, we have $\overline{\hinerr}_{K_m^s} \leq \berr(h_m, \Tilde{Z}_{m-1})$.
\end{lemma}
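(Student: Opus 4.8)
The plan is to prove Lemma~\ref{lem: compare to best error} by unpacking the definition of $\overline{\hinerr}_{m,K_m^s}$ and bounding it under the stated hypothesis. Recall from Algorithm~\ref{algo: hint-eval} that $\overline{\hinerr}_{m,k} = \min\{\wh{\hinerr}_{m,k} + 2^{-k}, \E[\one[x \in D_m]]\}$ (up to the factor-$2$ convention used in the estimator), and that $K_m^s = \lceil \log(6/\berr(h_m,\tilde{Z}_{m-1}))\rceil$. The key structural fact is that at iteration $k = K_m^s$ the algorithm has drawn enough samples that $2^{-k} \leq \berr(h_m,\tilde{Z}_{m-1})/6$, so the additive slack term $2^{-K_m^s}$ in the pessimistic estimator is itself controlled by a small constant fraction of $\berr(h_m,\tilde{Z}_{m-1})$.

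First I would invoke the event $\calE_\WL$ (which holds throughout this section) to upper bound the empirical quantity $\wh{\hinerr}_{m,K_m^s}$ in terms of the true expectation. From the definition of $\calE_\WL$, we have $\wh{\hinerr}_{m,k} \leq 2\,\E[\one[x \in D_m \wedge y^\WL \neq y]] + 2^{-k}$ for all $k \leq K_m^e$. Substituting the hypothesis $\E[\one[x \in D_m \wedge y^\WL \neq y]] \leq \tfrac{1}{8}\berr(h_m,\tilde{Z}_{m-1})$ gives $\wh{\hinerr}_{m,K_m^s} \leq \tfrac{1}{4}\berr(h_m,\tilde{Z}_{m-1}) + 2^{-K_m^s}$. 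Then the pessimistic estimator satisfies
\begin{align*}
    \overline{\hinerr}_{m,K_m^s}
    &\leq \wh{\hinerr}_{m,K_m^s} + 2^{-K_m^s}
    \leq \tfrac{1}{4}\berr(h_m,\tilde{Z}_{m-1}) + 2\cdot 2^{-K_m^s}.
\end{align*}
By the choice of $K_m^s$ we have $2^{-K_m^s} \leq \berr(h_m,\tilde{Z}_{m-1})/6$, so the right-hand side is at most $(\tfrac{1}{4} + \tfrac{1}{3})\berr(h_m,\tilde{Z}_{m-1}) < \berr(h_m,\tilde{Z}_{m-1})$, which is the claim. I would track the exact numerical constants in the estimator (the factor of $2$ in front of $\wh{\hinerr}$ and the clipping to $\E[\one[x\in D_m]]$) to make sure the constants close, but the argument is a direct chain of inequalities.

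The main obstacle I anticipate is bookkeeping around the precise definition of $\overline{\hinerr}_{m,k}$ — the excerpt uses $\overline{\hinerr}_{m,k} = \min\{\wh{\hinerr}_k + 2^{-k}, \E[x\in D_m]\}$ in one place and $\min\{2\wh{\hinerr}_m + 2^{-k},1\}$ in the corollary statement, so I would need to confirm which normalization is in force and carry the factor of $2$ consistently. The other subtlety is that $K_m^s$ is a ceiling, so $2^{-K_m^s}$ could be as small as half of $\berr(h_m,\tilde{Z}_{m-1})/6$; this only helps and never hurts the bound, so it introduces no difficulty beyond verifying the direction of the inequality. No concentration is needed beyond what $\calE_\WL$ already provides, so the proof is essentially a short deterministic calculation conditioned on $\calE_\WL$.
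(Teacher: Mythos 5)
Your proof is correct and follows essentially the same chain of inequalities as the paper: bound $\wh{\hinerr}_{m,K_m^s}$ via $\calE_\WL$, plug in the hypothesis $\E[\one[x\in D_m\wedge y^\WL\neq y]]\leq\tfrac{1}{8}\berr(h_m,\tilde Z_{m-1})$, and control the slack term using $2^{-K_m^s}\leq\berr(h_m,\tilde Z_{m-1})/6$. The factor-of-2 ambiguity you flagged resolves in the paper's favor of $\overline{\hinerr}_{m,k}=2\wh{\hinerr}_{m,k}+2^{-k}$, under which the same calculation yields $\tfrac{1}{2}\berr+3\cdot 2^{-K_m^s}\leq\berr$, so the constants still close.
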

\begin{proof}
When $\calE_\WL$ holds, we have
\begin{align*}
    \overline{\hinerr}_{m,K_m^s}
    &\leq 4\E[\one[ y^\WL \neq y  \wedge x \in D_m] ] + 3*2^{-K_m^s}\\
    &\leq \frac{1}{2}\berr(h_m,\Tilde{Z}_{m-1}) + \frac{1}{2} \berr(h_m,\Tilde{Z}_{m-1})\\
    &= \berr(h_m,\Tilde{Z}_{m-1})
\end{align*}
where the first term of last inequality comes from the assumption and second term of the last inequality comes from our choice of $K_m^s$ in the algorithm.
\end{proof}

\subsection{Main results in Section~\ref{sec: label complexity result each block (main)} and the analysis}

\begin{theorem}[label complexity of block $m$ (Restate)]
Let    
\begin{align*}
    &\calT_{1,m} = N_m \kappa_m \hinerr_m\\
    &\calT_{2,m} = \sqrt{\ln(2M\log(n)/\delta)\kappa_m N_m} \\
        &\quad + \sqrt{\frac{L_m}{\tau_{m-1}} \log(|\calH|n/\delta) L_m\left(\log \sum_{j=1}^{m} L_j\right)\E[\one[x \in D_m]]\phi_m}\\
        &\quad + \ln(2M\log(n)/\delta) \left(\log \sum_{j=1}^{m} L_j\right)\phi_m  \\
    &\calT_{3,m} =  \one\left[\hinerr_m  >\frac{1}{\left(\log \sum_{j=1}^{m} L_j\right)\phi_m} \right]N_m 
\end{align*}
where $\hinerr_m = \hinerr(D_m)$.
So the expected label complexity within block $m$ is upper bound by
\begin{align*}
    \order\left(\min\left\{ \calT_{1,m} + \calT_{2,m} + \calT_{3,m}, N_m \right\} \right)
\end{align*}
\end{theorem}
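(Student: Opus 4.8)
The plan is to bound the total expected number of strong-label queries in block $m$ by accounting separately for the queries issued during the WL-evaluation phase (Phase~1) and those issued during the training-data-collection phase (Phase~2), and then to assemble the two using the stopping rules of Algorithm~\ref{algo: hint-eval}. Throughout I would condition on the events $\calE_\WL$ and $\calE_\text{train}$, which hold simultaneously with probability at least $1-\delta$ after a union bound, so that all the concentration and WL-quality guarantees (Lemma~\ref{lem: conditional hint error est}, Lemma~\ref{lem: deviation bound}, Theorem~\ref{them: general}) are in force. The starting point is Lemma~\ref{lem: label complexity form}, which gives Phase~2 complexity $\leq \max\{4N_m \Dot{\hinerr}_m, L_m P_{m,\min}\E[\one[x\in D_m]]\}$ and Phase~1 complexity $\leq (\Dot{\tau}_m - \tau_{m-1})\E[\one[x\in D_m]]$; the latter is in turn controlled through the bound on $K_m^e$ from Lemma~\ref{lem: upper bound of k in phase 1}, since $\Dot{\tau}_m - \tau_{m-1} = \order(2^{K_m^e}\ln(2M\log n/\delta))$.

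Next I would carry out the case analysis dictated by where WL-EVAL terminates, following Lemma~\ref{lem: main label complexity}. When the routine stops at Step~1 or Step~2 (i.e.\ we fall back to \NWL~mode) the lemma already yields a bound of order $N_m$, and invoking Lemma~\ref{lem: compare to best error} shows this transition is triggered exactly when the conditional WL error exceeds the restricted best-hypothesis error, which is precisely the indicator appearing in $\calT_{3,m}$. When instead $\textsc{Use-WL}=\text{True}$ and $K_m^e \geq K_m^s+1$, substituting the $K_m^e$ bound of Lemma~\ref{lem: upper bound of k in phase 1} into the Phase~1 estimate produces, via the term $\sqrt{6N_m\kappa_m}/(\E[\one[x\in D_m]]\sqrt{\ln})$, the first evaluation term $\sqrt{\ln(2M\log n/\delta)\kappa_m N_m}$ of $\calT_{2,m}$; meanwhile the Phase~2 bound $4N_m\Dot{\hinerr}_m$ combined with $\Dot{\hinerr}_m \lesssim \kappa_m\hinerr_m + \kappa_m 2^{-k}/\E[\one[x\in D_m]]$ from Lemma~\ref{lem: conditional hint error est} yields $\calT_{1,m}$ together with lower-order corrections.

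The two remaining terms of $\calT_{2,m}$ come from the floor on the query probability. The quantity $L_m P_{m,\min}\E[\one[x\in D_m]]$ is rewritten using the definition of $P_{\min,m}$ in (OP) and the identity $\err(h_m,\tilde{Z}_{m-1}) = \Theta(\berr_m(h^*) + \Delta_{m-1})$ supplied by Theorem~\ref{them: general}; after substituting $\phi_m = \E[\one[x\in D_m]]/(\berr_m(h^*)+\log(\sum_{j}L_j)\epsilon_m)$ and using $L_{m}\leq \tau_{m-1}$, the $1/\sqrt{\tau_{m-1}/(n\epsilon_M)}$ part collapses to the second $\calT_{2,m}$ term $\sqrt{(L_m/\tau_{m-1})\log(|\calH|n/\delta)L_m(\log\sum_j L_j)\E[\one[x\in D_m]]\phi_m}$, while the residual from the $\log\tau_{m-1}$ part of $P_{\min,m}$ gives the third term $\ln(2M\log n/\delta)(\log\sum_j L_j)\phi_m$. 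Finally, the outer $\min\{\cdot, N_m\}$ follows because the Phase~2 bound $4N_m\Dot{\hinerr}_m$ is at most $4N_m$ (as $\Dot{\hinerr}_m \leq 1$) and the Phase~1 stopping rules of Algorithm~\ref{algo: hint-eval} are explicitly calibrated so that Phase~1 never spends more than $\order(N_m)$ either, so the total is simultaneously $\order(N_m)$.

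The main obstacle I anticipate is the third step: faithfully translating the $P_{\min,m}$ floor into the $\phi_m$-dependent form requires carefully relating $\err(h_m,\tilde{Z}_{m-1})$ to $\berr_m(h^*)$ across the block boundary, reconciling the indices $m-1$ versus $m$ and $\tau_{m-1}$ versus $\sum_{j\leq m}L_j$ through $L_m\leq\tau_{m-1}$ and the (non-obvious) comparison of the averaged errors $\berr_{m-1}(h^*)$ and $\berr_m(h^*)$, as well as controlling the heavy-tail $\epsilon_m/P_{\min,m}$ contribution. The case analysis itself is largely bookkeeping, but ensuring that the lower-order $2^{-k}$ and $\Delta_{m-1}$ corrections are genuinely dominated by the three stated terms, rather than spawning new ones, is where the delicate work lies.
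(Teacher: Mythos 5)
Your overall architecture is the right one and matches the paper's: condition on $\calE_\WL$ and $\calE_\text{train}$, split on where WL-EVAL exits, and feed Lemma~\ref{lem: upper bound of k in phase 1}, Lemma~\ref{lem: main label complexity}, Lemma~\ref{lem: compare to best error} and the $P_{\min,m}$ bound into the three $\calT_{i,m}$. The handling of the case $K_m^e\ge K_m^s+1$ (giving $\calT_{1,m}$ and the first two pieces of $\calT_{2,m}$) and of the Step-2 exit (giving $\calT_{3,m}$) is essentially the paper's argument. However, two of your attributions are wrong in a way that would break the derivation if you carried it out.

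First, the third term of $\calT_{2,m}$, namely $\ln(2M\log(n)/\delta)\,(\log\sum_{j\le m}L_j)\,\phi_m$, does \emph{not} come from ``the residual of the $\log\tau_{m-1}$ part of $P_{\min,m}$.'' Lemma~\ref{lem: upper bound of P_min} shows that the $\log$ term in the denominator of $P_{\min,m}$ is absorbed, so the entire $L_mP_{m,\min}\E[\one[x\in D_m]]$ contribution collapses into the square-root (second) term of $\calT_{2,m}$; trying to extract an additive $\phi_m\log(\cdot)$ term from $L_m\E[\one[x\in D_m]]/\log\tau_{m-1}$ fails (for constant $\berr_m(h^*)$ and $L_m\asymp n$ that quantity is of order $n/\log n$, far larger than $\log(n)\phi_m$). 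In the paper the third term instead comes from the Step-1 exit and from the $K_m^e=K_m^s$ exit inside Step 3 — a case your proposal omits entirely — where the query cost is at most $\E[\one[x\in D_m]]\cdot\order(\ln(2M\log(n)/\delta)/\berr(h_m,\tilde{Z}_{m-1}))$, converted into $\log(\sum_jL_j)\phi_m$ via Lemma~\ref{lem: upper bound of 1/berr} (which itself relies on Corollary~\ref{coro: error concentration} to relate $\berr(h_m,\tilde{Z}_{m-1})$ to $\berr_m(h^*)+\log(\sum_jL_j)\epsilon_m$).

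Second, you cannot charge the Step-1 exit to $\calT_{3,m}$. The Step-1 trigger is $N_m$ being small relative to the evaluation budget; it has nothing to do with $\hinerr_m$, and Lemma~\ref{lem: compare to best error} only governs the Step-2 exit. In the regime where Step 1 fires but the weak labeler is excellent, the indicator in $\calT_{3,m}$ is zero and $\calT_{1,m}$ is negligible, so under your accounting the $\order(N_m)$ queries of that block are left uncovered by $\calT_{1,m}+\calT_{2,m}+\calT_{3,m}$ — exactly the hole the paper's third term of $\calT_{2,m}$ is designed to fill (the Step-1 condition guarantees $N_m\lesssim\ln(2M\log(n)/\delta)\log(\sum_jL_j)\phi_m$ there). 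Repairing the proof requires restoring the paper's four-way split and rerouting these two terms accordingly.
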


\begin{proof}

Therefore, we can decompose the expected sample complexity within block $m$ as
\begin{align*}
    \E_{\tau_m}
    &\left[\sum_{t = \tau_{m-1}+1}^{\tau_m} \E_t\Big[\one[y_t \text{ is queried}]\Big]\right]\\
    & =  \underbrace{\one[K_m^e \geq K_m^s +1 ] \#m}_{\calT_1}
        + \underbrace{\one[K_m^e = K_m^s  \wedge \text{subalgo stop as step 3} ] \#m}_{\calT_2}\\
        & \quad + \underbrace{\one[\text{subalgo stop as step 1}]\#m}_{\calT_3}
        + \underbrace{\one[\one[\text{subalgo stop as step 2} | ]\#m}_{\calT_4}
\end{align*}
Now we are ready to separately bound these three terms by using Lemma~\ref{lem: main label complexity}.
\\\\
For $\calT_{1,m}$, we have,
\begin{align*}
    {\calT_1}
    &\leq  \one[K_m^e \geq K_m^s +1 ]  \\
        &\quad * \max\left\{\min\left\{32\kappa_m  N_m \hinerr_m +\sqrt{24N_m \kappa_m\ln(2M\log(n)/\delta)},N_m \right\}, L_m P_{m,\min}\E[\one[x \in D_m]] \right\}\\
    &\leq N_m\min\left\{32\kappa_m \hinerr_m,1 \right\}
        + 5 \sqrt{\ln(2M\log(n)/\delta)\kappa_m N_m}
        +  P_{m,\min }L_m\E[\one[x \in D_m]
\end{align*}
By the definition of $P_{m,\min }$, we can further upper bound the third term as 
\begin{align*}
      P_{m,\min }L_m \E[\one[x \in D_m]
      & \leq  \frac{L_m}{\sum_{j=1}^{m-1} L_j} \sum_{j=1}^{m-1} L_j\frac{c_{3}}{\sqrt{\frac{\sum_{j=1}^{m-1} L_j \berr\left(f_{m}, \tilde{Z}_{m-1}\right) }{n\epsilon_{M}}}} \E[\one[x \in D_m]\\
      & = c_3\frac{L_m}{\sum_{j=1}^{m-1} L_j} \sqrt{\frac{n\epsilon_M \sum_{j=1}^{m-1} L_j}{\berr\left(f_{m}, \tilde{Z}_{m-1}\right)}} \E[\one[x \in D_m]\\
      & \approx  c_3\frac{L_m}{\sum_{j=1}^{m-1} L_j} \sqrt{\frac{ \tau_M \epsilon_M \sum_{j=1}^{m-1} L_j}{\berr\left(f_{m}, \tilde{Z}_{m-1}\right)}} \E[\one[x \in D_m]\\
      & \approx \frac{L_m}{\sum_{j=1}^{m-1} L_j} \sqrt{\frac{\log(|\calH|n/\delta)\sum_{j=1}^{m-1} L_j}{\berr\left(f_{m}, \tilde{Z}_{m-1}\right)}} \E[\one[x \in D_m]\\
      & \approx \frac{L_m}{\sum_{j=1}^{m-1} L_j} \sqrt{\log(|\calH|n/\delta) \left(\sum_{j=1}^{m-1} L_j \right)\E[\one[x \in D_m]\frac{\E[\one[x \in D_m]}{\berr\left(f_{m}, \tilde{Z}_{m-1}\right)}} \\
      & \lessapprox  \sqrt{\frac{L_m}{\sum_{j=1}^{m-1} L_j}} \sqrt{\log(|\calH|n/\delta)\log \left(\sum_{j=1}^{m} L_j\right) L_m \E[\one[x \in D_m]\phi_m}
\end{align*}
where the first inequality comes from Lemma~\ref{lem: upper bound of P_min} and the lat inequality comes from Lemma~\ref{lem: upper bound of 1/berr}.
\\
For $\calT_{2,m}$ and $\calT_{3,m}$,  we can combine them as
\begin{align*}
    \calT_2+ \calT_3
    & \leq  \min\left\{\E[x \in D_m]\left[\frac{12\ln(2M\log(n)/\delta)}{\berr(h_m,\Tilde{Z}_{m-1})} \right],N_m \right\} \\
    & =  \min\left\{12\ln(2M\log(n)/\delta)\log(\sum_{j=1}^{m} L_j) \phi_m,N_m \right\}.
\end{align*}
Again the last inequality comes form Lemma~\ref{lem: upper bound of 1/berr}.
\\
Finally for $\calT_{4,m}$, by using Lemma~\ref{lem: compare to best error}, we have
\begin{align*}
    \calT_4
    & \leq \one\left[\overline{\hinerr}_{K_m^e} > \berr(h_m, \Tilde{Z}_{m-1})\right]N_m \\
    & \leq  \one\left[\E\left[ \one[h(x) \neq y \wedge x \in D_m]\right] > \berr(h_m, \Tilde{Z}_{m-1})\right]N_m \\
    & \leq \one\left[\hinerr_m \E\left[ \one[x \in D_m]\right] > \err(h_m, \Tilde{Z}_{m-1}) + \Delta_m\right]N_m \\
    & = \one\left[\hinerr_m  >\frac{1}{\log(\sum_{j=1}^{m} L_j) \phi_m} \right]N_m 
\end{align*}
\end{proof}

\begin{theorem}[Upper bound of $\phi_m$]
For any block $m$, with probability at least $1-\delta$, we have always upper bound $ \phi_m \leq \min\{\theta^*, \frac{1}{\berr_m(h^*) + \log \left(\sum_{j=1}^{m} L_j\right) \epsilon_m}\}$. 
\end{theorem}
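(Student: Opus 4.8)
I would establish the two bounds comprising $\phi_m \le \min\{\theta^*,\,1/(\berr_m(h^*)+\log(\sum_{j=1}^m L_j)\epsilon_m)\}$ separately. The inequality $\phi_m \le 1/(\berr_m(h^*)+\log(\sum_{j=1}^m L_j)\epsilon_m)$ is immediate from the definition $\phi_m = \E[\one[x\in D_m]]/(\berr_m(h^*)+\log(\sum_{j=1}^m L_j)\epsilon_m)$, since the numerator $\E[\one[x\in D_m]] = \mathbb{P}(x\in D_m)$ is a probability and hence at most $1$. All the work goes into the bound $\phi_m \le \theta^*$, i.e.\ into showing $\E[\one[x\in D_m]] \le \theta^*\,(\berr_m(h^*)+\log(\sum_{j=1}^m L_j)\epsilon_m)$ up to absorbed constants.

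First I would invoke the generalization guarantees (Theorems~\ref{them: main} and~\ref{them: general}), which hold with probability at least $1-\delta$, to control the active set $A_m$ defining $D_m=\Dis(A_m)$. On that event $h^*\in A_m$, and every $h\in A_m$ satisfies $\reg(h,h^*)\le\order(\Delta_{m-1}^*)$. Since $h$ and $h^*$ agree on every $x\notin\Dis(A_m)=D_m$, the pseudo-distance $\rho(h,h^*):=\mathbb{P}_x(h(x)\neq h^*(x))$ equals $\E[\one(h(x)\neq h^*(x)\wedge x\in D_m)]$, so $A_m\subseteq B(h^*,r_m)$ with $r_m:=\sup_{h\in A_m}\rho(h,h^*)$. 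The definition of the disagreement coefficient then yields directly
\[
\E[\one[x\in D_m]] = \mathbb{P}\big(x\in\Dis(A_m)\big) \le \mathbb{P}\big(x\in\Dis(B(h^*,r_m))\big)\le \theta^*\, r_m .
\]

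It remains to bound $r_m$ by the denominator of $\phi_m$. For $h\in A_m$, the triangle inequality $\one(h\neq h^*)\le\one(h\neq y)+\one(h^*\neq y)$, applied on the event $\{x\in D_m\}$, gives $\rho(h,h^*)\le 2\,\E[\one(h^*(x)\neq y\wedge x\in D_m)] + \reg^{\mathrm{loc}}(h,h^*)$, where $\reg^{\mathrm{loc}}$ denotes the regret restricted to $D_m$. Because $h,h^*\in A_m$ agree outside $D_m$, this localized regret coincides with the full regret $\reg(h,h^*)\le\order(\Delta_{m-1}^*)$. Using the monotonicity $D_m\subseteq D_j$ for $j\le m$ (the disagreement regions shrink across epochs as $\Delta^*_j$ decreases), the first term is at most $\berr_m(h^*)$: indeed $\berr_m(h^*)$ is a convex combination of the quantities $\E[\one(h^*\neq y\wedge x\in D_j)]$, each of which dominates $\E[\one(h^*\neq y\wedge x\in D_m)]$. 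Finally an AM--GM step on $\Delta_{m-1}^* = c_1\sqrt{\epsilon_{m-1}\berr_{m-1}(h^*)}+c_2\epsilon_{m-1}\log\tau_{m-1}$ turns the $\sqrt{\epsilon\,\berr}$ term into $\order(\berr_m(h^*)+\log(\sum_{j=1}^m L_j)\epsilon_m)$, whence $r_m\le\order(\berr_m(h^*)+\log(\sum_{j=1}^m L_j)\epsilon_m)$ and the claim follows.

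The main obstacle is the clean control of the radius $r_m$, and it rests on two structural facts. First, that $h$ and $h^*$ agree outside $D_m$, which lets me identify the localized regret with the full regret and keeps $\rho(h,h^*)$ tied to $\berr_m(h^*)$ rather than to the full error $\err(h^*)$; the latter would break the bound, since $\err(h^*)$ may dwarf $\berr_m(h^*)$. Second, the monotonicity $D_m\subseteq D_{m-1}\subseteq\cdots$ of the disagreement regions, which is what converts the $h^*$-term into $\berr_m(h^*)$. The remaining bookkeeping --- the AM--GM split, the epoch index shift between $m-1$ and $m$ in $\epsilon$ and $\berr$, and the absorbed constant factor (e.g.\ the factor $2$ above) --- is routine, and is the reason the bound is naturally read up to constants, consistent with the later use of $\phi_m$ inside $\order(\cdot)$.
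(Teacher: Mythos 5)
Your proposal is correct and follows essentially the same route as the paper: the second bound from $\E[\one[x\in D_m]]\le 1$, and the first bound via $\E[\one[x\in D_m]]\le\theta^*\bigl(\order(\Delta_{m-1}^*)+2\,\E[\one(h^*(x)\neq y\wedge x\in D_m)]\bigr)$ followed by the AM--GM split of $\Delta_{m-1}^*$ and the $m-1\to m$ index shift. The only difference is that the paper imports that first inequality wholesale from eqn.~(62) of \citet{https://doi.org/10.48550/arxiv.1506.08669}, whereas you re-derive it from the disagreement-coefficient definition, the radius bound $r_m\le 2\,\E[\one(h^*\neq y\wedge x\in D_m)]+\sup_{h\in A_m}\reg(h,h^*)$, and Theorem~\ref{them: main} --- both arguments rely equally on the nesting $D_m\subseteq D_j$ to dominate the $h^*$-term by $\berr_m(h^*)$.
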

\begin{proof}
When $\calE_\text{train}$ holds, we can upper bound the $\E\left[ \one[x \in D_m]\right] $ as
\begin{align*}
    \E\left[ \one[x \in D_m]\right] 
    &\leq  \theta^* \left(16\gamma \Delta_{m-1}^* + 2\err_m(h^*) \right)\\
    & \leq  \theta^* \left(16\gamma \Delta_{m-1}^* + 2 \overline{\err}_m(h^*)\right) \\
    & =  \theta^* \left(16\gamma \left(c_1 \sqrt{\epsilon_{m-1} \overline{\err}_{m-1} (h^*))} + c_2 \epsilon_m \log \left(\sum_{j=1}^{m} L_j\right) \right)+ 2 \overline{\err}_m(h^*)\right)\\
    & \leq  \theta^* \left(16\gamma(2c_1+ c_2\log \left(\sum_{j=1}^{m} L_j\right)) \epsilon_m +  (8\gamma c_1 + 2)  \overline{\err}_m(h^*)\right)
\end{align*}
where the first inequality comes from eqn.(62) in the original paper. 
Therefore, by definition of $\phi_m$ we directly have
\begin{align*}
         \frac{\E\left[ \one[x \in D_m]\right]}{\berr_m(h^*) + \log \left(\sum_{j=1}^{m} L_j\right) \epsilon_m} 
         \leq \order(\theta^*)
\end{align*}
Alternatively, we can upper bound it as 
\begin{align*}
         \frac{\E\left[ \one[x \in D_m]\right]}{\berr_m(h^*) + \log \left(\sum_{j=1}^{m} L_j\right) \epsilon_m} 
         \leq \frac{1}{\berr_m(h^*) + \log \left(\sum_{j=1}^{m} L_j\right) \epsilon_m}
\end{align*}
\end{proof}

\subsection{Main results for Section~\ref{sec: label complexity result total (main)} and its analysis}

\begin{theorem}[Total label complexity under benign setting]
Suppose we end up dividing $n$ incoming labels into $M$ block, then with probability at least $1-\delta$, we have the total label complexity upper bounded by 
\begin{align*}
    \min\left\{\calT_1^\text{total} + \calT_2^\text{total} + \calT_3^\text{total}, N_\text{total}\right\},
\end{align*}
where,
\begin{align*}
    & \calT_1^\text{total} = \sum_{m=1}^M N_m \max\{\kappa_m \hinerr_m,1\} 
    \\& \calT_2^\text{total} =  \sqrt{\ln(2M\log(n)/\delta) \left(\sum_{m=1}^M \max\{\kappa_m,\frac{N_m}{\ln(2M\log(n)/\delta)}\}\right) N_\text{total} } \\
        & \quad + \sqrt{\log(|\calH|n/\delta)\left(\log \sum_{j=1}^{m} L_j\right) \sum_{m=1}^M L_m\E[\one[ x \in D_m]] \sum_{m=1}^M \frac{L_m}{\sum_{j=1}^{m-1}L_j}\phi_m }\\
        &\quad+ \sum_{m=1}^M \ln(2M\log(n)/\delta) \left(\log \sum_{j=1}^{m} L_j\right)\phi_m 
    \\& \calT_3^\text{total} = \sum_{m=1}^M  \one\left[\hinerr_m  >\frac{1}{\left(\log \sum_{j=1}^{m} L_j\right)\phi_m} \right]N_m 
\end{align*}
\end{theorem}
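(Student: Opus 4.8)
The plan is to obtain the total bound by summing the per-block guarantee of Theorem~\ref{thm:complex.block.m} over $m=1,\dots,M$ on the single high-probability event (of probability $1-\delta$) on which that theorem holds for every block simultaneously, so that no extra union bound over $m$ is incurred. The only genuine work is (i) peeling the block-wise cap $\min\{\cdot,N_m\}$ into per-summand caps, and (ii) applying Cauchy--Schwarz to convert the two square-root summands of $\calT_{2,m}$ into the single square roots appearing in $\calT_2^{\text{total}}$.

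First I would record the elementary fact that $\min\{A+B,N\}\le\min\{A,N\}+\min\{B,N\}$, which extends by induction to any finite sum; applying it to $\min\{\calT_{1,m}+\calT_{2,m}+\calT_{3,m},N_m\}$ lets me treat each of the five summands (the three pieces of $\calT_{2,m}$ plus $\calT_{1,m}$ and $\calT_{3,m}$) separately, each capped at $N_m$. Summing over $m$ and using $\sum_{m=1}^M N_m=N_\text{total}$ immediately yields the $N_\text{total}$ branch of the final $\min$. For the easy pieces: $\min\{\calT_{1,m},N_m\}=N_m\min\{\kappa(D_m)\hinerr_m,1\}\le N_m\max\{\kappa(D_m)\hinerr_m,1\}$, whose sum is exactly $\calT_1^{\text{total}}$; the indicator term $\calT_{3,m}$ is already bounded by $N_m$ and sums verbatim to $\calT_3^{\text{total}}$; and the linear piece $c(\log\sum_{j\le m}L_j)\phi_m$ of $\calT_{2,m}$, with $c:=\ln(2M\log n/\delta)$, sums directly to its counterpart in $\calT_2^{\text{total}}$ after bounding $\log\sum_{j\le m}L_j\le\log\sum_{j\le M}L_j$.

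The two remaining pieces need Cauchy--Schwarz. For the first piece I would exploit the cap: since $\min\{\sqrt{c\,\kappa(D_m)N_m},N_m\}=\sqrt{c\min\{\kappa(D_m),N_m/c\}\,N_m}\le\sqrt{c\max\{\kappa(D_m),N_m/c\}\,N_m}$, summing and applying $\sum_m\sqrt{a_m N_m}\le\sqrt{(\sum_m a_m)(\sum_m N_m)}$ gives $\sum_m\sqrt{c\max\{\kappa(D_m),N_m/c\}N_m}\le\sqrt{c\big(\sum_m\max\{\kappa(D_m),N_m/c\}\big)N_\text{total}}$, the first summand of $\calT_2^{\text{total}}$. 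For the second piece I would factor the radicand of $\sqrt{\tfrac{L_m}{\sum_{j<m}L_j}\log(|\calH|n/\delta)L_m(\log\sum_{j\le m}L_j)\E[\one[x\in D_m]]\phi_m}$ as $\big(L_m\E[\one[x\in D_m]]\big)\cdot\big(\tfrac{L_m}{\sum_{j<m}L_j}\phi_m\big)$ times the $\log$ factors, pull the $\log$ factors out (again replacing $\log\sum_{j\le m}L_j$ by its value at $M$), and apply $\sum_m\sqrt{a_mb_m}\le\sqrt{(\sum_m a_m)(\sum_m b_m)}$ with $a_m=L_m\E[\one[x\in D_m]]$ and $b_m=\tfrac{L_m}{\sum_{j<m}L_j}\phi_m$, reproducing the second summand of $\calT_2^{\text{total}}$.

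Collecting the five bounded sums gives $\calT_1^{\text{total}}+\calT_2^{\text{total}}+\calT_3^{\text{total}}$, and combining with the $N_\text{total}$ branch established above yields the claimed $\min$. The main obstacle is bookkeeping rather than any deep estimate: one must verify that distributing the block cap over the summands and then reintroducing the $\max\{\kappa(D_m),N_m/c\}$ structure is tight enough not to destroy the benign-case savings, and that the $m$-dependent $\log\sum_{j\le m}L_j$ factors can be uniformly upgraded to their $m=M$ values without breaking the intended $\otil$ scaling. The Cauchy--Schwarz steps themselves are routine once the radicands are factored into the correct $a_m b_m$ form.
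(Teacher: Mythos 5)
Your summation-plus-Cauchy--Schwarz skeleton is essentially the paper's own argument: the paper also sums the per-block bound and applies $\sum_m\sqrt{a_mb_m}\le\sqrt{(\sum_m a_m)(\sum_m b_m)}$ with exactly the factorizations you describe. Your ``peel the block cap into per-summand caps'' device is a slightly different (and arguably better motivated) way to produce the $\max\{\kappa_m, N_m/c\}$ and $\max\{\kappa_m\hinerr_m,1\}$ terms; the paper simply writes these maxima down as trivial relaxations of $\kappa_m$ and $\kappa_m\hinerr_m$. Either route works, and the $N_\text{total}$ branch follows in both cases from $\sum_m\min\{\cdot,N_m\}\le\sum_m N_m$.

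There is, however, one genuine gap. Theorem~\ref{thm:complex.block.m} bounds the \emph{conditional expectation} of the per-block query count, $\E_{\tau_m}\bigl[\sum_{t}\E_t[\one[y_t\text{ is queried}]]\bigr]$, whereas the statement you are proving is a high-probability bound on the \emph{realized} count $\sum_{t=1}^{\tau_M}\one[y_t\text{ is queried}]$. Summing the per-block bounds only controls the sum of conditional expectations; you still need a martingale concentration step to pass to the realized sum of Bernoulli indicators. The paper does this by invoking Lemma~3 of \cite{kakade2008generalization}, which gives, with probability at least $1-\delta$,
\begin{align*}
\sum_{t=1}^n \one[y_t\text{ is queried}] \leq 2\sum_{t=1}^n \E_t\bigl[\one[y_t\text{ is queried}]\bigr] + 4\log(4\log(n)/\delta),
\end{align*}
after which bounding the expected count suffices (up to the constant factor and additive logarithmic term absorbed into the bound). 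Your proposal treats the per-block guarantee as if it already applied to the realized query count, so this step is missing; without it the ``with probability at least $1-\delta$'' claim about the actual label complexity does not follow. The fix is routine but must be stated, and it also costs a further split of the confidence budget between this event and the event on which the per-block theorem holds.
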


\begin{proof}
First of all, by applying Lemma 3 of \cite{kakade2008generalization}, we get that with probability at least $1-\delta$
\begin{align*}
    \forall n \geq 3, 
    \sum_{t=1}^n \one[ y_t \text{is queried}]
    \leq 2\sum_{t=1}^n \E_t\left[\one[ y_t \text{is queried}]\right] + 4\log(4\log(n)/\delta).
\end{align*}
Therefore, to get the actual label complexity w.h.p, it is enough to upper bound the expected one. 

By summing over $\calT_{2,m}$ over all blocks and applying Cauchy-swartz, we have
\begin{align*}
    &\sum_{m=1}^M \sqrt{\ln(2M\log(n)/\delta)\max\{\kappa_m,\frac{N_m}{\ln(2M\log(n)/\delta)}\} N_m}\\
        &\quad + \sum_{m=1}^M \sqrt{\frac{L_m}{\sum_{j=1}^{m-1}L_j}\log(|\calH|n/\delta)\left(\log \sum_{j=1}^{m} L_j\right)L_m\E[\one[ x \in D_m]]\phi_m}\\
        &\quad  + \sum_{m=1}^M \ln(2M\log(n)/\delta) \left(\log \sum_{j=1}^{m} L_j\right)\phi_m \\
    & \leq \sqrt{\ln(2M\log(n)/\delta) \left(\sum_{m=1}^M \max\{\kappa_m,\frac{N_m}{\ln(2M\log(n)/\delta)}\}\right) \sum_{m=1}^M N_m } \\
       & \quad + \sqrt{\log(|\calH|n/\delta)\left(\log \sum_{j=1}^{m} L_j\right) \sum_{L_m} L_m\E[\one[ x \in D_m]] \sum_{m=1}^M  \frac{L_m}{\sum_{j=1}^{m-1}L_j}\phi_m }\\
        &\quad + \sum_{m=1}^M \ln(2M\log(n)/\delta) \left(\log \sum_{j=1}^{m} L_j\right)\phi_m \\
\end{align*}
\end{proof}

\begin{theorem}[Worst-case total label complexity under benign setting (A detailed version))]
Choose $L_m = 2^m$. Suppose there exists an generalized WL error upper bound $\widetilde{\hinerr} \in [0,1]$ that
\begin{align*}
    \sum_{m=1}^M \calT_{1,m} + \calT_{3,m}
    \leq \widetilde{\hinerr} \sum_{m=1}^M N_m ,
\end{align*}
Then we have with probability at least $1-\delta$, 
%
\begin{align*}
    &\sum_{t=1}^n \one[y_t \text{ is queried}] \\
    &\leq  \order\left(( \tilde{\theta} \sqrt{ \berr_M(h^*)n\log(|\calH|n/\delta) +\sqrt{ \berr_M(h^*)n \log(|\calH|/\delta)^3}}
            + \sqrt{\tilde{\theta} \log(|\calH|/\delta)^{\frac{3}{2}}}
            + \log(|\calH|n/\delta)\log(n)^2 \theta^*\right)\\
        & \quad + \order\left( \theta^* \widetilde{\hinerr}~\berr_M(h^*)n + \theta^* \widetilde{\hinerr} \sqrt{ \berr_M(h^*)n \log(|\calH|/\delta)}\right)
\end{align*}
where $\tilde{\theta} = \theta^*\log(n) + \sqrt{\theta^* \left(\sum_{m=1}^M \max\{\kappa(D_m),\frac{N_m}{\ln(2M\log(n)/\delta)}\}\right)\frac{\ln(\log(n)^2/\delta)}{ \log(|\calH|n/\delta)}}$
\end{theorem}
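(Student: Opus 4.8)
The plan is to accumulate the single-block guarantee of Theorem~\ref{thm:complex.block.m} over the $M=\order(\log n)$ blocks created by the doubling schedule $L_m=2^m$, and then rewrite every block-local quantity in terms of the global parameters $\berr_M(h^*)$, $n$, $\theta^*$ and the aggregate $\widetilde{\hinerr}$. The two groups in the claimed bound correspond to the two halves of $\calT_1^{\text{total}}+\calT_2^{\text{total}}+\calT_3^{\text{total}}$: the big $\order(\cdot)$ group comes from the weak-label--evaluation cost $\calT_2^{\text{total}}$, while the $\theta^*\widetilde{\hinerr}$ group comes from $\calT_1^{\text{total}}+\calT_3^{\text{total}}$. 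I would start from the already-proved "Total label complexity under benign setting" theorem, which gives $\sum_t\one[y_t\text{ queried}]\le\calT_1^{\text{total}}+\calT_2^{\text{total}}+\calT_3^{\text{total}}$ with high probability; the passage from the expected to the almost-sure count is handled by Lemma~3 of \cite{kakade2008generalization} exactly as in that theorem, so I would invoke it once and work with expectations afterward.

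First I would control $\calT_1^{\text{total}}+\calT_3^{\text{total}}$. By the hypothesis $\sum_m(\calT_{1,m}+\calT_{3,m})\le\widetilde{\hinerr}\,N_{\text{total}}$, it suffices to bound $N_{\text{total}}=\sum_m N_m$. Using the relaxation of the remark ($P_m\equiv 1$, so $N_m\le L_m\,\E[\one[x\in D_m]]$) together with the disagreement-coefficient estimate $\E[\one[x\in D_m]]\le\theta^*\order(\gamma\Delta_{m-1}^*+\berr_m(h^*))$ from the proof of the $\phi_m$ theorem, I get $N_m\le\theta^*L_m\,\order(\Delta_{m-1}^*+\berr_m(h^*))$. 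Substituting $L_m=2^m=\tau_{m-1}$ and $\Delta_{m-1}^*=c_1\sqrt{\epsilon_{m-1}\berr_{m-1}(h^*)}+c_2\epsilon_{m-1}\log\tau_{m-1}$, and using that the cumulative error $\tau_m\berr_m(h^*)=\sum_{j\le m}L_j\,\E[\one(h^*(x)\neq y\wedge x\in D_j)]$ is nondecreasing in $m$ and at most $n\berr_M(h^*)$, the sum telescopes to $N_{\text{total}}\le\theta^*\order\big(\berr_M(h^*)n+\sqrt{\berr_M(h^*)n\log(|\calH|n/\delta)}+\mathrm{polylog}\big)$. Multiplying by $\widetilde{\hinerr}$ reproduces the $\theta^*\widetilde{\hinerr}\berr_M(h^*)n$ and $\theta^*\widetilde{\hinerr}\sqrt{\berr_M(h^*)n\log(|\calH|/\delta)}$ terms.

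Next I would expand $\calT_2^{\text{total}}$ into its three summands and simplify each using $L_m=2^m$ (so $L_m/\tau_{m-1}\le 2$ and there are only $M=\order(\log n)$ blocks) and the uniform relaxation $\phi_m\le\theta^*$. The last summand $\sum_m\ln(\cdot)(\log\tau_m)\phi_m$ collapses to $\order(\theta^*\log(|\calH|n/\delta)\log(n)^2)$, matching the final additive term. The middle summand becomes $\sqrt{\log(|\calH|n/\delta)(\log n)\,N_{\text{total}}\,\theta^*\log n}$, which after inserting the $N_{\text{total}}$ bound factors as $\theta^*\log(n)\sqrt{\berr_M(h^*)n\log(|\calH|n/\delta)}$, i.e.\ the $\theta^*\log(n)$ part of $\tilde{\theta}$ times the main square root. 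The first summand $\sqrt{\ln(\cdot)\,K\,N_{\text{total}}}$ with $K=\sum_m\max\{\kappa_m,N_m/\ln(\cdot)\}$ produces, after the same substitution and $\sqrt{a+b}\le\sqrt a+\sqrt b$, the term $\sqrt{\theta^* K\ln(\cdot)\berr_M(h^*)n}$, which is precisely the $\sqrt{\theta^* K\,\ln(\cdot)/\log(|\calH|n/\delta)}$ part of $\tilde{\theta}$ times the same main root; the residual $\log(|\calH|/\delta)$ cross-terms are then reassembled, via AM--GM in the form $\sqrt{ab}\le\tfrac12(a c^{-1}+bc)$, into the nested-root expressions $\sqrt{\berr_M(h^*)n\log(|\calH|/\delta)^3}$ and $\sqrt{\tilde{\theta}\log(|\calH|/\delta)^{3/2}}$ appearing in the statement. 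Collecting these identifications recovers the definition of $\tilde{\theta}$ and the first group of the bound.

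The main obstacle is this last regrouping step: keeping the polylogarithmic factors straight while folding the three $\calT_2^{\text{total}}$ summands into the single coefficient $\tilde{\theta}$. The two delicate points are (i) the $N_{\text{total}}$ estimate, where $\sum_m L_m\sqrt{\epsilon_{m-1}\berr_{m-1}(h^*)}$ must be bounded without losing more than the stated polylog even though $\tau_m\berr_m(h^*)$ is only monotone rather than geometric in $m$; and (ii) the algebraic massaging that turns the $\log(|\calH|/\delta)$ cross-terms into the nested square roots of the statement. Everything else is routine bookkeeping over the geometric block schedule.
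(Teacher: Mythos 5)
Your proposal is correct and follows essentially the same route as the paper: start from the total-complexity theorem, use the hypothesis to reduce $\calT_1^{\text{total}}+\calT_3^{\text{total}}$ to $\widetilde{\hinerr}\,N_{\text{total}}$, bound $N_{\text{total}}$ by $\otil(\theta^*\berr_M(h^*)n+\theta^*\sqrt{\berr_M(h^*)n\log(|\calH|/\delta)}+\log(|\calH|/\delta))$, and factor the three summands of $\calT_2^{\text{total}}$ through $\sqrt{N_{\text{total}}}$ to recover $\tilde{\theta}$. The only cosmetic difference is that you re-derive the $N_{\text{total}}$ estimate from the disagreement-coefficient bound and telescoping, whereas the paper simply cites the original AC analysis for it.
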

\begin{proof}
    According to \cite{https://doi.org/10.48550/arxiv.1506.08669}, we have
    \begin{align*}
        N_\text{total}
        \leq \sum_{L_m} \E[\one[ x \in D_m]]
        \leq \otil\left( \theta^* \berr_M(h^*)n + \theta^* \sqrt{ \berr_M(h^*)n \log(|\calH|/\delta)} + \log(|\calH|/\delta)\right )
    \end{align*}
    Therefore, by the definition of $\tilde{\theta}$, we have
    \begin{align*}
        \calT_2^\text{total}
        & \lessapprox \sqrt{\ln(\log(n)^2/\delta) \left(\sum_{m=1}^M\max\{\kappa_m,\frac{N_m}{\ln(2M\log(n)/\delta)}\}\right) N_\text{total} } 
        + \sqrt{\log(|\calH|n/\delta)\log(n) \sum_{L_m} L_m\E[\one[ x \in D_m]] \sum_{m=1}^M \phi_m }\\
        &\quad+ \sum_{m=1}^M \ln(\log(n)^2/\delta) \log(n)\phi_m \\
        & \lessapprox  \sqrt{ \theta^* \berr_M(h^*)n + \theta^* \sqrt{ \berr_M(h^*)n \log(|\calH|/\delta)} + \log(|\calH|/\delta)}\\
            & \quad * \left(\sqrt{\ln(\log(n)^2/\delta) \left(\sum_{m=1}^M\max\{\kappa_m,\frac{N_m}{\ln(2M\log(n)/\delta)}\}\right) }
                + \sqrt{\log(|\calH|n/\delta)\log(n)^2 \theta^*} \right) \\
            & \quad + \log(|\calH|n/\delta)\log(n)^2 \theta^* \\
        & \leq  \tilde{\theta} \sqrt{ \berr_M(h^*)n +\sqrt{ \berr_M(h^*)n \log(|\calH|/\delta)}}
            + \sqrt{\tilde{\theta} \log(|\calH|/\delta)}
            + \log(|\calH|n/\delta)\log(n)^2 \theta^*
    \end{align*}
\end{proof}

\subsection{Auxiliary lemma}

\begin{lemma}
\label{lem: upper bound of 1/berr}
When $\calE_\WL$ and $\calE_\text{train}$ holds, for any block m , we have
\begin{align*}
    \frac{1}{\berr(h_m,\Tilde{Z}_{m-1})} \leq \frac{5 + 8\log \left(\sum_{j=1}^{m} L_j\right)+\frac{\eta}{2}}{\berr_m(h^*) + \log \left(\sum_{j=1}^{m} L_j\right) \epsilon_{m}}
\end{align*}
which is equivalent to 
\begin{align*}
    \frac{\E[x \in D_m]}{\berr(h_m,\Tilde{Z}_{m-1})}
    \leq \frac{5 + 8\log \left(\sum_{j=1}^{m} L_j\right)+\frac{\eta}{2}}{\berr_m(h^*) + \log \left(\sum_{j=1}^{m} L_j\right) \epsilon_{m}}\E[x \in D_m]
    \lessapprox \log(\tau_m) \phi_m
\end{align*}
\end{lemma}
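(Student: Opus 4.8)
The plan is to read the claim off the generalization guarantees of Theorem~\ref{them: general}, which already control $\berr$ in terms of the empirical error and $\Delta$, and then to absorb an index shift ($m \leftrightarrow m-1$) and the change of the $\epsilon_m\log\tau_m$ scale into constants. Throughout I write $\tau_m=\sum_{j=1}^m L_j$ as in the rest of the analysis.

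First I would recall that the WL-EVAL subroutine (Algo.~\ref{algo: hint-eval}) sets $\berr(h_m,\tilde{Z}_{m-1})=\err(h_m,\tilde{Z}_{m-1})+\Delta_{m-1}$, so that $\berr(h_m,\tilde{Z}_{m-1})\ge \Delta_{m-1}\ge c_2\epsilon_{m-1}\log\tau_{m-1}\ge \epsilon_{m-1}\log\tau_{m-1}$, using $c_2\ge 1$. Next I apply the fourth inequality of Theorem~\ref{them: general} at epoch $m-1$,
\[
\berr_{m-1}(h^*)\le \left(5+8\log\tau_{m-1}+\tfrac{\eta}{2}\right)\Delta_{m-1}+3\,\err(h_m,\tilde{Z}_{m-1}),
\]
add $\epsilon_{m-1}\log\tau_{m-1}\le \Delta_{m-1}$ to the left-hand side, and bound the right-hand side by $\bigl(6+8\log\tau_{m-1}+\tfrac{\eta}{2}\bigr)\bigl(\Delta_{m-1}+\err(h_m,\tilde{Z}_{m-1})\bigr)$ (legitimate since $3\le 6+8\log\tau_{m-1}+\tfrac{\eta}{2}$). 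This yields
\[
\berr(h_m,\tilde{Z}_{m-1})\ge \frac{\berr_{m-1}(h^*)+\epsilon_{m-1}\log\tau_{m-1}}{6+8\log\tau_{m-1}+\tfrac{\eta}{2}}.
\]

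It then remains to replace the block $m-1$ quantities on the right by their block $m$ counterparts at the cost of a constant. For this I would use two monotonicity facts: (i) $\berr_m(h^*)\le \berr_{m-1}(h^*)$, which follows because the per-block terms $g_j:=\E[\one(h^*(x)\ne y \wedge x\in D_j)]$ are non-increasing (the disagreement regions shrink, $D_m\subseteq D_{m-1}$, as guaranteed by $h^*\in A_j$ for all $j$ and the definition of $A_{m+1}$), so appending the smallest term $g_m$ to the $L_j$-weighted average cannot increase it; and (ii) $\epsilon_m\log\tau_m\le C\,\epsilon_{m-1}\log\tau_{m-1}$ for an absolute constant $C$, since $L_m\le \tau_{m-1}$ forces $\tau_{m-1}\le \tau_m\le 2\tau_{m-1}$, over which doubling the map $\tau\mapsto \epsilon(\tau)\log\tau$ changes by at most a constant factor (using $\tau_{m-1}\ge L_1=3$). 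Combining these with $6+8\log\tau_{m-1}+\tfrac{\eta}{2}\le 6+8\log\tau_m+\tfrac{\eta}{2}$ gives
\[
\frac{1}{\berr(h_m,\tilde{Z}_{m-1})}\le \frac{\order(\log\tau_m+\eta)}{\berr_m(h^*)+\log\tau_m\,\epsilon_m},
\]
matching the claimed bound up to the stated constant. The equivalent form is then immediate: multiply both sides by $\E[\one[x\in D_m]]$ and recognize $5+8\log\tau_m+\tfrac{\eta}{2}=\order(\log\tau_m)$ and $\phi_m=\E[\one[x\in D_m]]/(\berr_m(h^*)+\log\tau_m\epsilon_m)$.

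The main obstacle is fact (i), the monotonicity $\berr_m(h^*)\le \berr_{m-1}(h^*)$: it rests on the nestedness $D_m\subseteq D_{m-1}$ of the estimated disagreement regions, which is intuitive for disagreement-based schemes but is not literally asserted earlier, so I would either (a) invoke the nestedness of the active sets $A_{m+1}\subseteq A_m$ from the AC analysis we build on, or (b) bypass (i) entirely and work from the block-$m$ guarantee $\berr_m(h^*)\le(5+8\log\tau_m+\tfrac{\eta}{2})\Delta_m+3\,\err(h_{m+1},\tilde{Z}_m)$, using the empirical monotonicity $\tau_{m-1}\err(h_m,\tilde{Z}_{m-1})\le \tau_m\err(h_{m+1},\tilde{Z}_m)$ from the proof of Lemma~\ref{lem: helper 1} together with $\Delta_m\le 2\Delta_{m-1}$ to show $\berr(h_{m+1},\tilde{Z}_m)\le 2\,\berr(h_m,\tilde{Z}_{m-1})$; this alternative avoids reasoning about the geometry of $D_m$ and loses only another constant factor.
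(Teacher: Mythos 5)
Your main derivation is correct and is essentially the paper's own proof: the paper lower-bounds $\err(h_m,\tilde{Z}_{m-1})+\Delta_{m-1}$ by $\frac{1}{2(5+8\log\tau_m+\eta/2)}\bigl(3\err(h_m,\tilde{Z}_{m-1})+(5+8\log\tau_m+\frac{\eta}{2})\Delta_{m-1}+\log\tau_m\,\epsilon_{m-1}\bigr)$ using $\Delta_{m-1}\ge c_2\,\epsilon_{m-1}\log\tau_{m-1}$, and then invokes the last inequality of Theorem~\ref{them: general} — exactly your first two paragraphs, up to a factor of $2$ in the constant. You are in fact more careful than the paper on one point: the theorem at epoch $m-1$ controls $\berr_{m-1}(h^*)$ and $\epsilon_{m-1}$, whereas the lemma's conclusion (and the definition of $\phi_m$) needs $\berr_m(h^*)$ and $\epsilon_m$; the paper performs this index shift silently, and your facts (i) and (ii) (with $\tau_{m-1}\le\tau_m\le 2\tau_{m-1}$, hence $\epsilon_m\le\epsilon_{m-1}$) are precisely what is needed to license it.

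The one genuine problem is your fallback route (b). The inequality extracted from the proof of Lemma~\ref{lem: helper 1} is $\tau_{m-1}\err(h_m,\tilde{Z}_{m-1})\le\tau_m\err(h_{m+1},\tilde{Z}_m)$, which yields $\err(h_m,\tilde{Z}_{m-1})\le 2\,\err(h_{m+1},\tilde{Z}_m)$ and hence (with $\Delta_{m-1}\le 2\Delta_m$) $\berr(h_m,\tilde{Z}_{m-1})\le 2\,\berr(h_{m+1},\tilde{Z}_m)$ — the \emph{opposite} of the inequality $\berr(h_{m+1},\tilde{Z}_m)\le 2\,\berr(h_m,\tilde{Z}_{m-1})$ that you would need to transfer the block-$m$ guarantee back down to $\berr(h_m,\tilde{Z}_{m-1})$. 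An upper bound on $\err(h_{m+1},\tilde{Z}_m)$ in terms of $\err(h_m,\tilde{Z}_{m-1})$ is not available from empirical monotonicity (the new block can reveal fresh errors, e.g.\ when $\err(h_m,\tilde{Z}_{m-1})=0$), so route (b) does not go through as stated and does not actually let you bypass fact (i). Stick with route (a), i.e.\ the monotonicity $\berr_m(h^*)\le\berr_{m-1}(h^*)$ coming from the shrinking disagreement regions; this is also what the paper's own proof implicitly relies on, even though, as you correctly observe, the nestedness of the $D_m$'s is not formally established there either.
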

\begin{proof}
\begin{align*}
    & \err(h_m,\Tilde{Z}_{m-1})+ \Delta_{m-1} \\
    &\geq \frac{3}{2(5 + 8\log \left(\sum_{j=1}^{m} L_j\right)+\frac{\eta}{2})} \err(h_m,\Tilde{Z}_{m-1})+ \Delta_{m-1}\\
    & =  \frac{1}{2(5 + 8\log \left(\sum_{j=1}^{m} L_j\right)+\frac{\eta}{2})} \left( 3\err(h_m,\Tilde{Z}_{m-1})+ 2(5 + 8\log \left(\sum_{j=1}^{m} L_j\right)+\frac{\eta}{2})\Delta_{m-1}\right)\\
    & \geq \frac{1}{2(5 + 8\log \left(\sum_{j=1}^{m} L_j\right)+\frac{\eta}{2})} \left( 3\err(h_m,\Tilde{Z}_{m-1})+ (5 + 8\log \left(\sum_{j=1}^{m} L_j\right)+\frac{\eta}{2})\Delta_{m-1} + \log \left(\sum_{j=1}^{m} L_j\right) \epsilon_{m-1}\right)\\
    & \geq \frac{\berr_m(h^*) + \log \left(\sum_{j=1}^{m} L_j\right) \epsilon_{m}}{2(5 + 8\log \left(\sum_{j=1}^{m} L_j\right)+\frac{\eta}{2})}
\end{align*}
where the last inequality comes from Theorem~\ref{them: general}. 
\end{proof}

\begin{lemma}
\label{lem: upper bound of P_min}
    For any block m , we have
    \begin{align*}
        P_{m,\min}
        \lessapprox \frac{c_3}{\sqrt{\frac{\left(\sum_{j=1}^{m-1} L_j\right) \berr(h_m,\Tilde{Z}_{m-1})}{n \epsilon_M}}}
    \end{align*}
\end{lemma}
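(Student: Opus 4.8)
The plan is to argue directly from the definition of $P_{\min,m}$ in (OP), abbreviating $T:=\sum_{j=1}^{m-1}L_j$. Since $P_{\min,m}$ is defined as a minimum, it is at most its first argument:
\[
P_{\min,m}\le\frac{c_3}{\sqrt{\frac{T\,\err(h_m,\tilde{Z}_{m-1})}{n\epsilon_M}}+\log T}.
\]
The target bound differs only in that it carries $\berr(h_m,\tilde{Z}_{m-1})$ in place of $\err(h_m,\tilde{Z}_{m-1})$ and omits the additive $\log T$. It therefore suffices to show the denominator above is $\gtrapprox\sqrt{T\,\berr(h_m,\tilde{Z}_{m-1})/(n\epsilon_M)}$; inverting this estimate gives the lemma.

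The first step is to invoke the identity $\berr(h_m,\tilde{Z}_{m-1})=\err(h_m,\tilde{Z}_{m-1})+\Delta_{m-1}$ set inside \textsc{WL-EVAL}, and apply subadditivity $\sqrt{a+b}\le\sqrt a+\sqrt b$ to get $\sqrt{T\berr/(n\epsilon_M)}\le\sqrt{T\err/(n\epsilon_M)}+\sqrt{T\Delta_{m-1}/(n\epsilon_M)}$. Since the first summand is exactly the surviving part of the denominator, the whole problem reduces to proving $\sqrt{T\Delta_{m-1}/(n\epsilon_M)}\lessapprox\sqrt{T\err/(n\epsilon_M)}+\log T$.

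To this end I would expand $\Delta_{m-1}=c_1\sqrt{\epsilon_{m-1}\err}+c_2\epsilon_{m-1}\log T$ and split the square root over the two terms. The elementary fact driving the bound is $T\epsilon_{m-1}\le n\epsilon_M$: indeed $T\epsilon_{m-1}=32(\log(|\calH|/\delta)+\log T)$, while $n\epsilon_M=\tfrac{n}{\tau_M}\cdot32(\log(|\calH|/\delta)+\log\tau_M)$ with $n\ge\tau_M=\sum_{j=1}^M L_j$ and $T\le\tau_M$, so the ratio $T\epsilon_{m-1}/(n\epsilon_M)\le1$. The $c_2$-term is then immediate: $\sqrt{c_2 T\epsilon_{m-1}\log T/(n\epsilon_M)}\le\sqrt{c_2\log T}\lessapprox\log T$ (using $\log T\ge1$). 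The $c_1$-term is the genuine obstacle, because after the outer square root it depends on $\err^{1/4}$, namely $\sqrt{c_1}\,(T/(n\epsilon_M))^{1/2}\epsilon_{m-1}^{1/4}\err^{1/4}$. I would dispatch it by Young's inequality $ab\le\tfrac12a^2+\tfrac12b^2$ with $a=(T\err/(n\epsilon_M))^{1/4}$ and $b=\sqrt{c_1}\,(T\epsilon_{m-1}/(n\epsilon_M))^{1/4}$: this produces $\tfrac12\sqrt{T\err/(n\epsilon_M)}$, half of the target term, plus $\tfrac{c_1}{2}\sqrt{T\epsilon_{m-1}/(n\epsilon_M)}\le\tfrac{c_1}{2}$, a constant absorbed into $\log T$. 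Combining the $c_1$- and $c_2$-estimates yields the reduction, and chaining back through the previous two paragraphs completes the proof. The only bookkeeping to watch is that the constants $c_1,c_2,c_3$ and the slack $n/\tau_M\ge1$ are swallowed by $\lessapprox$, and that $\log T\ge1$ so that $\sqrt{\log T}\le\log T$.
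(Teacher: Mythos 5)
Your proof is correct and follows essentially the same route as the paper: both arguments reduce the claim to showing $\sqrt{T\,\berr(h_m,\tilde{Z}_{m-1})/(n\epsilon_M)}\lessapprox\sqrt{T\,\err(h_m,\tilde{Z}_{m-1})/(n\epsilon_M)}+\log T$ via $\berr=\err+\Delta_{m-1}$, with the key ingredient being $T\epsilon_{m-1}\lessapprox n\epsilon_M$. The only difference is mechanical: the paper disposes of the cross-term $c_1\sqrt{\epsilon_{m-1}\err}$ by a case split on $\err\gtrless\epsilon_{m-1}\log\tau_m$, while you use subadditivity of the square root plus Young's inequality, which is an equivalent trick.
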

\begin{proof}
First, it is easy to see that, when $\err(f_{m},\tilde{Z}_{m-1}) \geq \log \left(\sum_{j=1}^{m} L_j\right)\epsilon_{m-1}$, we have
\begin{align*}
     \frac{\left(\sum_{j=1}^{m-1} L_j\right) \left(\err(h_m,\Tilde{Z}_{m-1})+ \Delta_{m-1} \right)}{n \epsilon_M}
     \lessapprox \frac{\left(\sum_{j=1}^{m-1} L_j\right) \left(\err(h_m,\Tilde{Z}_{m-1})\right)}{n \epsilon_M}
\end{align*}
On the other hand, when $\err(f_{m},\tilde{Z}_{m-1}) < \left(\log \sum_{j=1}^{m} L_j\right)\epsilon_{m-1}$, we have
\begin{align*}
    \frac{\left(\sum_{j=1}^{m-1} L_j\right) \left(\err(h_m,\Tilde{Z}_{m-1})+ \Delta_{m-1} \right)}{n \epsilon_M}
    \lessapprox \frac{\left(\sum_{j=1}^{m-1} L_j\right) \log \left(\sum_{j=1}^{m} L_j\right) \epsilon_{m-1}}{n \epsilon_M}
     \lessapprox \log \left(\sum_{j=1}^{m} L_j\right).
\end{align*}
Combine these two inequalities we have
\begin{align*}
    \sqrt{\frac{\left(\sum_{j=1}^{m-1} L_j\right) \berr(h_m,\Tilde{Z}_{m-1})}{n \epsilon_M}}
    &: = \sqrt{\frac{\left(\sum_{j=1}^{m-1} L_j\right) \left(\err(h_m,\Tilde{Z}_{m-1})+ \Delta_{m-1} \right)}{n \epsilon_M}}\\
    &\lessapprox \sqrt{\frac{\left(\sum_{j=1}^{m-1} L_j\right) \left(\err(h_m,\Tilde{Z}_{m-1} \right)}{n \epsilon_M}}
    + \log \left(\sum_{j=1}^{m} L_j\right).
\end{align*}
Therefore, by definition of $P_{m, \min}$, we get the target bound.
\end{proof}
\section{Appendix: Practical WL-AC and Experiments}

\subsection{A summary of algorithm modification}
The main difference between the practical and the theoretical version of WL-AC, as stated in Section~\ref{sec: algo (main)}, is the choice of plugable base AL strategy used to calculate the disagreement region as well as its corresponding query probability planning. Other than this, we explain more subtle modifications here, which should not effect the high level picture but help the practical implementation.

\paragraph{A more deterministic block schedule}
In the original templates, only the length of Phase 2 inside each block is fixed as $L_m$ and the length of Phase 1 is a random variable based on the algorithm and data distribution. Such setting is easy for analysis but inconvenient for implementation. Therefore, here we fix total length of each block, including Phase 1 and Phase 2, as $L_m$. (So in Algo~\ref{algo: main-inefficient (supp)} we use ``scheduled training data collection length" and here in Algo~\ref{algo: main (implement))}, we use ``scheduled block length".) Therefore, instead of comparing the newly queried number of samples in Phase 1 with the expected query number in the \textit{fixed} Phase 2, we will compare that with the \textit{remaining} expected query number in Phase 2 while keep the total block length the same. In the following Algo~\ref{algo: hint-eval (implement)}, we use subscript $``+"$ to denote the newly added samples in Phase 1 (WL-EVAL) and use subscript $``-"$ to refer to the remaining samples in Phase 2.

\paragraph{Estimate the sample distribution}
The real $\E[\one[x \in D_m]]$ is not accessible, so we instead assume each data is uniformly distributed and estimate the distribution by the past observed data as shown in Line~\ref{line: emprical distribution}.

\paragraph{A relaxed protocol}
Instead of adhering to the strict streaming setting where each time only the current $x_t$ is observable, we relax our experiment to the "batched`` streaming setting where all the unlabeled context for next block is available at the beginning of the block as shown in Line~\ref{line: batch observation}. Therefore, instead of calculating the expect number of remaining samples within $D_m$ , we can get the \textit{exact} remaining samples in the block $m$, denoted as $\hat{D}_m$. We conjectured that, calculating the expected one using empirical distribution based the past data should yield similar result without this batched assumption. But here we want to rule out the error caused in distribution estimation and only focus on verifying our main strategy. 

\paragraph{Validation data and pseudo loss for neural net}
Several common modifications are required when running neural net. First, instead of using the true loss $\one[y \neq y']$, we use the pseudo loss denoted as $\ell$. Here we choose cross-entropy for multi-class classification. Furthermore, instead of using training loss $\err(h. \tilde{Z}_m)$, we keep track of validation loss, which is usually a better estimates of the true classification error. Note \cite{https://doi.org/10.48550/arxiv.1506.08669} also uses the validation loss when implementing the original AC. The required size of validation set is much smaller than the training set, so its addition to total sample complexity is neglectable.

In addition, this validation set has also been used for early stopping in model training/updating. In the algorithm, we use 
\begin{align*}
    h_{m+1},\err(h_{m+1},Z_\text{val}) = \calO_\text{train}(Z_\text{val},\tilde{Z}_m \cup \Dot{Z}_m)
\end{align*}
to denote this process. That is, the oracle $\calO_\text{train}$ takes in the validation dataset, the training dataset and output the updated model as well as the validation loss.

\paragraph{Less greedy query in Step 3}
In Step 3 of Algo~\ref{algo: hint-eval}, we double the evaluation length in each epoch. In practice, we can increase the length in a milder way to further refine the sample complexity. (Theoretical they have the similar order.) Here we increase a constant number $L_+$ in each iteration as shown in Line~\ref{line: increase L+}.  Note that each iteration only requires an inference step in neural net, so the computational cost is neglectable.

\paragraph{Other parameter choices}
It is well known that the model complexity of neural net is hard to estimate. Therefore, the $|\calH|$-dependent parameters in our algorithms can only be heuristically chosen, indicated by $\approx$ in the algorithm below . We also heuristically choose $\kappa_m =1$. A more comprehensive experiments might be conduct in the future. But the preliminary experiments we conduct here have already shown some positive results. 

\subsection{Algorithm (practical version)}

\label{ sec: experiment-practice (supp)}
\begin{algorithm}[H]

\caption{WL-AC (Practical version)}
\label{algo: main (implement))}
\begin{algorithmic}[1]
\STATE \textbf{Input: } Scheduled block length $L_1,L_2,\ldots$ satisfying $L_{m+1} \leq \sum_{j=1}^m L_j$. Strong labeler $\calO$ and weak labeler $\calW$. Candidate hypothesis set $\calH$ (a neural net model).  \\
a known small validation dataset $Z_\text{val}$ that $|Z_\text{val}| \ll n$ used for hyper-parameter tuning in model training\\
a based AL strategy $\calO_\text{baseAL}$ \\
a training oracle fine-tuned by $Z_\text{val}$ denoted as $\calO_{train}: \calX \times \calY \times Z_\text{val} \to \calH$, and its corresponding pseudo loss $\ell$ 
\STATE \textbf{initialize:} epoch $m=0, \tilde{Z}_{0}:=\emptyset, \Dot{Z}_0: = \emptyset$
\FOR{ $m = 1,2,\ldots, M$}
    \STATE \textcolor{orange}{Phase 1: WL evaluation and query probability assignment}
    \STATE Call \textbf{\textsc{WL-Eval}} in Algo.~\ref{algo: hint-eval (implement)}, and obtain the current \WL~evaluation dataset $\Dot{Z}_m$, the flag $\textsc{Use-WL}$ which decides whether we should transit to NO-WL mode or not and the pessimistically estimated conditional WL accuracy $\Dot{\hinerr}_m$ across $D_m$. 
    \STATE Compute the WL leveraged query probability $\Dot{P}_m = \textbf{\textsc{OP}}(\Dot{\hinerr}_m)$
    \STATE Set the stopping time of Phase 1 as $\Dot{\tau}_m$ and the corresponding length as $\Dot{L}_m$
    \STATE \textcolor{orange}{Phase 2: Train data collection based on planed query probability}
    \STATE Set $S = \emptyset$
    \FOR{$t = \Dot{\tau}_m +1, \ldots,\tau_m + L_m$}
        \IF{$x_t \in D_m$}
            \STATE Draw $Q_t \sim \text{Bernoulli}(\Dot{P}_m)$
            \STATE Update the set of examples:
            \begin{align*}
                S:=
                \begin{cases}
                    S \cup\left\{\left(x_t, y_t, y^\WL_t,1 / \Dot{P}_{m}\left(x_t \right)\right)\right\}, & Q_t=1 \\
                    S \cup\left\{x_t, 1,1,0\right\}, & \text { otherwise. }
                 \end{cases}
            \end{align*}
        \ELSE
            \STATE $S:=  S \cup\left\{\left(x_t, h_m\left(x_t\right),h_m\left(x_t\right), 1\right)\right\} .$ 
        \ENDIF
    \ENDFOR
    \STATE Set $\tilde{Z}_m = \tilde{Z}_{m-1} \cup S$
    \STATE \textcolor{orange}{Phase 3: Model and disagreement region updates using collected train data }
    \STATE Calculate the estimated error for all $h \in \calH$ using shifted double robust estimator and pseudo loss $\ell$
        \begin{align*}
            \err(h,\tilde{Z}_m) =
            \begin{cases}
            \sum_{(x,y,y^\WL,w) \in \tilde{Z}_m} \left(\ell(h(x),y)] - \ell(h(x),y^\WL)\right) w + \ell(h(x),y^\WL) \quad &\textsc{USE-WL}\\
            \sum_{(x,y,y^\WL,w) \in \tilde{Z}_m} \ell(h(x),y)w & \NWL
            \end{cases}
        \end{align*}
    \STATE  Retrain the model using all collected data, get updated model and corresponding validation error $h_{m+1},\err(h_{m+1},Z_\text{val}) = \calO_\text{train}(Z_\text{val},\tilde{Z}_m \cup \Dot{Z}_m)$
    \STATE  Observe the next $L$ context $\{x_t\}_{t=\tau_m}^{\tau_m + L}$ \label{line: batch observation}
    \STATE  Set empirical disagreement region $\hat{D}_{m+1} = \calO_\text{baseAL}(\{x_t\}_{t=\tau_{m-1}+1}^{\tau_{m-1} + L},h_{m+1}, \text{other params})$ 
    \STATE Calculate empirical disagreement probability $\hat{\E}[D_m] = \frac{\calO_\text{baseAL}(\{x_t\}_{t=0}^{\tau_m},h_{m+1}, \text{other params})}{L_m}$ \label{line: emprical distribution}
    \STATE Set the beginning time of the next block as $\tau_{m} = t = \Dot{\tau}_m + L_m$
\ENDFOR
\RETURN $h_M$
\end{algorithmic}
\end{algorithm}

\begin{algorithm}[H]

\small
\caption{\WL~evaluation for block $m$ (WL-EVAL, practical version) }
\label{algo: hint-eval (implement)}
\begin{algorithmic}[1]
\STATE \textbf{Input: } Current disagreement region $\hat{D}_m$, unlabeled samples in this block $\{x_t\}_{t = \tau_m}^{\tau_m + L_m}$ the current optimistic biased estimation of the best error $\err(h_m, Z_\text{val})$
Incremental length in Step3 denoted as $L_+$
\STATE \textcolor{orange}{Step1: Check if the current estimated best error is already good enough}
\STATE Initialize $N_{m,-,\text{label}} = |\hat{D}_m|$, which is the deterministic query number of block $m$ used for training without leveraging WL
\STATE Set $N_{m,+,\text{unlabel}} \approx \left[\frac{1}{\err(h_m, Z_\text{val})} - \Dot{Z}_{m-1}\right]$, which is the deterministic UNLABELLED number we need to newly draw in order to do WL evaluation. 
\STATE Set $N_{m,+,\text{label}} = |\calO_\text{baseAL}(h_{m}, \{x_t\}_{t = \tau_m}^{\tau_m + N_{m,+,\text{unlabel}}})|$, which is the deterministic LABELLED number we need to newly draw in order to do WL evaluation
\IF{$N_{m,-, \text{label}} \lessapprox N_{m,+,\text{label}}$  }
    \RETURN $\textsc{Use-WL} = \text{False}, \Dot{\hinerr}_m = 1, \Dot{Z}_m = \Dot{Z}_{m-1}$
\ENDIF  
\STATE \textcolor{orange}{Step2: Check if the WL performance is worse than $\err(h_m, Z_\text{val})$}
\STATE Set $\kappa_m = 1$, $k=1$
\STATE Draw $N_{m,+,\text{unlabel}}$ number of new unlabeled samples denoted as $\Dot{S}$ and query labels for those inside $D_m$. For each sample at $t$, add them as 
    \begin{align*}
        \Dot{S}:=
        \begin{cases}
            \Dot{S} \cup\left\{x_t, y_t, y^\WL_t\right\}, & Q_t=1 \\
            \Dot{S} \cup\left\{x_t, 1,1\right\}, & \text { otherwise. }
         \end{cases}
    \end{align*}
    and update $\ddot{Z}_{m,k} = \Dot{Z}_{m-1} \cup \Dot{S}$
\STATE Calculate the empirical mean $\wh{\hinerr}_{m,k} = \frac{1}{|\ddot{Z}_{m,k}|} \sum_{(x,y,y^\WL) \in \ddot{Z}_{m,k}} \one[y^\WL \neq y \wedge x \in D_m]$ 
\STATE Calculate the optimistic estimation $\overline{\hinerr}_{m,k} =  \min\{(\wh{\hinerr}_k + \frac{1}{|\ddot{Z}_{m,k}|}), \hat{\E}[x \in D_m] \}$
\IF{$\overline{\hinerr}_k \gtrapprox \err(h_m, Z_\text{val})$}
    \RETURN $\textsc{Use-WL} = \text{False}, \Dot{\hinerr}_m = 1,  \Dot{Z}_m = \ddot{Z}_{m,k}$
\ENDIF
\STATE \textcolor{orange}{Step3: Get a more precise estimation on WL performance}
\WHILE{ $N_{m,-, \text{label}}\gtrapprox N_{m,+,\text{label}}$}
    \STATE $k \leftarrow k+1$
    \STATE Draw $L_+$ new unlabeled samples \label{line: increase L+}
   and query labels for those inside $D_m$. For each sample at $t$, add them as 
    \begin{align*}
        \Dot{S}:=
        \begin{cases}
            \Dot{S} \cup\left\{x_t, y_t, y^\WL_t\right\}, & Q_t=1 \\
            \Dot{S} \cup\left\{x_t, 1,1\right\}, & \text { otherwise. }
         \end{cases}
    \end{align*}
and update $\ddot{Z}_{m,k} = \ddot{Z}_{m,k-1} \cup \Dot{S}$
\STATE Update the total query at the current block $N_{m,+,\text{label}}$
    \STATE Calculate the the optimistic estimation $\overline{\hinerr}_k$ as before
    \STATE Update the remaining number of samples in this block that are planed to be queried, denoted as $N_{m,-, \text{label}}$
\ENDWHILE
\RETURN $\textsc{Use-WL} = \text{True}, \Dot{\hinerr}_m = \min\left\{\frac{ \kappa_m \overline{\hinerr}_{m,k}}{\E[\one[x \in D_m]]},1\right\},  \Dot{Z}_m = \ddot{Z}_{m,k}$ 
\end{algorithmic}
\end{algorithm}

\begin{mdframed}
\label{algo: op (implement)}
\small
{\centering \textbf{Optimization Problem} (OP,practical version) to compute $P_m$ \\}
\textbf{Input: } Estimated hints performance upper bound $\Dot{\hinerr}_m$
\begin{align*}
    P_m = \max\{ \Dot{\hinerr}_m, P_{m,\min}\}
\end{align*}
\end{mdframed}

\subsection{Performance regarding to the number of observed unlabeled samples}
\label{ sec: experiment unlabel results (supp)}

\begin{figure}[t]
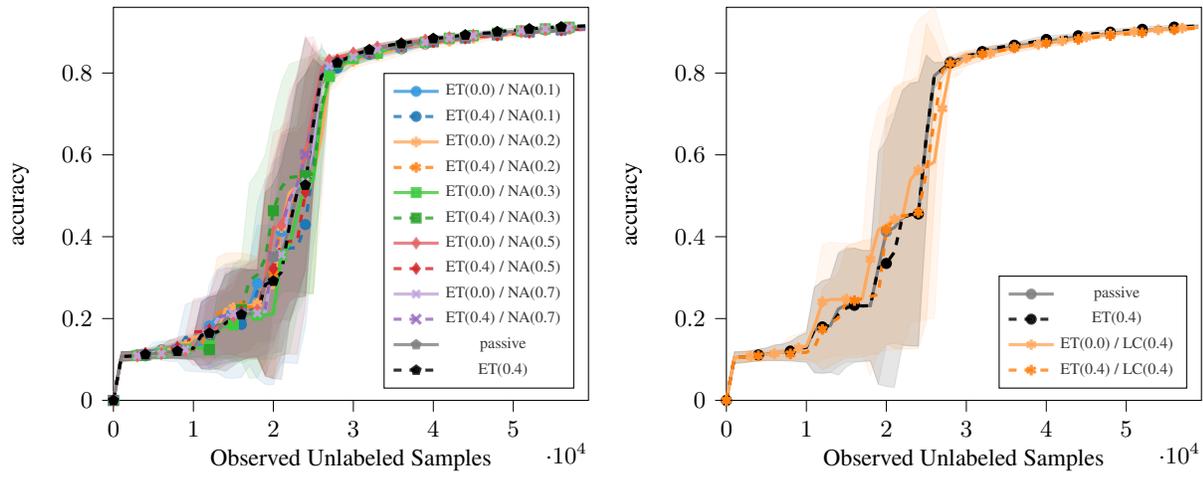

    \begin{subfigure}[b]{0.47\textwidth}
        \centering
        \input{plots/noisy_new_unlabeled.tex}
        \label{fig: noisy_query}
    \end{subfigure}
    \begin{subfigure}[b]{0.47\textwidth}
        \centering
        \input{plots/informative_new_unlabeled.tex}
        \label{fig: informative_queryl}
    \end{subfigure}
    \caption{\ding{182} - Performance comparison under uniform noisy (left) and informative classifier (right) weak labeler, regarding to the number of unlabeled samples.}
    \label{fig:synt_unlabeled}
\end{figure}

\begin{figure}[t]
    \centering
    \input{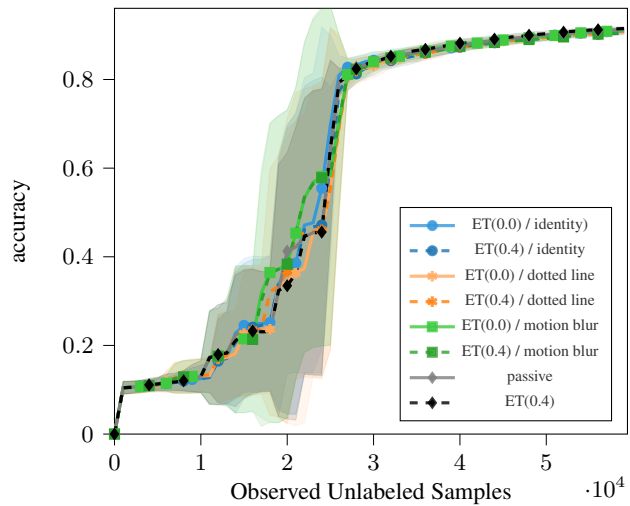}
    \caption{\ding{183} - Performance comparison under pre-trained classifier weak labeler, regarding to the number of unlabeled samples.}
    \label{fig: pretrain_queryl_unlabeled}
\end{figure}


\end{document}